\numberwithin{equation}{section} 
\newcommand{\heavy}{\theta} 
\newcommand*{\Scale}[2][4]{\scalebox{#1}{$#2$}}%
\def\nU{n_{\Scale[0.4]{\mathcal{U}}}}
\def\nX{n_{\Scale[0.4]{\mathcal{X}}}}
\definecolor{citecolor}{rgb}{0, 0.3, 0.6}
\newcommand{\mycite}[1]{\cite{#1}} 
\newcommand{\mycitep}[1]{\citep{#1}}
\title{Random feature approximation for general spectral methods}
\author{Mike Nguyen\footnote{Corresponding Author} \\
Technical University  of Braunschweig \\
\texttt{mike.nguyen@tu-braunschweig.de} 
\and
Nicole M\"ucke\\
Technical University  of Braunschweig  \\ 
\texttt{nicole.muecke@tu-braunschweig.de}
}
\date{\today}
\begin{document}

\maketitle

\begin{abstract}

Random feature approximation is arguably one of the most widely used techniques for kernel methods in large-scale learning algorithms. In this work, we analyze the generalization properties of random feature methods, extending previous results for Tikhonov regularization to a broad class of spectral regularization techniques. This includes not only explicit methods but also implicit schemes such as gradient descent and accelerated algorithms like the Heavy-Ball and Nesterov method. Through this framework, we enable a theoretical analysis of neural networks and neural operators through the lens of the Neural Tangent Kernel (NTK) approach trained via gradient descent.  For our estimators we obtain optimal learning rates over regularity classes (even for classes that are not included in the reproducing kernel Hilbert space), which are defined through appropriate source conditions. This improves or completes previous results obtained in related settings for specific kernel algorithms.

\end{abstract}

{\bf Keywords:} $\bullet$ Random Feature Approximation $\bullet$ Neural Operators $\bullet$ Neural Tangent Kernel 


\section{Introduction}

The rapid technological progress has led to accumulation of vast amounts of high-dimensional data in recent years. Consequently, to analyse such amounts of data it is no longer sufficient to create algorithms that solely aim for the best possible predictive accuracy. Instead, there is a pressing need to design algorithms that can efficiently process large datasets while minimizing computational overhead. In light of these challenges, two fundamental algorithmic tools, fast gradient methods, and sketching techniques, have emerged. Iterative gradient methods such as acceleration methods \mycitep{pagliana2019implicit, 10.1214/18-EJS1395, 7330562, JMLR:v18:16-410} or stochastic gradient methods \mycitep{SGDfeatures, NIPS2012_905056c1, doi:10.1137/140961791} leading to favorable convergence rates while reducing computational complexity during learning. On the other hand sketching techniques enable the reduction of data dimension, thereby decreasing memory requirements through random projections. The allure of combining both methodologies has garnered significant attention from researchers and practitioners alike. Especially for Kernel based algorithms various sketching tools have gained a lot of attention in recent years. For non-parametric statistical approaches kernel methods are in many applications still state of the art and provide an elegant and effective framework to develop theoretical optimal learning bounds \mycitep{Muecke2017op.rates, Lin_2020, spectral.rates, zhang2024optimalitymisspecifiedspectralalgorithms}. However those benefits come with a computational cost making these methods unfeasible when dealing with large datasets. In fact, traditional kernelized learning algorithms require storing the kernel Gram matrix $\mathbf{K}\in\mathbb{R}^{n\times n}$, where $\mathbf{K}_{i,j}=K(x_i,x_j)$, $\,K(.,.)$ denotes the kernel function and $x_i,x_j$ the data points. This results in a memory cost of at least $O(n^2)$ and a time cost of up to $O(n^3)$, where $n$ denotes the data set size \mycitep{kernellearning}. Most popular sketching tools to overcome these issues are Nyström approximations and random feature approximation (RFA). 
The Nyström approximation can allow a significant speed-up of the computations. This speed-up is achieved by using, instead of the kernel matrix $\mathbf{K}$, an approximation matrix $\hat{\mathbf{K}}$ of lower rank $q$. An advantage of the method is that it is not necessary to compute or store the whole kernel matrix, but only a sub-matrix of size $q\times n$. It reduces the storage and complexity requirements to $O(nq)$ and $O(nq^2)$ respectively \mycitep{rudi2016more, NIPS2000_19de10ad, pmlr-v9-cortes10a}
Numerous methods have been proposed for approximating the kernel matrix, including techniques such as PCA-based approximation \mycitep{NIPS2000_19de10ad, JMLR:v23:21-0766,pmlr-v108-sterge20a} and stochastic approximation methods \mycitep{6918503, drineas2005nystrom, pmlr-v108-sterge20a}.
Depending on the eigenvalue decay of the kernel matrix, $q$ may need to be quite large to achieve a good approximation. Computing a PCA in such cases can itself be computationally expensive. Therefore, if certain structural assumptions about the kernel can be made, it is often practical to consider RFA as an alternative to the Nyström method. In this paper, we investigate algorithms, using the interplay of fast learning methods and RFA and analyze generalization performance of such algorithms. 
The concept of random feature approximation relies on the assumption that the kernel admits a specific integral representation. The integral is then approximated using a summation of so-called random features. This approach reduces memory and computational costs to $O(nM)$ and $O(nM^2)$, respectively, where $M$ denotes the number of random features. The first question that naturally arises is how many random features are required to achieve optimal convergence rates when using the approximated kernel with traditional kernel estimation methods. For the kernel ridge regression (KRR) method, this question has already been extensively studied \mycitep{NIPS2007_013a006f, li2021unified, pmlrv119zhen20a, features, lanthaler2023error}. 
In 2007, \cite{NIPS2007_013a006f} established optimal convergence rates for $M=O(n)$ random features in the case of real-valued kernels (rvk). Later, \mycite{features} improved this result to $M=O(\sqrt{n}\,\log(n))$. This result was further extended to stochastic kernel ridge regression in \mycite{SGDfeatures}. More recently, \mycite{lanthaler2023error} succeeded in eliminating the logarithmic factor, demonstrating the rate $M=O(\sqrt{n})$ for more general vector-valued kernels (vvk). We refer to Table \ref{table3} for a comparison. \vspace{0.2cm}

\textbf{Contribution}
To the best of our knowledge, RFA has so far been studied primarily for KRR. Using a general spectral filtering framework \cite{Caponetto}, we establish fast convergence rates for a wide range of learning methods with implicit or explicit regularization. This includes methods such as gradient descent, acceleration techniques, and also covers prior results of \cite{features} for KRR and vector-valued kernels. Furthermore, we addressed the saturation effect inherent to the kernel ridge regression estimator, which limits the convergence rates in all previously mentioned work. By overcoming this limitation, we achieved fast convergence rates for objectives with any degree of smoothness. Additionally, our framework allows for kernels represented as sums of integral kernels, which include, among others, operator-valued neural tangent kernels. As a result, this paper lays the foundation for deriving convergence rates for Neural Operators \cite{nguyen2024optimalconvergenceratesneural}.\\ For more details we refer to Section \ref{sec:main-results}.

\begin{table}[H]
\caption{The first column compares the number of random features, needed to provide a generalization 
bound of order $O(n^{-\frac{1}{2}})$ . The last column compares the range of smoothness classes over which optimal convergence rates have been established. For a more detailed explanation of the parameters $r,b>0$, see Assumption \ref{ass:source}, \ref{ass:dim}.   }\label{table3}
\begin{center}
\begin{tabular}{|l|l|l|l|}
\hline References &  $M$& Method & Smoothness degree\\
\hline \hline \mycite{NIPS2007_013a006f}& $O\left(n\right)$ & KRR with rvk & $r \in [0.5,1]$\\
\hline \mycite{features}& $O\left(\sqrt{n}\log(n)\right)$& KRR with rvk & $r \in [0.5,1]$\\
\hline \mycite{lanthaler2023error}& $O\left(\sqrt n\right)$ & KRR  with vvk & $r =0.5$\\
\hline 
\hline Our work & $O\left(\sqrt{n}\log(n)\right)$& Spectral Methods with vvk & $2r+b>1$\\
\hline 
\end{tabular}
\end{center}
\end{table}
The rest of the paper is organized as follows. In Section 2, we present our setting and review relevant results
on learning with kernels, and learning with random features. In Section 3, we
present and discuss our main results, while proofs are deferred to the appendix. Finally,
numerical experiments are presented in Section 4.


{\bf Notation.} 
By $\cL(\cH_1, \cH_2)$ we denote the space of bounded linear operators between real separable Hilbert space $\cH_1$, $\cH_2$. 
We write $\cL(\cH, \cH) = \cL(\cH)$. For $\Gamma \in \cL(\cH)$, we denote by $\Gamma^*$ the adjoint operator and for compact $\Gamma$ 
by $(\lam_j(\Gamma))_{j}$ the  sequence of eigenvalues. If $h \in \cH$, we write $h \otimes h := \langle \cdot, h \ra h$. 
We let $[n]=\{1,...,n\}$.  For two positive sequences $(a_n)_n$, $(b_n)_n$ we write $a_n \lesssim b_n$ if $a_n \leq c b_n$
for some $c>0$ and $a_n \simeq b_n$ if both $a_n \lesssim b_n$ and $b_n \lesssim a_n$. With $\|f\|^2_{L^2(\rho_x)}:=\int \|f(x)\|_\mathcal{Y}^2d\rho_x(x)$ we denote the $L^2$ norm and for $\mathbf{y}=(y_1,\dots y_n)\in\mathcal{Y}^{n}$ we let $\|\mathbf{y}\|_2^2:=\sum_{i=1}^n \|y_i\|_\mathcal{Y}^2$.


\section{Setup}
\label{sec:setting}

We let $\mathcal{X}$ be the input space and $\mathcal{Y}$ be the output space and assume that $\mathcal{X}$ denotes a Banach space and $\mathcal{Y}$ a separable Hilbert spaces with norms $\|.\|_\mathcal{X}$ and $\|.\|_\mathcal{Y}$, respectively. We impose these assumptions on the input and output spaces to facilitate the application of the theory of vector-valued kernels as developed in \cite{vvk1,vvk2}. The unknown data distribution on the data space $\cZ=\mathcal{X} \times \mathcal{Y}$ is denoted by $\rho$ while the marginal distribution on $\mathcal{X}$ is denoted as $\rho_X$ and the regular conditional distribution on $\mathcal{Y}$ given $x \in \mathcal{X}$ is denoted by $\rho(\cdot | x)$, see e.g. \cite{Shao_2003_book}. 

Given a measurable function $g: \mathcal{X} \to \mathcal{Y}$ we further define the expected risk as 
\begin{equation}
\label{eq:expected-risk}
\cE(g) := \mbe[ \ell (g(X), Y) ]\;,
\end{equation}  
where the expectation is taken w.r.t. the distribution $\rho$ and $\ell: \mathcal{Y} \times \mathcal{Y} \to \mbr_+$ is the least-square loss 
$\ell(t, y)=\frac{1}{2}\|t-y\|_\mathcal{Y}^2$. It is known that the global minimizer  of $\cE$ over the set of all measurable functions is 
given by the regression function $g_\rho(x)= \int_{\mathcal{Y}} y \rho(dy|x)$.

\subsection{Motivation of Kernel Methods with RFA}

First, we recall the most important definitions of vector-valued kernels. For more insights on vector valued kernels and their RKHSs we refer to  \cite{vvk1,vvk2} .

\begin{definition}
\label{def-vvk}
 
\begin{itemize}
\item []
\item Given a topological space $\mathcal{X}$ and a separable Hilbert space $\mathcal{Y}$, a map $K: \mathcal{X} \times \mathcal{X} \longrightarrow \mathcal{L}(\mathcal{Y})$ is
called a $\mathcal{Y}$-reproducing kernel on $\mathcal{X}$ if
$$
\sum_{i, j=1}^N\left\langle K\left(x_i, x_j\right) y_j, y_i\right\rangle_{\mathcal{Y}} \geq 0
$$
for any $x_1, \ldots, x_N$ in $\mathcal{X}, y_1, \ldots, y_N$ in $\mathcal{Y}$ and $N \geq 1$.

\item Given $x \in \mathcal{X}, K_x: \mathcal{Y} \rightarrow$ $\mathcal{F}(\mathcal{X} ; \mathcal{Y})$ denotes the linear operator whose action on a vector $y \in \mathcal{Y}$ is the function $K_x y \in \mathcal{F}(\mathcal{X},\mathcal{Y})$ defined by
$$
\left(K_x y\right)(t)=K(t, x) y \quad t \in \mathcal{X} .
$$

Given a $\mathcal{Y}$-reproducing kernel $K$, there is a unique Hilbert space $\mathcal{H}_K \subset \mathcal{F}(\mathcal{X} ; \mathcal{Y})$ satisfying
$$
\begin{array}{ll}
K_x \in \mathcal{L}\left(\mathcal{Y}, \mathcal{H}_K\right) & \forall \,\,\,u \in \mathcal{X} \\
F(x)=K_x^* F & \forall \,\,\,u \in \mathcal{X}, F \in \mathcal{H}_K,
\end{array}
$$
where $K_x^*: \mathcal{H}_K \rightarrow \mathcal{Y}$ is the adjoint of $K_x$. Note that from the second property we have that $K(x,t)=K^*_xK_t$. The space $\mathcal{H}_K$ is called the reproducing kernel Hilbert space associated with $K$, given by
$$
 \mathcal{H}_K=\overline{\operatorname{span}}\left\{K_x y \mid x \in \mathcal{X}, y \in \mathcal{Y}\right\} .
$$
\item A reproducing kernel $K: \mathcal{X} \times\mathcal{X} \rightarrow \mathcal{L}(\mathcal{Y})$ is called Mercer provided that $\mathcal{H}_K$ is a subspace of $\mathcal{C}(\mathcal{X}; \mathcal{Y})$.

\end{itemize}
\end{definition}

Now we can define kernel methods. These methods are nonparametric approaches defined by a kernel $K$ and a so called regularization function $\phi_\lambda$. The estimator then has the form
\begin{align}
    f_{\lambda}:=\phi_{\lambda}\left(\widehat{\Sigma}\right) \widehat{\mathcal{S}}^* \mathbf{y}, \label{kernel Method}
\end{align}
where  $\widehat{\mathcal{S}}^* \mathbf{y}:= \frac{1}{n}\sum_{i=1}^n  K_{x_i}y_i,\,\,$  $ \widehat{\Sigma}:=\frac{1}{n} \sum_{j=1}^{n} K_{x_{j}} K^*_{x_j}\,$ and $\mathcal{H}_K$ denotes the reproducing kernel Hilbert space (RKHS) of $K$. \cite{Muecke2017op.rates} established optimal rates for $\mathbb{R}$- reproducing kernel methods of the above form. The idea of this estimator is, when the sample size $n$ is large, the function $\widehat{\mathcal{S}}^* \mathbf{y}= \frac{1}{n}\sum_{i=1}^n  K_{x_i}y_i\in \mathcal{H}$ is a good approximation of its mean $\Sigma g_{\rho}=\int_{\mathcal{X}}  K_{x}g_{\rho}(x) d \rho_{X}$. Hence the spectral algorithm \eqref{kernel Method} produces a good estimator $f_{\lambda}$, if $\phi_{\lambda}\left(\widehat{\Sigma}\right)$ is an approximate inverse of $\Sigma$.
To motivate RFA we now consider the following examples. \vspace{0.2cm}\\
\textbf{Example 1 (KRR):} The probably most common example for explicit regularization is KRR for an $\mathbb{R}$- reproducing kernel $K$:
\begin{align}
f_\lambda(x)=\sum_{i=1}^n \alpha_i K\left(x_i, x\right), \quad \alpha=(\mathbf{K}+\lambda n I)^{-1} y, \label{KRR}
\end{align}
where $\mathbf{K}$ denotes the kernel gram matrix $\mathbf{K}_{i,j}=K(x_i,x_j)$. Note that this estimator can be obtained from \eqref{kernel Method} by choosing $\phi_\lambda(t)=\frac{1}{t+\lambda}$ \mycitep{Muecke2017op.rates}. In the above formula \eqref{KRR} the estimator has computational costs of order $O(n^3)$ since we need to calculate the inverse of an $n$ by $n$ matrix. However, if we assume to have a inner product kernel $K_M(x,\tilde{x})=\Phi_M(x)^\top \Phi_M(\tilde{x})$, where $\Phi_M$ is a feature map of dimension $M$, the computational costs can be reduced to $O(nM^2+M^3)$ \mycitep{features}.\vspace{0.2cm}\\
\textbf{Example 2 (HB):} To also give an example of implicit regularization we here analyse an acceleration method, namely the Heavyball method which can also be derived from \eqref{kernel Method} \cite{pagliana2019implicit} and is closely related to the gradient descent algorithm but has an additional momentum term:
\begin{align}
f_{k+1} &= f_k - \frac{\alpha}{n}\sum_{j=1}^n K(x_j , \cdot) (f_k(x_j ) - y_j)  + \beta( f_k - f_{k-1}) \;, \label{HB}
\end{align}
where $\alpha>0,\beta \geq 0$ describe the step-sizes. So in each iteration we have to update our estimator $f_k(x_j)$ for all data points. This results in a computational cost of order $O(kn^2)$.  However if we again assume to have a inner product kernel $K_M(x,x')=\Phi_M(x)^\top \Phi_M(x')$ we can  use theory of RKHS. Recall that the RKHS of $K_M$ can be expressed as 
$$\mathcal{H}_M=\{h:\mathcal{X}\rightarrow\mathbb{R}| \,\,\exists \,\theta\in\mathbb{R}^M\,\,\,s.t.\,\,\, h(x)= \Phi_M(x)^\top \theta\}$$ 
(see for example \cite{Ingo}). Since $K_M \in \mathcal{H}_M$ and therefore all iterations $f_k \in \mathcal{H}_M $, there exists some $\theta_k \in \mathbb{R}^M$ such that $f_k(x)=\Phi_M(x)^\top \theta_k$.  This implies that instead of running \eqref{HB} it is enough to update only the parameter vector:
\begin{align}
\heavy_{k+1} &= \heavy_k - \frac{\alpha}{n}\sum_{j=1}^n (\Phi_M(x_j)^\top \theta_k - y_j)  \Phi_M(x_j) + \beta( \heavy_k - \heavy_{k-1})\;. \label{paramGD}
\end{align}
The computational cost of the above algorithm \eqref{HB} is therefore reduced from $O(kn^2)$ to $O(knM)$. \vspace{0.2cm}\\
The basic idea of RFA is now to consider kernels which can be approximated by an tensor product \mycitep{NIPS2007_013a006f}:

\begin{align}
K_\infty(x,\tilde{x})\approx K_M(x,\tilde{x}):=\sum_{i=1}^p \frac{1}{M} \sum_{m=1}^M \varphi^{(i)}(x,\omega_m) \otimes \varphi^{(i)} (\tilde{x},\omega_m), 
\end{align}

where $\varphi^{(i)}: \mathcal{X} \times \Omega \rightarrow \mathcal{Y}$ , with some probability space $(\Omega,\pi)$. More precisely, this paper investigates RFA for kernels $K$ which have an integral representation of the form 

\begin{align}
K_\infty(x,\tilde{x})=\sum_{i=1}^p \int_\Omega \varphi^{(i)}(x,\omega) \otimes \varphi^{(i)}(\tilde{x},\omega) d\pi(\omega). \label{kernel}
\end{align}
In the case where $\mathcal{Y}=\mathbb{R}$ we have 

\begin{align}\label{kernelapprox}
 K_\infty(x,\tilde{x})&=\sum_{i=1}^p \int_\Omega \varphi^{(i)}(x,\omega)  \varphi^{(i)}(\tilde{x}\omega) d\pi(\omega) \\ 
 &\approx K_M(x,\tilde{x}):=\sum_{i=1}^p \Phi_M^{(i)}(x)^\top \Phi^{(i)}_M(\tilde{x}), 
\end{align}

where $\Phi_M^{(i)}: \mathcal{X} \rightarrow \mathbb{R}^M$ ,  $\Phi_M^{(i)}(x)=M^{-1/2}(\varphi^{(i)}(x,\omega_1), \dots, \varphi^{(i)}(x,\omega_M))$ is called a finite dimensional feature map and $\varphi^{(i)}:\mathcal{X}\times\Omega\rightarrow \mathbb{R}$.

Note that there are a large variety of standard kernels of the form \eqref{kernelapprox}. For example, the Linear kernel, the Gaussian kernel \mycitep{features} or  Tangent kernels \mycitep{domingos2020model}. In contrast to \cite{features}, we added an additional sum over different features $\varphi^{(i)}$ for a more general setting and to cover a special case of Tangent kernels namely the Neural-Tangent Kernel  (NTK) \mycitep{jacot2018neural} which provided a better understanding of neural networks in recently published papers \mycitep{nguyen2023neurons, nitanda2020optimal, Li21, Munteanu22, Oymak}. For one "hidden layer", the NTK is defined as 
\begin{align}
K_\infty\left(x, \tilde{x}\right) \coloneqq \mathbb{E}_\omega\sigma\left(\omega^\top x\right) \sigma\left(\omega^\top \tilde{x}\right)+\tau^{2}\left(x^{\top} \tilde{x}+\gamma^{2}\right)\mathbb{E}_\omega \sigma^{\prime}\left(\omega^\top x\right) \sigma^{\prime}\left(\omega^\top \tilde{x}\right),\label{NTK}
\end{align}
where $\tau, \gamma \in \mathbb{R}$ and $\sigma$ defines the so called activation function. According to our setting, the NTK from above can be recovered from \eqref{kernelapprox} by setting $p=d+2$ where $d$ denotes the input dimension and 
 $\varphi^{(i)}(x,\omega)= \tau x^{(i)}\sigma'\left(\omega^\top x\right)$ for $i \in [d]$ and $\varphi^{(d+1)}(x,\omega)= \sigma\left(\omega^\top x\right)$, $\varphi^{(d+2)}(x,\omega)= \tau\gamma\sigma'\left(\omega^\top x\right).$\\

Furthermore, instead of just real valued kernels \mycite{features} we consider vector-valued and operator-valued kernels within our framework. This enables us to establish optimal convergence rates for shallow neural operators (NO), see \mycite{nguyen2024optimalconvergenceratesneural}. Similar as for normal NNs, the NTK for NOs, can be recovered from our setting \eqref{kernel}.

\subsection{Kernel-induced operators and spectral regularization functions}

In this subsection, we specify the mathematical background of regularized learning.  It essentially repeats the setting in \mycite{Muecke2017op.rates} and \mycite{features} in summarized form.
First we introduce kernel induced operators and then recall basic definitions of linear regularization methods based on spectral theory for self-adjoint linear operators. These are standard methods for finding stable
solutions for ill-posed inverse problems. Originally, these methods were developed in the
deterministic context \mycitep{engl1996regularization}. Later on, they have been applied to probabilistic problems in machine learning  \mycitep{Caponetto, Muecke2017op.rates}.

Recall that $\mathcal{H}_M$ denotes the RKHS of the kernel $K_M$ defined in \eqref{kernel}.  
We denote by $\mathcal{S}_M : \cH_M \hookrightarrow  L^2(\mathcal{X} , \rho_X)$ the inclusion of $\cH_M$ into $L^2(\mathcal{X} , \rho_X)$ for $M \in \mathbb{N}\cup \infty$.
The adjoint operator $\cS^{*}_M: L^{2}(\mathcal{X}, \rho_X) \longrightarrow \mathcal{H}_{M}$ is identified as
$$
\cS^{*}_M g=\int_{\mathcal{X}}  K_{M,x} g(x)\rho_X(d x).
$$
The covariance operator $\Sigma_M: \mathcal{H}_{M} \longrightarrow \mathcal{H}_{M}$ and the kernel integral operator $\mathcal{L}_M: L^2(\mathcal{X} , \rho_X) \to L^2(\mathcal{X} , \rho_X) $ are given by
\begin{align*}
   \Sigma_Mf&\coloneqq \cS^*_M\cS_M f = \int_{\mathcal{X}} K_{M,x}K_{M,x}^* f \rho_X(d x),\\ 
   \mathcal{L}_M f&\coloneqq \cS_M \cS^*_M f = \int_{\mathcal{X}}  K_{M,x} f(x) \rho_X(d x),
\end{align*}

which can be shown to be positive, self-adjoint, trace class (and hence is compact).
The empirical versions of these operators, corresponding formally to taking the empirical distribution of $\rho_X$ in the above formulas, are given by

\begin{center}
\begin{align*}
&\widehat{\cS}_{M}: \mathcal{H}_{M} \longrightarrow \mathcal{Y}^{n},  &&\left(\widehat{\cS}_{M} f\right)_{j}= K_{M,x_{j}}^*f, \\
&\widehat{\cS}_{M}^{*}: \mathcal{Y}^{n} \longrightarrow \mathcal{H}_{M}, && \widehat{\cS}_{M}^{*} \mathbf{y}=\frac{1}{n} \sum_{j=1}^{n}  K_{M,x_{j}}y_{j}, \\
&\widehat{\Sigma}_{M}:=\widehat{\cS}_{M}^{*} \widehat{\cS}_{M}: \mathcal{H}_{M} \longrightarrow \mathcal{H}_{M},&& \widehat{\Sigma}_{M}=\frac{1}{n} \sum_{j=1}^{n}K_{M,x_{j}}K_{M,x_{j}}^*.
\end{align*}
\end{center}
Further let the numbers $\mu_{j}$ are the positive eigenvalues of $\Sigma_\infty$ satisfying $0<\mu_{j+1} \leq \mu_{j}$ for all $j>0$ and $\mu_{j} \searrow 0$.

\begin{definition}[Regularization function] Let $\phi :(0,1]\times [0,1]\rightarrow\mathbb{R}$ be a function and write $\phi_\lambda=\phi(\lambda,.)$. The family $\{\phi_\lambda\}_\lambda$
is called regularisation function, if the following condition holds:

\begin{itemize}
    \item[(i)] There exists a constant $D<\infty$ such that for any $0<\lambda \leq 1$
\begin{align}
\sup _{0<t<1}|t\phi_{\lambda}(t)| \leq D . \label{def.phi}
\end{align}

    \item[(ii)]There exists a constant $E<\infty$ such that for any $0<\lambda \leq 1$
$$
\sup _{0 < t \leq 1}\left|\phi_{\lambda}(t)\right| \leq \frac{E}{\lambda}.
$$
    \item[(iii)]Defining the residual $r_{\lambda}(t):=1-\phi_{\lambda}(t) t$, there exists a constant $c_{0}<\infty$ such that for any $0<\lambda \leq 1$
\begin{align}
\sup _{0 < t \leq 1}\left|r_{\lambda}(t)\right| \leq c_{0}. \label{residual}
\end{align}

\end{itemize}
\end{definition}

It has been shown in \mycite{10.1162/neco.2008.05-07-517, Muecke2017op.rates} that attainable learning rates are essentially linked with the qualification of the regularization $\left\{\phi_{\lambda}\right\}_{\lambda}$, being the maximal $\nu$  such that for any $q\in[0,\nu]$ and for any $0<\lambda \leq 1$

\begin{align}
\sup _{0 < t \leq 1}\left|r_{\lambda}(t)\right| t^{q} \leq c_{q} \lambda^{q}, \label{c_r}
\end{align}

for some constant $c_{q}>0$. \\

In the following, we present some of the most important examples of spectral filtering methods. \vspace{0.1cm}\\
\textbf{Example 1 (Tikhonov Regularization):}
The choice $\phi_\lambda(t)=\frac{1}{t+\lambda}$ corresponds to {\it Tikhonov regularization} In this case we have $D=E=c_q=1$. The qualification of this method is $\nu=1$ \mycitep{Muecke2017op.rates}.
\vspace{0.1cm}\\
\textbf{Example 2 (GD):}
The Landweber Iteration (gradient descent algorithm with constant stepsize $\alpha \in[0,1/\kappa^2]$) is defined by

$$
\phi_{1/k}(t)=\sum_{j=0}^{k-1}(1-\alpha t)^j \text { with } k=1 / \lambda \in \mathbb{N} .
$$

We have $D=E=\alpha$ and $c_q=(q/\alpha)^q$. The qualification of this algorithm is $\nu=\infty$ \mycitep{pagliana2019implicit}.
\vspace{0.1cm}\\
\textbf{Example 3 (Heavy Ball):}
In \mycite{pagliana2019implicit} it was shown that the heavy-ball method \eqref{HB} defines a spectral  filtering method for $\lambda=1/k^2$ and with parameters $E=D=2$. Moreover there exist a positive constant $c_\nu<+\infty$ such that the method has qualification $\nu$ \mycitep{pagliana2019implicit}. An other example of an accelerated spectral filtering method is {\it Nesterov} \mycitep{pagliana2019implicit}.


\section{Main Results}
\label{sec:main-results}

\subsection{Assumptions and Main Results}

In this section we formulate our assumptions and state our main results. 

\begin{assumption}[Data Distribution]
\label{ass:input}
 There exists positive constants $Q$ and $Z$ such that for all $l \geq 2$ with $l \in \mathbb{N}$,
$$
\int_{\mathcal{Y}}\|y\|_\mathcal{Y}^l d \rho(y \mid x) \leq \frac{1}{2} l ! Z^{l-2} Q^2
$$
$\rho_X$-almost surely.\end{assumption}
The above assumption is very standard in statistical learning theory. It is for example satisfied if $y$ is bounded almost surely. Obviously, this assumption implies that the regression function $g_\rho$ is bounded almost surely, as
$$
\left\|g_\rho(x)\right\|_\mathcal{Y} \leq \int_{\mathcal{Y}}\|y\|_\mathcal{Y} d \rho(y \mid x) \leq\left(\int_{\mathcal{Y}}\|y\|_{\mathcal{Y}}^2 d \rho(y \mid x)\right)^{\frac{1}{2}} \leq Q\,.
$$

\begin{assumption}[Kernel]
\label{ass:kernel}
Assume that the kernel $K_\infty$ has an integral representation of the form \eqref{kernel} with
$\sum_{i=1}^p\|\varphi^{(i)}(x,\omega)\|^2_\mathcal{Y}\leq \kappa^2$ almost surely. 
\end{assumption}
Note that this implies that $\|K(x,\tilde{x})\|_{HS}\leq\kappa^2$ almost surely.

\begin{assumption}[Source Condition]
\label{ass:source}
Let $R>0$, $r>0$. Denote by $\mathcal{L}_\infty:  L^2(\mathcal{X} , \rho_X)\to  L^2(\mathcal{X} , \rho_X)$ the kernel integral operator associated to $K_\infty$. We assume 
\begin{align}
g_\rho  = \mathcal{L}_\infty^r h \;, \label{hsource}
\end{align}
for some $h \in L^2(\mathcal{X} , \rho_X)$, satisfying $||h||_{L^2(\rho_X)} \leq R$ . 
\end{assumption} 
This assumption characterizes the hypothesis space and relates to the regularity of the regression function $g_\rho$. The bigger $r$ is, the smaller the hypothesis space is, the stronger the assumption is, and the easier the learning problem is, as $\mathcal{L}^{r_{1}}\left(L_{\rho_{X}}^{2}\right) \subseteq \mathcal{L}^{r_{2}}\left(L_{\rho_{X}}^{2}\right)$ if $r_{1} \geq r_{2}$.  
The next assumption relates to the capacity of the hypothesis space.
\begin{assumption}[Effective Dimension]
\label{ass:dim} For some $b \in[0,1]$ and $c_{b}>0, \mathcal{L}_\infty$ satisfies
\begin{align}
\mathcal{N}_{\mathcal{L}_{\infty}}(\lambda):=\operatorname{tr}\left(\mathcal{L}_\infty(\mathcal{L}_\infty+\lambda I)^{-1}\right) \leq c_{b} \lambda^{-b}, \quad \text { for all } \lambda>0 \label {effecDim}
\end{align}
and further we assume that $2r+b>1$.
\end{assumption}
The left hand-side of \eqref{effecDim} is called effective dimension or degrees of freedom \mycitep{Caponetto}. It is related to covering/entropy number conditions \mycitep{Ingo}. The condition \eqref{effecDim} is naturally satisfied with $b=1$, since $\Sigma$ is a trace class operator which implies that its eigenvalues $\left\{\mu_{i}\right\}_{i}$ satisfy $\mu_{i} \lesssim i^{-1}$. Moreover, if the eigenvalues of $\Sigma$ satisfy a polynomial decaying condition $\mu_{i} \sim i^{-c}$ for some $c>1$, or if $\Sigma$ is of finite rank, then the condition \eqref{effecDim} holds with $b=1 / c$, or with $b = 0$. The case $b = 1$ is refereed as the capacity independent case. A smaller $b$ allows deriving faster convergence rates for the studied algorithms. The assumption $2r+b>1$ refers to easy learning problems and if $2r+b\leq 1$ one speaks of hard learning problems \mycitep{pillaudvivien2018statistical}. In this paper we only investigate easy learning problems and leave the question, how many features $M$ are needed to obtain optimal rates in hard learning problems \mycitep{Lin_2020}, open for future work.
\\

We now derive a generalisation bound of the excess risk $\|g_\rho-\mathcal{S}_M f_\lambda^M\|_{L^2(\rho_x)}$ with respect to our RFA estimator,
\begin{align*}
   f_\lambda^M &\coloneqq \phi_\lambda(\widehat\Sigma_M) \widehat{\cS}_{M}^{*} y \,.
\end{align*}
The main idea of our proof is based on a bias-variance type decomposition: Further introducing

\begin{align*}
   f_\lambda^*&\coloneqq \mathcal{S}^*_M\phi_\lambda(\mathcal{L}_M) g_{\rho},
\end{align*}
we write 
\begin{align}
\|g_\rho-\mathcal{S}_M f_\lambda^M\|_{L^2(\rho_x)}&\leq \|g_\rho-\mathcal{S}_Mf_\lambda^*\|_{L^2(\rho_x)} + \|\mathcal{S}_Mf_\lambda^*-\mathcal{S}_M f_\lambda^M\|_{L^2(\rho_x)}\\[7pt]
&=: \text{ BIAS } + \text{ VARIANCE }. \label{excessrisk}
\end{align}

We bound the bias and variance part separately in Proposition \ref{mainprop2} and \ref{mainprop} to obtain the following theorem.

\begin{theorem}
\label{theo1}
Provided the Assumptions \ref{ass:input} ,\ref{ass:kernel} , \ref{ass:source}, \ref{ass:dim},  we have for 
$\lambda=C n^{-\frac{1}{2r+b}}\log^3(2/\delta)$ and $\delta\in(0,1)$ with probability at least $1-\delta$,  
\begin{align*}
\|g_\rho-\mathcal{S}_M f_\lambda^M\|_{L^2(\rho_x)}\leq \bar{C}n^{-\frac{r}{2r+b}}\log^{3r+1}\left(\frac{1}{\delta}\right)
\end{align*}
as long as $\nu \geq r\vee 1$,

\begin{align*}
M\geq \tilde{C}\log(n) \cdot \begin{cases}
n^{\frac{1}{2r+b}}& r\in\left(0,\frac{1}{2}\right)\\
n^{\frac{1+b(2r-1)}{2r+b}}  & r\in\left[\frac{1}{2},1\right] \\
n^{\frac{2r}{2r+b}} & r \in(1,\infty)\,\\
\end{cases}\,,\\
\end{align*}
and $n\geq n_0:= e^{\frac{2r+b}{2r+b-1}}$, where the constants $C,\tilde{C}$ and $\bar{C}$ are independent of $n,M,\lambda$ and can be found in section \ref{I}.
\end{theorem}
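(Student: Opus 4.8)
The strategy is the bias–variance split already displayed in \eqref{excessrisk}: bound $\|g_\rho-\cS_M f_\lambda^*\|_{L^2(\rho_X)}$ and $\|\cS_M f_\lambda^* - \cS_M f_\lambda^M\|_{L^2(\rho_X)}$ separately, invoking Proposition \ref{mainprop2} and Proposition \ref{mainprop}, and then optimize $\lambda$. The quantity $f_\lambda^* = \cS_M^*\phi_\lambda(\cL_M) g_\rho$ is the population RFA estimator; it sits between the true target $g_\rho$ and the empirical estimator $f_\lambda^M$, so its introduction splits the error into a purely approximation-theoretic term (how well the regularized population solution of the \emph{random-feature} kernel $K_M$ approximates $g_\rho$) and a purely statistical fluctuation term. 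I would first record the standard operator identities $\cS_M\cS_M^* = \cL_M$, $\cS_M^*\cS_M = \Sigma_M$, and the fact that $\phi_\lambda$ satisfies (i)–(iii) of the regularization-function definition with qualification $\nu$; these are what make the filter calculus go through.

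For the bias I would use the source condition $g_\rho = \cL_\infty^r h$ with $\|h\|_{L^2(\rho_X)}\le R$. Writing $g_\rho - \cS_M f_\lambda^* = r_\lambda(\cL_M)g_\rho$ (using $\cS_M\cS_M^*\phi_\lambda(\cL_M) = \phi_\lambda(\cL_M)\cL_M$ and $r_\lambda = 1-\phi_\lambda(t)t$), the term is $r_\lambda(\cL_M)\cL_\infty^r h$. The subtlety — and the reason the theorem needs a lower bound on $M$ that depends on $r$ — is that the source condition is stated for $\cL_\infty$, not for the finite-feature operator $\cL_M$, so one must control $\|\cL_\infty^r - \cL_M^r\|$-type differences, or more precisely pass through bounds like $\|r_\lambda(\cL_M)\cL_M^r\|\lesssim \lambda^r$ (qualification, needs $\nu\ge r$) plus a perturbation term measuring $\|\cL_M - \cL_\infty\|$ in an appropriate sense. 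The case split on $r\in(0,\tfrac12)$, $[\tfrac12,1]$, $(1,\infty)$ in the required $M$ reflects exactly how the interpolation inequality between $\|\cL_M-\cL_\infty\|$ and the effective-dimension-weighted norms is applied in each regime; this is where Proposition \ref{mainprop2} does the heavy lifting, and I would simply quote its conclusion here. The requirement $\nu\ge r\vee 1$ enters because one also needs qualification at least $1$ for the variance-side warm-up estimates (e.g. $\|r_\lambda(\cdot)(\cdot)\|\lesssim\lambda$).

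For the variance I would use Proposition \ref{mainprop}, which bounds $\|\cS_M(f_\lambda^* - f_\lambda^M)\|_{L^2(\rho_X)}$ by (after the usual insertion of $(\Sigma_M+\lambda)^{\pm 1/2}$ factors and the "effective-dimension" concentration of $\widehat\Sigma_M$ around $\Sigma_M$ and of $\widehat\cS_M^* y$ around its mean) something of order $\sqrt{\cN_{\cL_\infty}(\lambda)/(n\lambda)} + 1/(n\sqrt\lambda)$ up to log factors, which under Assumption \ref{ass:dim} is $\lesssim \sqrt{\lambda^{-(1+b)}/n}\;\log(\cdot)$. Here the Bernstein-type moment hypothesis, Assumption \ref{ass:input}, is what supplies the $\log(1/\delta)$ powers, and the boundedness $\kappa^2$ from Assumption \ref{ass:kernel} controls the operator norms uniformly. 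Combining: BIAS $\lesssim R\,\lambda^r$ and VARIANCE $\lesssim \sqrt{\lambda^{-(1+b)}/n}$ (both modulo logarithms), the choice $\lambda \simeq n^{-1/(2r+b)}$ balances the two at the rate $n^{-r/(2r+b)}$; the explicit $\log^3(2/\delta)$ inflation of $\lambda$ and the resulting $\log^{3r+1}(1/\delta)$ in the bound come from tracking the confidence-level powers through both propositions, and $n\ge n_0 = e^{(2r+b)/(2r+b-1)}$ is just the threshold ensuring $\lambda\le 1$ (so that the regularization-function axioms, stated on $(0,1]$, apply) given the log factor and the constraint $2r+b>1$ from Assumption \ref{ass:dim}.

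The main obstacle is the bias term: handling the mismatch between the source condition on $\cL_\infty$ and the operator $\cL_M$ actually driving the estimator, i.e. quantifying how many random features $M$ suffice so that $r_\lambda(\cL_M)\cL_\infty^r h$ is no larger than the ideal $\lambda^r$ bound. This is precisely the content of Proposition \ref{mainprop2} and accounts for the $r$-dependent form of the feature-count requirement; everything else (the variance concentration, the final $\lambda$-optimization, the bookkeeping of constants and logarithms) is routine once those two propositions are in hand.
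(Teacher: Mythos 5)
Your plan is correct and follows essentially the same route as the paper: the bias--variance split through $f_\lambda^*$, Proposition \ref{mainprop2} for the bias (where the $\cL_M$ versus $\cL_\infty$ mismatch indeed drives the $r$-dependent feature count), Proposition \ref{mainprop} for the variance, and the choice $\lambda\simeq n^{-1/(2r+b)}$, with the remaining work being the verification that this $\lambda$ meets the sample-size and feature-count thresholds of Proposition \ref{T2I}. One minor point: in the paper the condition $n\ge n_0=e^{(2r+b)/(2r+b-1)}$ is not there to guarantee $\lambda\le 1$ but to ensure $n\gtrsim n^{1/(2r+b)}\log(n)$, i.e.\ that the $\eta_1$-type requirement $n\gtrsim \log(\lambda^{-1})/\lambda$ is absorbed by the stated choice of $C$.
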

If we can not make any assumption on the effective dimension, i.e. assuming the worst case $b=1$ we obtain the following corollary. 

\begin{corollary}\label{cor:rates}
Provided the Assumptions \ref{ass:input} ,\ref{ass:kernel} , \ref{ass:source}, with $r=0.5$ we have for 
$\lambda=C n^{-\frac{1}{2}}\log^3(2/\delta)$ and $\delta\in(0,1)$ with probability at least $1-\delta$,  
\begin{align*}
\|g_\rho-\mathcal{S}_M f_\lambda^M\|_{L^2(\rho_x)}^2\leq  \bar{C}^2n^{-\frac{1}{2}}\log^{5}\left(\frac{1}{\delta}\right)
\end{align*}
as long as $\nu \geq r\vee 1$, $M\geq \tilde{C}\log(n) \cdot n^\frac{1}{2}$ .
\end{corollary}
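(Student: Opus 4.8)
The plan is to obtain Corollary \ref{cor:rates} as a direct specialization of Theorem \ref{theo1} to the capacity-independent regime $b=1$ together with the source exponent $r=1/2$, so the entire argument reduces to checking that the hypotheses of the theorem are in force and then substituting numbers into its conclusion.

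First I would account for the assumptions. Assumptions \ref{ass:input}, \ref{ass:kernel} and \ref{ass:source} (the latter with $r=1/2$) are assumed verbatim in the corollary. The only ingredient of Theorem \ref{theo1} not explicitly listed is Assumption \ref{ass:dim}, and here I would invoke the remark recorded immediately after that assumption in the excerpt: since $\mathcal{L}_\infty=\mathcal{S}_\infty\mathcal{S}_\infty^*$ is trace class, its eigenvalues obey $\mu_i\lesssim i^{-1}$, hence $\mathcal{N}_{\mathcal{L}_\infty}(\lambda)=\operatorname{tr}(\mathcal{L}_\infty(\mathcal{L}_\infty+\lambda I)^{-1})\lesssim\lambda^{-1}$, i.e. \eqref{effecDim} holds with $b=1$; moreover $2r+b=1+1=2>1$, so the easy-learning requirement $2r+b>1$ is satisfied. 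Thus Theorem \ref{theo1} is applicable with parameters $(r,b)=(1/2,1)$.

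Next I would substitute $(r,b)=(1/2,1)$ into the statement of Theorem \ref{theo1}. The regularization level becomes $\lambda=C n^{-1/(2r+b)}\log^3(2/\delta)=C n^{-1/2}\log^3(2/\delta)$, as claimed. The excess-risk bound reads $\|g_\rho-\mathcal{S}_M f_\lambda^M\|_{L^2(\rho_x)}\leq\bar C\,n^{-r/(2r+b)}\log^{3r+1}(1/\delta)=\bar C\,n^{-1/4}\log^{5/2}(1/\delta)$, and squaring both sides yields $\|g_\rho-\mathcal{S}_M f_\lambda^M\|_{L^2(\rho_x)}^2\leq\bar C^2\,n^{-1/2}\log^{5}(1/\delta)$. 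For the number of features, $r=1/2$ lies in the middle branch $r\in[1/2,1]$, so the required lower bound is $M\geq\tilde C\log(n)\cdot n^{(1+b(2r-1))/(2r+b)}=\tilde C\log(n)\cdot n^{(1+0)/2}=\tilde C\log(n)\cdot n^{1/2}$. The qualification condition becomes $\nu\geq r\vee 1=1$, and the sample-size threshold $n\geq n_0=e^{(2r+b)/(2r+b-1)}=e^{2}$ is a fixed absolute constant that can be folded into $\bar C$.

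Since every step is either a cited hypothesis of Theorem \ref{theo1} or an arithmetic substitution, there is no genuine obstacle here; the only point requiring a sentence of justification rather than pure computation is the automatic validity of Assumption \ref{ass:dim} with $b=1$, which is precisely the trace-class observation already noted in the text preceding the corollary.
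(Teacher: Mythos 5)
Your proposal is correct and matches the paper's (implicit) argument exactly: the corollary is obtained by specializing Theorem \ref{theo1} to the capacity-independent case $b=1$ (automatic from $\mathcal{L}_\infty$ being trace class) with $r=1/2$, and all the substitutions you perform ($\lambda = Cn^{-1/2}\log^3(2/\delta)$, exponent $3r+1=5/2$ squaring to $5$, the middle branch of the $M$-condition giving $n^{1/2}$) are the intended ones. The only point worth a remark is your folding of the threshold $n\geq n_0=e^2$ into $\bar C$, which requires an a priori bound on the excess risk for the finitely many small $n$; this is harmless but is glossed over both by you and by the corollary's statement itself.
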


\paragraph{Discussion.}

\begin{itemize}
\item Theorem \ref{theo1} establishes generalization rates for a broad class of spectral filtering methods, encompassing both explicit approaches such as Tikhonov regularization and implicit schemes including gradient descent and the Heavy–Ball method. In the well-specified setting, achieving an upper bound of $O(\frac{1}{\sqrt{n}})$ for the excess risk requires only $t=O(\sqrt{n})$ iterations and $M=O(\sqrt{n}\log(n))$ random features. For target functions with regularity $r\geq1$, attaining the optimal convergence rate necessitates $t=O(n^{\frac{1}{2r+b}})$ iterations and $M=O(t^{2r}\log(n))$ random features. This indicates that smoother target functions demand less iterations, but a greater number of random features to achieve optimal generalization.
\item The rate of convergence in Theorem \ref{theo1} is known to be minimax optimal 
in the RKHS framework \citep{Caponetto,Muecke2017op.rates}.
\item Compared to the work of \mycite{features,lanthaler2023error}, we establish convergence rates for all spectral filtering methods and extend their results for KRR by deriving optimal rates for smoothness classes $r<1/2$ satisfying $2r+b>1$ (the {\it easy learning regime}). 
\item Regarding the number of required random features, we need the same amount as \mycite{features}, namely $M=\sqrt{n}\log(n)$. The work \mycite{lanthaler2023error} relies on slightly different source assumptions, which, however, coincide with ours in the well-specified case. Using a "random kitchen sinks" approach from \mycite{NIPS2008_0efe3284}, they managed to remove the logarithmic factor, proving that $M=\sqrt{n}$ random feature suffices to achieve optimal rates. However, the work of \mycite{lanthaler2023error} does not incorporate any prior knowledge about the effective dimension and can therefore only establish optimal rates in the well specified case $b=1, r=0.5$.
\end{itemize}

\section{Application to Neural Operators}

In this section, we demonstrate how our results can be applied to derive convergence rates for neural operators in the NTK regime. To this end, we begin by recalling the definitions of a neural operator and the associated neural tangent kernel. We essentially repeat the setting from \mycite{nguyen2024optimalconvergenceratesneural} in summarized form. 

\subsection{Two-Layer Neural Operator}

We start by redefining the in and output space as follows. 
We let  $\mathcal{X} \subseteq \mathbb{R}^{d_x}$ be the input space of our function spaces.  The unknown data 
probability measure on the input data space $\cX$ is denoted by $\mu$.
Further we let $\mathcal{U}$ be the function input space, mapping from $\cX \to \cY \subset \mathbb{R}^{d_y}$ and $\mathcal{V}$ be the function target space, mapping from $\mathcal{X} \to \tilde{\mathcal{Y}} \subset \mathbb{R}$.  The unknown data distribution on the data space $\mathcal{U} \times \mathcal{V}$ is denoted by $\rho$, while the marginal distribution on $\mathcal{U}$ is denoted as $\rho_{u}$ and the regular conditional distribution on $\mathcal{V}$ given $u \in \mathcal{U}$ is denoted by $\rho(\cdot | u)$. Here,  the distributions $\rho$, $\rho_u$ and $\mu_x$ are known only through $\nU$ i.i.d. 
samples $((u_1, v_1), ..., (u_{n_{\Scale[0.4]{\mathcal{U}}}}, v_{n_{\Scale[0.4]{\mathcal{U}}}})) \in (\cU \times \cV)^{\nU}$, 
evaluated at $\nX$ points $(x_1, \dots x_{\nX})\in\mathcal{X}^{\nX}$.

The hypothesis class considered is given by the following set of two-layer neural operators: 
Let $M \in \mathbb{N}$ be the network width. 
Given an activation function $\sigma: \mathbb{R} \to \mathbb{R}$ acting point wise and some continuous 
operator $A:\mathcal{U}\rightarrow \mathcal{F}(\mathcal{X},\mathbb{R}^{d_k}) $  we consider the class 

\begin{align}
\label{Oclass}
\mathcal{F}_{M} &:=\left\{ G_\theta :\mathcal{U} \to \mathcal{V}  \mid \; G_\theta(u)(x) =  
\frac{1}{\sqrt M} \left\langle a, \sigma\left( B_1 A(u)(x) + B_2u(x)+B_3c(x) \right)\right \rangle\;, \right. \nonumber \\
& \quad \left. \theta =(a, B_1,B_2,B_3 ) \in \mathbb{R}^M \times \mathbb{R}^{ M \times d_k}\times \mathbb{R}^{ M \times d_y}  \times \mathbb{R}^{ M \times d_b}\right\}\;, 
\end{align}

where we denote with $\langle.,.\rangle$ the euclidean vector inner product. 
We condense all parameters in $\theta = (a, B_1,B_2,B_3 )=(a, B )\in \Theta$, with 
$B=(B_1,B_2,B_3)\in\mathbb{R}^{M\times\tilde{d}}$, $\tilde{d}:=d_k+d_y+d_b$ and equip the parameter space $\Theta$ with the euclidean vector norm 
\begin{align}
\|\theta \|_\Theta^2 = ||a||_2^2 + ||B||_F^2   = \|a\|_2^2 + \sum_{m=1}^M\|b_m\|_2^2 ,   \label{tnorm}
\end{align}

for any $\theta \in \Theta$, where we used the notation $B=(b_1,\dots,b_M)^\top\in\mathbb{R}^{M\times\tilde{d}}$ .

\vspace{0.2cm}
For the activation function $\sigma^2$, we impose the following assumption.

\vspace{0.2cm}

\begin{assumption}[Activation Function]
\label{ass:neurons} 
There exists  $C_\sigma>0$  such that $\left\|\sigma^{\prime \prime}\right\|_{\infty} \leq C_\sigma,\left\|\sigma^{\prime}\right\|_{\infty} \leq C_\sigma$ , $|\sigma(u)| \leq 1+|u| \text { for } \forall u \in \mathbb{R}$.
In addition we assume that the second derivative $\sigma''$ is Lipschitz continuous.  
\end{assumption}

\vspace{0.2cm}

In our operator class \ref{Oclass}, we first apply an operator transformation $B_1A(u)$, where $B_1$ 
defines a matrix and typically, $A$ defines a kernel integral operator. Depending on the choice of $A$, our 
operator class covers a wide range of practically well-studied neural operators, such as Fourier Neural Operators, Graph Neural Operators, and Low-rank Neural Operators \mycitep{Kovachki2023NeuralOL, huang2024operator, kovachki2024operator}. The kernel integral 
operation $A(u)$, can be seen as an intuitive extension of matrix multiplication in traditional neural 
networks to an infinite-dimensional setting. Inspired by Residual Neural Networks, the second 
transformation, $B_2 u(\,.)$ with $B_2\in\mathbb{R}^{M\times d_y}$ is just a matrix multiplication 
and keeps information of the original input $u(\,.)$. The final component, $B_3 c(\,.)$ with 
$c: \mathcal{X}\to \mathbb{R}^{d_b}$ defines the bias. Unlike standard neural networks, the bias 
here is typically not a constant but a function. This function can be a linear transformation or a shallow 
neural network with trainable parameters. However, to keep our proofs succinct, we assume that $c$ is given 
and instead multiply the bias with a trainable matrix $B_3\in\mathbb{R}^{M\times d_b}$, see Figure \ref{op:class}. 

\vspace{0.3cm}

\begin{figure}[t]
    \centering
    \includegraphics[width=0.5\linewidth]{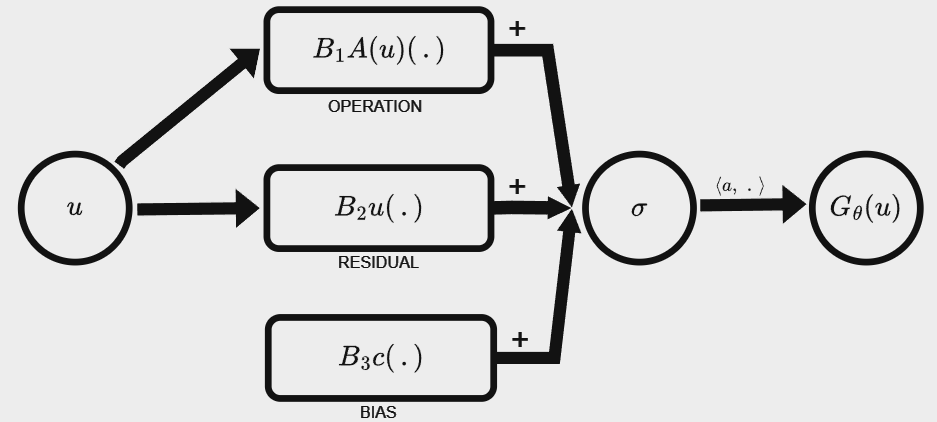}
\caption{Depiction of the architecture of our operator class.}
\label{op:class}    
\end{figure}

\vspace{0.3cm}

{\bf Gradient Descent.} We aim at analyzing the generalization properties of gradient descent, 
whose basic iterations are given by   
\begin{align}
\heavy_{t+1}^j &= \heavy_t^j - \alpha \partial_{\theta^j} \hat \cE(G_{\theta_t}) \nonumber  \\
&= \heavy_t^j - \frac{\alpha}{n_{\Scale[0.4]{\mathcal{U}}}}\sum_{i=1}^{n_{\Scale[0.4]{\mathcal{U}}}} 
\left \langle (G_{\theta_t}(u_i) - v_i) , \partial_{\theta^j} G_{\theta_t}(u_i)\right\rangle_{\nX},  
\label{GDalgor}
\end{align}
with $\alpha > 0$ being the stepsize, for some initialization $\theta_0 \in \Theta$ and   $\langle u,u'\rangle_{\nX}:=\frac{1}{\nX}\sum_{i=1}^{\nX}u(x_i)u'(x_i)$ denotes the empirical inner product.


\subsection{Neural Tangent Kernel}

For standard fully connected neural networks, the feature map of the NTK 
is defined via the gradient of the neural network at initialization. Similar we can define a feature map for neural operators by $\Phi^M: \cU \to \mathcal{F}(L^2(\mathcal{X},\mu_x),\Theta);\, \Phi^M(u) := \Phi^M_u,$ 

\[  \Phi_u^M(v) := \nabla_\theta\left\langle G_{\theta_0}(u),\,v\right\rangle_{L^2(\mu_x)} \;. \]


This map defines a $L^2(\mathcal{X},\mu_x)$-reproducing kernel in the sense of Definition \ref{def-vvk}, for 
any $u , u' \in \cU$ via 
\begin{align*}
 K_M(u, u') &=  (\Phi^M_u)^*  \Phi^M_{u'} = \sum_{p=1}^{P} \partial_pG_{\theta_{0}}(u)\otimes\partial_p G_{\theta_{0}}(u')\;\\
&=\frac{1}{M} \sum_{m=1}^{M} \sigma\left(\left\langle b_{m}^{(0) }, J(u)\right\rangle \right) \otimes\sigma\left(\left\langle b_{m}^{(0) }, J(u')\right\rangle\right)\,+\\
&\,\,\,\,\,\,\,\,\,\frac{\tau^2}{M} \sum_{m=1}^{M} \sum_{j=1}^{\tilde{d}}\sigma^{\prime}\left(\left\langle b_{m}^{(0) }, J(u)\right\rangle \right)J(u)^{(j)}\otimes \sigma^{\prime}\left(\left\langle b_{m}^{(0)}, J(u')\right \rangle \right)J(u')^{(j)},
\end{align*}
where we used for the parameter matrix at initialization the notations $a^{(0)}=(a_1^{(0)},\dots,a_M^{(0)})$ with $|a_j^{(0)}|=\tau,\,\,\forall j=1,\dots,M$, $\,B^{(0)}=(b_1^{(0)},\dots,b_M^{(0)})^\top\in\mathbb{R}^{M\times\tilde{d}}$ . $P=M\cdot(\tilde{d}+1)$ denotes the amount of parameters and for any $u\in\mathcal{U}$ we set $J(u)(x):=(A(u)(x),u(x),c(x))^\top$. Further we defined the tensor product with respect to the $L^2(\rho_x)$ scalar product, i.e. $(u\otimes v)(f):= u\cdot\langle v\,,\,f\rangle_{L^2_{\rho_x}}$ . The corresponding RKHS is given by 
\[ \mathcal{H}_{M}=
\left\{ H \in\, \mathcal{F}( L^2(\mathcal{X},\mu_x); L^2(\mathcal{X},\mu_x))\,\mid \,H (u)\,= \, 
\langle \tilde{\theta},  \nabla_{\theta} G_{\theta_{0}}(u)\tilde{\theta}\rangle,  \tilde{\theta} \in \Theta\right\},  \]
(\mycite{vvk2} Example 4). 


From \mycite{nguyen2024optimalconvergenceratesneural} Proposition 2.3 we know that for all 
$u, u'\in\mathcal{U}$ ,  $K_M(u,u')$ converges for $M \to \infty$ to the limiting neural tangent kernel
$K_\infty(u,u')\in\mathcal{C}(L^2(\mu_x),L^2(\mu_x))$  in Hilbert-Schmidt norm, where
\begin{align*}
K_\infty(u,u') &=\mathbb{E}_{\theta_{0}} \sigma\left(\left\langle b^{(0) }, 
J(u)\right\rangle \right) \otimes\sigma\left(\left\langle b^{(0) }, J(u')\right\rangle\right)\,+\\
&\,\,\,\,\,\,\,\,\,\tau^2\sum_{j=1}^{\tilde{d}}
\mathbb{E}_{\theta_{0}}\sigma^{\prime}\left(\left\langle b^{(0) }, J(u) 
\right\rangle \right)J(u)^{(j)}\otimes \sigma^{\prime}\left(\left\langle b^{(0)}, J(u')\right \rangle 
\right)J(u')^{(j)} \,.
\end{align*}

\subsection{Generalization Bound of NOs}

We aim for bounding the excess risk of our Neural Operator i.e. $\|G_{\theta_T} - g_\rho\|_{L^2_{\rho_u}}$, where $g_\rho$ denotes the regression operator, defined as 
\begin{align}
g_\rho(u):= \int_{\mathcal{V}} v(.) \rho(dv|u)  \label{targetfct}
\end{align}
and where the $L^2_{\rho_u}$ norm is defined as $\|G \|_{L^2_{\rho_u}} ^2:= \int_{\mathcal{U}} \|G(u)\|_{L^2_{\mu_x}}^2d \mu_x $.

The proof is based on a suitable error decomposition. To this end, 
we further introduce a linear Taylor approximation of the neural Operator around $\theta_{0}$ in $\cH_M$, $H_t(u)(x) :=  \left\langle\nabla (G_{\theta_{0}}(u)(x)), \theta_{t}-\theta_{0}\right\rangle_{\Theta},$ and define with $F_t^M \in \cH_M$ our kernel estimator with respect to the gradient descent spectral filtering function and the empirical vvNTK, $K_M$. Now we can decompose our excess risk as,

\begin{align}
\label{errordecomp2}
\|G_{\theta_T} - g_\rho\|_{L^2_{\mu_u}}
&\leq \underbrace{ \|G_{\theta_T}  - \mathcal{S}_M H_T\|_{L^2_{\rho_u}}}_{\mathcal{I}} + 
\underbrace{  \|\mathcal{S}_M (H_T - F_T^M )\|_{L^2_{\rho_u}}}_{\mathcal{II}}  + 
\underbrace{  \|\mathcal{S}_MF_T^M-  g_\rho\|_{L^2_{\rho_u}}}_{\mathcal{III}}  \;.
\end{align}
where $\mathcal{S}_M : \cH_M \hookrightarrow L^2(\cU , \mu_u)$ is the inclusion of $\cH_M$ into $L^2(\cU , \rho_u)$.

\vspace{0.3cm}

The first error $\mathcal{I}$ in \eqref{errordecomp2} describes an Taylor approximation error.  More precisely we use a Taylor expansion in $\theta_{t}$ around the initialization 
$\theta_0$. We choose the initialization $\theta_0$ such that $G_{\theta_{0}}=0$. Therefore, for any $x \in \mathcal{X}$ and $t \in [T]$, we have
\begin{align}
\label{eq:taylor}
G_{\theta_{t}}(u)(x)&= G_{\theta_{0}}(u)(x) + \mathcal{S}_M\left\langle\nabla (G_{\theta_{0}}(u)(x)), \theta_{t}-\theta_{0}\right\rangle_{\Theta} 
+ r_{(\theta_{t},\theta_{0})}(u)(x) \nonumber \\
&= \mathcal{S}_M H_t(u)(x)+r_{(\theta_{t},\theta_{0})}(u)(x) \; .  
\end{align} 
Here, $r_{(\theta_{t},\theta_{0})}(x)$ denotes the  Taylor remainder and can 
be uniformly bounded by 
\[\|G_{\theta_T}  - \mathcal{S}_M H_T\|_{L^2_{\rho_u}}\leq \| r_{(\theta_{t},\theta_{0})} \|_\infty \lesssim 
\; \frac{\|\theta_t-\theta_0\|_\Theta^3}{\sqrt{M}} \;, \] 
as  \mycite{nguyen2024optimalconvergenceratesneural} Proposition D.2 shows.

\vspace{0.3cm}

The second error term $\mathcal{II}$ in \eqref{errordecomp2} depends on the number of neurons $M$ and on the number of second stage samples $\nX$, see \mycite{nguyen2024optimalconvergenceratesneural} Theorem B.4. 
More precisely, we obtained 
\[  \|\mathcal{S}_M (H_T - F_T^M )\|_{L^2_{\rho_u}}  \lesssim \log(T)\|\theta_t-\theta_0\|_\Theta^3\left(\frac{1}{\sqrt{M}}+\frac{1}{\sqrt{\nX}}\right)\;,\] 
\vspace{0.3cm}

The last error term $\mathcal{III}$ \eqref{errordecomp2} describes the generalization error of KGD. This error was bounded in Theorem \ref{theo1} for $T\approx \nU^{\frac{1}{2r+b}}$  by 
\[ \|\mathcal{S}_MF_T^M-  g_\rho\|_{L^2_{\rho_u}}\lesssim \nU^{-\frac{r}{2r+b}} \;. \]

Hence, we obtain that the excess risk of the neural operator is bounded by the excess risk of the KGD estimator, provided that both the number of second-stage samples and the number of neurons are sufficiently large. More precisely, we obtain the following theorem. 

\begin{theorem}[\mycite{nguyen2024optimalconvergenceratesneural} (Theorem 3.5.)]
\label{maintheo}
Suppose Assumptions \ref{ass:neurons}, \ref{ass:input},   \ref{ass:source} and \ref{ass:dim} are satisfied. 
Assume further that $\alpha\in (0,\kappa^{-2})$ , $2r + b >1$, $\nU\geq n_0:= e^{\frac{2r+b}{2r+b-1}}$, 
\begin{align*}
T = C\nU^{\frac{1}{2r+b}}, \quad M\geq \tilde{C} B_T^6\log^2(\nU) T^{2r\vee1}\,,\quad \nX\geq \tilde{C}B_T^2 T^{2r} \log^2(T),
\end{align*}
and
\begin{equation}\label{weights}
   \forall \;\; t \in [T]\;:\; \;\; \| \theta_t -\theta_0 \|_\Theta \leq B_T \;,  
\end{equation}   
for some $B_T>0$. Then we have with probability at least $1-\delta$,\vspace{0.1cm}
\begin{align}
\label{eq:final-bound}
\| G_{\theta_T}- g_\rho\|_{L^2(\rho_u)}
&\leq \;  \bar{C} \; \nU^{-\frac{r}{2r+b}}\; \log^3(2/\delta ) \;,
\end{align}
\vspace{0.2cm}
with $C, \tilde{C}, \bar{C}>0$ independent of $\nU,\nX, M, T,B_T$.
\end{theorem}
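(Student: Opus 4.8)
The plan is to run the three–term error decomposition \eqref{errordecomp2},
\[ \|G_{\theta_T}-g_\rho\|_{L^2_{\rho_u}} \;\le\; \mathcal{I}+\mathcal{II}+\mathcal{III}\;, \]
and to show that, under the stated calibration of $T$, $M$ and $\nX$, each of the three terms is of order $\nU^{-r/(2r+b)}$ up to logarithmic factors and on an event of probability at least $1-\delta$. The three contributions are conceptually distinct and are handled by separate tools: $\mathcal{I}$ is the linearisation (Taylor) error from replacing the nonlinear operator $G_{\theta_T}$ by its NTK linearisation $\mathcal{S}_M H_T$; $\mathcal{II}$ is the gap between the linearised flow $H_T$ and the kernel gradient–descent estimator $F_T^M$ associated with the empirical vvNTK $K_M$, which depends on the width $M$ and on the number of collocation points $\nX$; and $\mathcal{III}$ is the genuine statistical error of kernel gradient descent with the finite–width kernel $K_M$, which is precisely what Theorem \ref{theo1} controls.

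For $\mathcal{I}$ and $\mathcal{II}$ I would invoke the quantitative estimates recalled in the text, namely $\mathcal{I}\lesssim \|\theta_T-\theta_0\|_\Theta^3/\sqrt M$ and $\mathcal{II}\lesssim \log(T)\,\|\theta_T-\theta_0\|_\Theta^3\big(1/\sqrt M+1/\sqrt{\nX}\big)$, whose proofs rest on Assumption \ref{ass:neurons} (boundedness of $\sigma,\sigma',\sigma''$ and Lipschitz continuity of $\sigma''$). Feeding in the a–priori weight bound \eqref{weights}, $\|\theta_t-\theta_0\|_\Theta\le B_T$ for all $t\le T$, and recalling $T\asymp \nU^{1/(2r+b)}$ so that the target order is $T^{-r}\asymp\nU^{-r/(2r+b)}$, one checks that requiring $\mathcal{I}\lesssim T^{-r}$ and $\mathcal{II}\lesssim T^{-r}$ translates into the lower bounds $M\gtrsim B_T^6 T^{2r}\log^2(\nU)$ and $\nX\gtrsim B_T^2 T^{2r}\log^2(T)$; these are exactly the hypotheses imposed on $M$ and $\nX$ (with $T^{2r}$ enlarged to $T^{2r\vee1}$ to also absorb the requirement coming from $\mathcal{III}$, see below).

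For $\mathcal{III}$ I would apply Theorem \ref{theo1} with $K_M$ in the role of the random–feature kernel — the $M$ neurons are the $M$ random features, and the limiting NTK $K_\infty$ plays the role of the limiting integral kernel — and with the Landweber/gradient–descent filter $\phi_{1/T}$, i.e. regularisation parameter $\lambda\asymp 1/T\asymp\nU^{-1/(2r+b)}$; the condition $\alpha\in(0,\kappa^{-2})$ is what makes gradient descent a valid spectral filtering method (Example 2). Assumptions \ref{ass:input}, \ref{ass:source} and \ref{ass:dim} are precisely those required there (with $n=\nU$), gradient descent has qualification $\nu=\infty\ge r\vee1$, and a short exponent comparison using $b\le1$ and $2r+b>1$ shows that $\tilde C B_T^6\log^2(\nU)\,T^{2r\vee1}$ dominates the width threshold demanded in Theorem \ref{theo1}. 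Hence Theorem \ref{theo1} gives $\mathcal{III}\lesssim \nU^{-r/(2r+b)}$ up to logarithmic factors in $1/\delta$, on an event of probability $\ge 1-\delta$. Combining the three bounds via a union bound, and using $\nU\ge n_0=e^{(2r+b)/(2r+b-1)}$ (inherited from Theorem \ref{theo1}, which also guarantees $\log(\nU)\ge1$ so the logarithmic factors fold into the constants), yields \eqref{eq:final-bound}.

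The step I expect to be the main obstacle is the control of $\mathcal{I}$ and $\mathcal{II}$: establishing the cubic–in–$\|\theta_T-\theta_0\|_\Theta$ Taylor remainder bound uniformly in $x\in\mathcal X$, and then propagating the linearisation error through the $T$ gradient steps in the presence of the two–stage empirical risk (with $\nU$ operator samples and $\nX$ collocation points) and of the extra operator transformation $A(u)$ in the architecture \eqref{Oclass}. Once those estimates are in hand, the rest of the argument is exponent bookkeeping — fixing $T\asymp\nU^{1/(2r+b)}$, then choosing $M$ and $\nX$ large enough to make $\mathcal{I},\mathcal{II}\lesssim\mathcal{III}$ — together with the invocation of Theorem \ref{theo1}. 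A secondary caveat is that the weight bound \eqref{weights} is assumed rather than proved here, so the statement is genuinely conditional on it.
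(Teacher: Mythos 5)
Your proposal follows essentially the same route as the paper, which itself only sketches this result via the decomposition \eqref{errordecomp2} and defers the bounds on $\mathcal{I}$ and $\mathcal{II}$ to Proposition D.2 and Theorem B.4 of the companion work while handling $\mathcal{III}$ by Theorem \ref{theo1}; your identification of the main obstacles (the uniform cubic Taylor remainder and its propagation through the two-stage gradient iteration) and of the conditional nature of \eqref{weights} matches the paper's treatment exactly. The only point worth double-checking in the bookkeeping is the power of $B_T$ in the $\nX$-threshold: a cubic dependence of $\mathcal{II}$ on $\|\theta_T-\theta_0\|_\Theta$ would naively suggest $B_T^6$ rather than the stated $B_T^2$, a discrepancy that is resolved only in the companion paper's detailed proof.
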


\paragraph{Discussion.}
In \mycite{nguyen2024optimalconvergenceratesneural} Theorem 3.7, it was shown that Assumption \ref{weights} is satisfied for $B_T\leq O(\log(T))$ with high probability. The above theorem thus implies that a two-layer neural operator achieves the optimal convergence rate. Notably, the number of neurons required is comparable to the number of random features needed to control the excess risk of the KGD estimator.

\section{Numerical Illustration}
\label{sec:numerics}

We analyze the behavior of kernel GD (algorithm \eqref{paramGD} for $\beta=0$) with the RF of the NTK kernel \eqref{NTK}. 
In our simulations we used $n=5000$ training and test data points from a standard normal distributed data set with input 
dimension $d=1$ and a subset of the SUSY\footnote{ https://archive.ics.uci.edu/ml/datasets/SUSY} classification data set with 
input dimension $d=14$. The measures we show in the following simulation are an average over 50 repetitions of the algorithm.  
Our theoretical analysis suggests that only a number of RF of the order of $M = O(\sqrt{n}\cdot d)$ suffices to gain optimal 
learning properties\footnote{ The linear factor of $d$ is hidden in the constants of our results and can be found in the proof section.}. 
Indeed in Figure \ref{F1} we can observe for both data sets that over a certain threshold of the order $M = O(\sqrt{n}\cdot d)$ and fixed $T$, 
increasing the number of RF does not improve the test error of our algorithm. 

\begin{figure}[h]
\centering
\includegraphics[width=0.3\columnwidth, height=0.23\textheight]{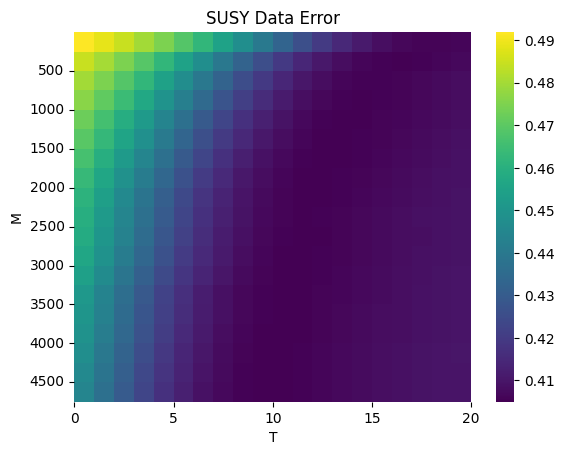}
\includegraphics[width=0.3\columnwidth, height=0.23\textheight]{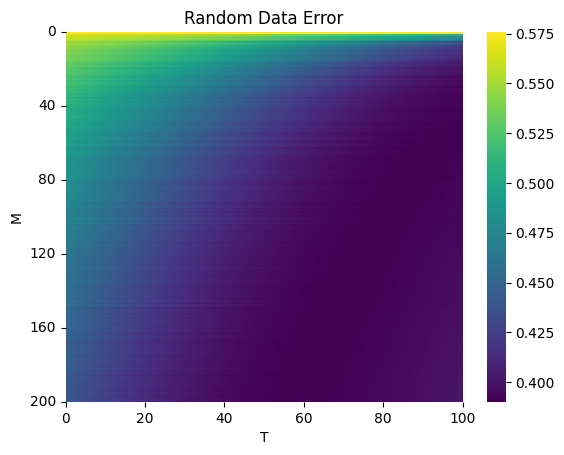}
\caption{Heat plot of the test-error for different numbers of RF $M$ and iterations $T$.\\ 
\label{F1}
{\bf Left:} Error of SUSY data set.
{\bf Right:} Error of random data set.}
\end{figure}

\section{Conclusion}

In this work we have unified two of the most influential acceleration paradigms for large‑scale non‑parametric learning, spectral regularization methods and random feature approximation, under a single operator‑theoretic lens. Our main theorem shows that, for any spectral algorithm, the excess risk of the random–feature estimator retains the optimal learning rate, provided that the number of random features scales only like $O(\sqrt{n}\log(n))$.

The analysis advances the state of the art in several directions:

\begin{itemize}
    \item \textbf{Generality.} Previous sharp results were limited to kernel‑ridge regression; we cover the full spectrum from explicit Tikhonov‑type schemes to implicit methods such as (accelerated) gradient descent and Heavy‑Ball, thereby closing the “algorithmic gap’’ between theory and practice for random‑feature models.
    \item \textbf{Beyond‑RKHS targets.} By formulating source conditions in terms of powers of the integral operator ... we allow target functions that lie \textit{outside} the RKHS but still satisfy $2r+b> 1$. This removes the saturation bottleneck inherent in prior RF analyses and yields minimax‑optimal rates for arbitrary smoothness $r>0$.
    \item \textbf{Vector‑ and operator‑valued kernels.} The framework seamlessly incorporates sums of integral kernels, including Neural Tangent and Neural Operator kernels that arise in modern deep architectures; the accompanying numerical study on NTK random features confirms the predicted $M=O(\sqrt{n})$ regime on both synthetic and real (SUSY) data.
\end{itemize}
Looking ahead, two questions remain open. First, determining the optimal feature complexity in the \textit{hard‑problem} regime $2r+b\leq 1$. Second, extending the theory to adaptive feature samplers or data‑dependent sketching and early stopping times not depending on the unknown smoothness parameters $r$ and $b$, could further reduce computational footprints. We hope the operator‑theoretic perspective offered here will serve as a blueprint for these and other advances in scalable kernel and neural‑operator learning.

\bibliography{bib_iteration}

\begin{thebibliography}{47}
\providecommand{\natexlab}[1]{#1}
\providecommand{\url}[1]{\texttt{#1}}
\expandafter\ifx\csname urlstyle\endcsname\relax
  \providecommand{\doi}[1]{doi: #1}\else
  \providecommand{\doi}{doi: \begingroup \urlstyle{rm}\Url}\fi

\bibitem[Aleksandrov and
  Peller(2009)]{aleksandrov2009operatorholderzygmundfunctions}
A.~B. Aleksandrov and V.~V. Peller.
\newblock Operator h\"older--zygmund functions, 2009.
\newblock URL \url{https://arxiv.org/abs/0907.3049}.

\bibitem[Allen-Zhu(2018)]{JMLR:v18:16-410}
Zeyuan Allen-Zhu.
\newblock Katyusha: The first direct acceleration of stochastic gradient
  methods.
\newblock \emph{Journal of Machine Learning Research}, 18\penalty0
  (221):\penalty0 1--51, 2018.
\newblock URL \url{http://jmlr.org/papers/v18/16-410.html}.

\bibitem[Blanchard and M{\"u}cke(2017)]{Muecke2017op.rates}
Gilles Blanchard and Nicole M{\"u}cke.
\newblock Optimal rates for regularization of statistical inverse learning
  problems.
\newblock \emph{Foundations of Computational Mathematics}, 18:\penalty0
  971--1013, 2017.

\bibitem[Caponnetto and De~Vito(2007)]{Caponetto}
A.~Caponnetto and Ernesto De~Vito.
\newblock Optimal rates for the regularized least-squares algorithm.
\newblock \emph{Foundations of Computational Mathematics}, 7:\penalty0
  331--368, 2007.

\bibitem[Carmeli et~al.(2008)Carmeli, Vito, and A.~Toigo]{vvk2}
C.~Carmeli, E.~De Vito, and V.~Umanità A.~Toigo.
\newblock Vector valued reproducing kernel hilbert spaces and universality,
  2008.

\bibitem[Carmeli et~al.(2005)Carmeli, Vito, and Toigo]{vvk1}
Claudio Carmeli, Ernesto~De Vito, and Alessandro Toigo.
\newblock Reproducing kernel hilbert spaces and mercer theorem, 2005.

\bibitem[Carratino et~al.(2019)Carratino, Rudi, and Rosasco]{SGDfeatures}
Luigi Carratino, Alessandro Rudi, and Lorenzo Rosasco.
\newblock Learning with sgd and random features, 2019.

\bibitem[Cortes et~al.(2010)Cortes, Mohri, and Talwalkar]{pmlr-v9-cortes10a}
Corinna Cortes, Mehryar Mohri, and Ameet Talwalkar.
\newblock On the impact of kernel approximation on learning accuracy.
\newblock In Yee~Whye Teh and Mike Titterington, editors, \emph{Proceedings of
  the Thirteenth International Conference on Artificial Intelligence and
  Statistics}, volume~9 of \emph{Proceedings of Machine Learning Research},
  pages 113--120, Chia Laguna Resort, Sardinia, Italy, 13--15 May 2010. PMLR.

\bibitem[Domingos(2020)]{domingos2020model}
Pedro Domingos.
\newblock Every model learned by gradient descent is approximately a kernel
  machine, 2020.

\bibitem[Drineas et~al.(2005)Drineas, Mahoney, and
  Cristianini]{drineas2005nystrom}
Petros Drineas, Michael~W Mahoney, and Nello Cristianini.
\newblock On the nystr{\"o}m method for approximating a gram matrix for
  improved kernel-based learning.
\newblock \emph{journal of machine learning research}, 6\penalty0 (12), 2005.

\bibitem[Engl et~al.(1996)Engl, Hanke, and Neubauer]{engl1996regularization}
Heinz~Werner Engl, Martin Hanke, and Andreas Neubauer.
\newblock \emph{Regularization of inverse problems}, volume 375.
\newblock Springer Science \& Business Media, 1996.

\bibitem[Gadat et~al.(2018)Gadat, Panloup, and Saadane]{10.1214/18-EJS1395}
S{\'e}bastien Gadat, Fabien Panloup, and Sofiane Saadane.
\newblock {Stochastic heavy ball}.
\newblock \emph{Electronic Journal of Statistics}, 12\penalty0 (1):\penalty0
  461 -- 529, 2018.
\newblock \doi{10.1214/18-EJS1395}.
\newblock URL \url{https://doi.org/10.1214/18-EJS1395}.

\bibitem[Gerfo et~al.(2008)Gerfo, Rosasco, Odone, Vito, and
  Verri]{10.1162/neco.2008.05-07-517}
L.~Lo Gerfo, L.~Rosasco, F.~Odone, E.~De Vito, and A.~Verri.
\newblock Spectral algorithms for supervised learning.
\newblock \emph{Neural Computation}, 20\penalty0 (7):\penalty0 1873--1897,
  2008.
\newblock ISSN 0899-7667.
\newblock \doi{10.1162/neco.2008.05-07-517}.
\newblock URL \url{https://doi.org/10.1162/neco.2008.05-07-517}.

\bibitem[Ghadimi et~al.(2015)Ghadimi, Feyzmahdavian, and Johansson]{7330562}
Euhanna Ghadimi, Hamid~Reza Feyzmahdavian, and Mikael Johansson.
\newblock Global convergence of the heavy-ball method for convex optimization.
\newblock In \emph{2015 European Control Conference (ECC)}, pages 310--315,
  2015.
\newblock \doi{10.1109/ECC.2015.7330562}.

\bibitem[Huang et~al.(2024)Huang, Nelsen, and Trautner]{huang2024operator}
Daniel~Zhengyu Huang, Nicholas~H. Nelsen, and Margaret Trautner.
\newblock An operator learning perspective on parameter-to-observable maps,
  2024.

\bibitem[Jacot et~al.(2018)Jacot, Hongler, and Gabriel]{jacot2018neural}
Arthur Jacot, Cl{\'e}ment Hongler, and Franck Gabriel.
\newblock Neural tangent kernel: Convergence and generalization in neural
  networks.
\newblock In \emph{NeurIPS}, 2018.

\bibitem[Kovachki et~al.(2023)Kovachki, Li, Liu, Azizzadenesheli, Bhattacharya,
  Stuart, and Anandkumar]{Kovachki2023NeuralOL}
Nikola~B. Kovachki, Zong-Yi Li, Burigede Liu, Kamyar Azizzadenesheli, Kaushik
  Bhattacharya, Andrew~M. Stuart, and Anima Anandkumar.
\newblock Neural operator: Learning maps between function spaces with
  applications to pdes.
\newblock \emph{J. Mach. Learn. Res.}, 24:\penalty0 89:1--89:97, 2023.
\newblock URL \url{https://api.semanticscholar.org/CorpusID:259149906}.

\bibitem[Kovachki et~al.(2024)Kovachki, Lanthaler, and
  Stuart]{kovachki2024operator}
Nikola~B. Kovachki, Samuel Lanthaler, and Andrew~M. Stuart.
\newblock Operator learning: Algorithms and analysis, 2024.

\bibitem[Lanthaler and Nelsen(2023)]{lanthaler2023error}
Samuel Lanthaler and Nicholas~H. Nelsen.
\newblock Error bounds for learning with vector-valued random features, 2023.

\bibitem[Li et~al.(2015)Li, Bi, Kwok, and Lu]{6918503}
Mu~Li, Wei Bi, James~T. Kwok, and Bao-Liang Lu.
\newblock Large-scale nyström kernel matrix approximation using randomized
  svd.
\newblock \emph{IEEE Transactions on Neural Networks and Learning Systems},
  26\penalty0 (1):\penalty0 152--164, 2015.

\bibitem[Li et~al.(2021{\natexlab{a}})Li, Nica, and Roy]{Li21}
Mufan~Bill Li, Mihai Nica, and Daniel~M. Roy.
\newblock The future is log-gaussian: Resnets and their
  infinite-depth-and-width limit at initialization, 2021{\natexlab{a}}.

\bibitem[Li et~al.(2021{\natexlab{b}})Li, Ton, Oglic, and
  Sejdinovic]{li2021unified}
Zhu Li, Jean-Francois Ton, Dino Oglic, and Dino Sejdinovic.
\newblock Towards a unified analysis of random fourier features,
  2021{\natexlab{b}}.

\bibitem[Lin and Cevher(2018)]{spectral.rates}
Junhong Lin and Volkan Cevher.
\newblock Optimal convergence for distributed learning with stochastic gradient
  methods and spectral algorithms, 2018.
\newblock URL \url{https://arxiv.org/abs/1801.07226}.

\bibitem[Lin et~al.(2020)Lin, Rudi, Rosasco, and Cevher]{Lin_2020}
Junhong Lin, Alessandro Rudi, Lorenzo Rosasco, and Volkan Cevher.
\newblock Optimal rates for spectral algorithms with least-squares regression
  over hilbert spaces.
\newblock \emph{Applied and Computational Harmonic Analysis}, 48\penalty0
  (3):\penalty0 868--890, 2020.

\bibitem[Munteanu et~al.(2022)Munteanu, Omlor, Song, and Woodruff]{Munteanu22}
Alexander Munteanu, Simon Omlor, Zhao Song, and David~P. Woodruff.
\newblock Bounding the width of neural networks via coupled initialization -- a
  worst case analysis, 2022.

\bibitem[Nguyen and Mücke(2023)]{nguyen2023neurons}
Mike Nguyen and Nicole Mücke.
\newblock How many neurons do we need? a refined analysis for shallow networks
  trained with gradient descent, 2023.

\bibitem[Nguyen and Mücke(2024)]{nguyen2024optimalconvergenceratesneural}
Mike Nguyen and Nicole Mücke.
\newblock Optimal convergence rates for neural operators, 2024.
\newblock URL \url{https://arxiv.org/abs/2412.17518}.

\bibitem[Nitanda and Suzuki(2020)]{nitanda2020optimal}
Atsushi Nitanda and Taiji Suzuki.
\newblock Optimal rates for averaged stochastic gradient descent under neural
  tangent kernel regime.
\newblock In \emph{International Conference on Learning Representations}.
  arXiv, 2020.

\bibitem[Oymak and Soltanolkotabi(2019)]{Oymak}
Samet Oymak and Mahdi Soltanolkotabi.
\newblock Towards moderate overparameterization: global convergence guarantees
  for training shallow neural networks, 2019.

\bibitem[Pagliana and Rosasco(2019)]{pagliana2019implicit}
Nicol{\`o} Pagliana and Lorenzo Rosasco.
\newblock Implicit regularization of accelerated methods in hilbert spaces.
\newblock \emph{Advances in Neural Information Processing Systems},
  32:\penalty0 14481--14491, 2019.

\bibitem[Pillaud-Vivien et~al.(2018)Pillaud-Vivien, Rudi, and
  Bach]{pillaudvivien2018statistical}
Loucas Pillaud-Vivien, Alessandro Rudi, and Francis Bach.
\newblock Statistical optimality of stochastic gradient descent on hard
  learning problems through multiple passes, 2018.

\bibitem[Rahimi and Recht(2007)]{NIPS2007_013a006f}
Ali Rahimi and Benjamin Recht.
\newblock Random features for large-scale kernel machines.
\newblock In \emph{Advances in Neural Information Processing Systems}. Curran
  Associates, Inc., 2007.

\bibitem[Rahimi and Recht(2008)]{NIPS2008_0efe3284}
Ali Rahimi and Benjamin Recht.
\newblock Weighted sums of random kitchen sinks: Replacing minimization with
  randomization in learning.
\newblock In D.~Koller, D.~Schuurmans, Y.~Bengio, and L.~Bottou, editors,
  \emph{Advances in Neural Information Processing Systems}, volume~21. Curran
  Associates, Inc., 2008.
\newblock URL
  \url{https://proceedings.neurips.cc/paper_files/paper/2008/file/0efe32849d230d7f53049ddc4a4b0c60-Paper.pdf}.

\bibitem[Roux et~al.(2012)Roux, Schmidt, and Bach]{NIPS2012_905056c1}
Nicolas Roux, Mark Schmidt, and Francis Bach.
\newblock A stochastic gradient method with an exponential convergence \_rate
  for finite training sets.
\newblock In F.~Pereira, C.J. Burges, L.~Bottou, and K.Q. Weinberger, editors,
  \emph{Advances in Neural Information Processing Systems}, volume~25. Curran
  Associates, Inc., 2012.
\newblock URL
  \url{https://proceedings.neurips.cc/paper_files/paper/2012/file/905056c1ac1dad141560467e0a99e1cf-Paper.pdf}.

\bibitem[Rudi and Rosasco(2016)]{features}
Alessandro Rudi and Lorenzo Rosasco.
\newblock Generalization properties of learning with random features, 2016.

\bibitem[Rudi et~al.(2016)Rudi, Camoriano, and Rosasco]{rudi2016more}
Alessandro Rudi, Raffaello Camoriano, and Lorenzo Rosasco.
\newblock Less is more: Nystroem computational regularization, 2016.

\bibitem[Schoelkopf and Smola(2002)]{kernellearning}
B.~Schoelkopf and A.~J. Smola.
\newblock \emph{Learning with Kernels, Support Vector Machines, Regularization,
  Optimization, and Beyond (Adaptive Computation and Machine Learning)}.
\newblock MIT Press, 2002.

\bibitem[Shao(2003)]{Shao_2003_book}
Jun Shao.
\newblock \emph{Mathematical Statistics}.
\newblock Springer-Verlag New York Inc, 2nd edition, 2003.

\bibitem[Steinwart and Christmann(2008{\natexlab{a}})]{Ingo}
Ingo Steinwart and Andreas Christmann.
\newblock \emph{Support vector machines}.
\newblock Springer Science \& Business Media, 2008{\natexlab{a}}.

\bibitem[Steinwart and Christmann(2008{\natexlab{b}})]{steinwart2008support}
Ingo Steinwart and Andreas Christmann.
\newblock \emph{Support vector machines}.
\newblock Springer Science \& Business Media, 2008{\natexlab{b}}.

\bibitem[Sterge and Sriperumbudur(2022)]{JMLR:v23:21-0766}
Nicholas Sterge and Bharath~K. Sriperumbudur.
\newblock Statistical optimality and computational efficiency of nystrom kernel
  pca.
\newblock \emph{Journal of Machine Learning Research}, 23\penalty0
  (337):\penalty0 1--32, 2022.
\newblock URL \url{http://jmlr.org/papers/v23/21-0766.html}.

\bibitem[Sterge et~al.(2020)Sterge, Sriperumbudur, Rosasco, and
  Rudi]{pmlr-v108-sterge20a}
Nicholas Sterge, Bharath Sriperumbudur, Lorenzo Rosasco, and Alessandro Rudi.
\newblock Gain with no pain: Efficiency of kernel-pca by nyström sampling.
\newblock In Silvia Chiappa and Roberto Calandra, editors, \emph{Proceedings of
  the Twenty Third International Conference on Artificial Intelligence and
  Statistics}, volume 108 of \emph{Proceedings of Machine Learning Research},
  pages 3642--3652. PMLR, 26--28 Aug 2020.

\bibitem[Tropp(2011)]{Tropp_2011}
Joel~A. Tropp.
\newblock User-friendly tail bounds for sums of random matrices.
\newblock \emph{Foundations of Computational Mathematics}, 12\penalty0
  (4):\penalty0 389--434, 2011.

\bibitem[Williams and Seeger(2000)]{NIPS2000_19de10ad}
Christopher Williams and Matthias Seeger.
\newblock Using the nystr\"{o}m method to speed up kernel machines.
\newblock In T.~Leen, T.~Dietterich, and V.~Tresp, editors, \emph{Advances in
  Neural Information Processing Systems}, volume~13. MIT Press, 2000.
\newblock URL
  \url{https://proceedings.neurips.cc/paper_files/paper/2000/file/19de10adbaa1b2ee13f77f679fa1483a-Paper.pdf}.

\bibitem[Xiao and Zhang(2014)]{doi:10.1137/140961791}
Lin Xiao and Tong Zhang.
\newblock A proximal stochastic gradient method with progressive variance
  reduction.
\newblock \emph{SIAM Journal on Optimization}, 24\penalty0 (4):\penalty0
  2057--2075, 2014.
\newblock \doi{10.1137/140961791}.

\bibitem[Zhang et~al.(2024)Zhang, Li, and
  Lin]{zhang2024optimalitymisspecifiedspectralalgorithms}
Haobo Zhang, Yicheng Li, and Qian Lin.
\newblock On the optimality of misspecified spectral algorithms, 2024.
\newblock URL \url{https://arxiv.org/abs/2303.14942}.

\bibitem[Zhen et~al.(2020)Zhen, Sun, Du, Xu, Yin, Shao, and
  Snoek]{pmlrv119zhen20a}
Xiantong Zhen, Haoliang Sun, Yingjun Du, Jun Xu, Yilong Yin, Ling Shao, and
  Cees Snoek.
\newblock Learning to learn kernels with variational random features, 2020.

\end{thebibliography}


\newpage

\appendix
\section{Appendix}
The proof section is organized as follows. In Appendix I we give the proofs of our main results, in Appendix II we prove some technical inequalities and Appendix III contains all the needed concentration inequalities. 

For the proofs we will use the following shortcut notations. For any Operator $A$ and $\lambda>0$ we set $A_\lambda:= A+\lambda I$ where $I$ denotes the identity operator and for any function $g$ we define the vector $\bar{g}=(g(x_1), \dots , g(x_n)) \in\mathbb{R}^n$.
 
\subsection{Appendix I}
\label{I}

First we start bounding the bias part of our excess risk \eqref{excessrisk}. 

\begin{proposition}
\label{mainprop2}
Given the Assumptions \ref{ass:input} ,\ref{ass:kernel} , \ref{ass:source}, \ref{ass:dim}, then we have for
\begin{align*}
M&\geq 
\begin{cases}
\frac{8 p\kappa^2 \beta_\infty}{\lambda}\vee C_{\delta,\kappa} & r\in\left(0,\frac{1}{2}\right)\\
\frac{(8 p\kappa^2 \beta_\infty)\vee C_1^{\frac{1}{r}}}{\lambda}\vee \frac{C_2}{\lambda^{1+b(2r-1)} } \vee C_{\delta,\kappa} & r\in\left[\frac{1}{2},1\right] \\
\frac{C_3}{\lambda^{2r}}\vee C_{\delta,\kappa} & r \in(1,\infty),\\
\end{cases}
\end{align*}
where  $C_1=2(4\kappa\log\frac{2}{\delta})^{2r-1}(8p\kappa^2\beta_\infty)^{1-r}$ ,  $C_2=4(4c_b\kappa^2\log\frac{2}{\delta})^{2r-1}(8p\kappa^2\beta_\infty)^{2-2r}$, \\ $C_3:= 4\kappa^4C_{\kappa,r}^2\log^2\frac{2}{\delta}$ ,  $C_{\delta,\kappa}= 8\kappa^4\|\mathcal{L}_\infty\|^{-1}\log^2 \frac{2}{\delta}$ , $\beta_\infty=\log \frac{4 \kappa^2(\mathcal{N}_{\mathcal{L}_\infty}(\lambda)+1)}{\delta\|\mathcal{L}_\infty\|}$, $C_{\kappa,r}$ from Proposition \ref{ineq1}, that the bias term can be bounded with probability at least $1-3\delta$ by 
\begin{align*}
\|g_\rho-\mathcal{S}_Mf_\lambda^*\|_{L^2(\rho_x)} \leq 3 R c_{r\vee 1}\lambda^{r}.
\end{align*}

\end{proposition}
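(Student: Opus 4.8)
The plan is to bound the bias term $\|g_\rho - \mathcal{S}_M f_\lambda^*\|_{L^2(\rho_x)}$ by splitting it into a "population" piece that would be present even for the true kernel $K_\infty$, plus an error coming from replacing $K_\infty$ by the random-feature kernel $K_M$. First I would use the source condition $g_\rho = \mathcal{L}_\infty^r h$ together with the identity $\mathcal{S}_M f_\lambda^* = \mathcal{S}_M \mathcal{S}_M^* \phi_\lambda(\mathcal{L}_M) g_\rho = \mathcal{L}_M \phi_\lambda(\mathcal{L}_M) g_\rho$, so that
\[
g_\rho - \mathcal{S}_M f_\lambda^* = \big(I - \mathcal{L}_M \phi_\lambda(\mathcal{L}_M)\big) g_\rho = r_\lambda(\mathcal{L}_M) g_\rho \;.
\]
Then $\|r_\lambda(\mathcal{L}_M) g_\rho\|_{L^2} = \|r_\lambda(\mathcal{L}_M)\mathcal{L}_\infty^r h\|_{L^2}$, and I would insert $\mathcal{L}_{M,\lambda}^r$ and its inverse to write this as $\|r_\lambda(\mathcal{L}_M)\mathcal{L}_{M,\lambda}^r \cdot \mathcal{L}_{M,\lambda}^{-r}\mathcal{L}_\infty^r h\|_{L^2}$. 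The first factor is controlled by the qualification property \eqref{c_r} (using $\nu \ge r \vee 1$), giving a bound of order $c_{r\vee 1}\lambda^{r}$; the second factor, $\|\mathcal{L}_{M,\lambda}^{-r}\mathcal{L}_\infty^r\|$, measures how well $\mathcal{L}_M$ approximates $\mathcal{L}_\infty$ and must be shown to be $O(1)$, which is where the lower bound on $M$ enters.

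The key technical step is therefore a perturbation estimate: controlling $\|\mathcal{L}_{M,\lambda}^{-s}\mathcal{L}_{\infty,\lambda}^{s}\|$ (and its adjoint/inverse variants) for the relevant exponents $s$. The standard route is to first bound $\|\mathcal{L}_{\infty,\lambda}^{-1/2}(\mathcal{L}_\infty - \mathcal{L}_M)\mathcal{L}_{\infty,\lambda}^{-1/2}\|$ — a "relative" perturbation — using a concentration inequality for the random-feature operators (a Bernstein-type bound in the relevant Hilbert space, as in Appendix III), where the effective dimension $\mathcal{N}_{\mathcal{L}_\infty}(\lambda)$ governs the variance and hence the quantity $\beta_\infty$. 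Once this relative perturbation is shown to be, say, $\le 1/2$ with probability $1-\delta$ provided $M \gtrsim \beta_\infty/\lambda$ (this is the $\frac{8p\kappa^2\beta_\infty}{\lambda}$ term), one uses operator-monotonicity / Cordes-type inequalities ($\|A^s B^{-s}\| \le \|AB^{-1}\|^s$ for $s \in [0,1]$, and a separate argument for $s>1$) to transfer this to the fractional powers appearing in the bias decomposition. The regime split $r \in (0,\tfrac12)$, $r\in[\tfrac12,1]$, $r\in(1,\infty)$ in the statement of $M$ is exactly an artifact of how many fractional-power transfers are needed and whether one crosses the $\tfrac12$ threshold where $\mathcal{L}_{\infty,\lambda}^{-1/2}$ becomes available "for free" from the source condition versus needing extra $\log\frac{2}{\delta}$ and $\mathcal{N}_{\mathcal{L}_\infty}(\lambda)\le c_b\lambda^{-b}$ factors (producing $C_1, C_2, C_3$).

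Concretely the steps I would carry out, in order: (1) rewrite the bias as $r_\lambda(\mathcal{L}_M)\mathcal{L}_\infty^r h$; (2) factor through $\mathcal{L}_{M,\lambda}^{\min(r,1)}$ and apply \eqref{c_r} with $q = r\vee 1$ to extract $c_{r\vee1}\lambda^r$ and $\|h\|\le R$; (3) reduce the remaining operator norm $\|\mathcal{L}_{M,\lambda}^{-r}\mathcal{L}_\infty^{r}\|$ to powers of $\|\mathcal{L}_{M,\lambda}^{-1/2}\mathcal{L}_{\infty,\lambda}^{1/2}\|$ via Cordes / interpolation, handling $r>1$ by splitting off integer parts; (4) invoke the concentration bound from Appendix III for $\|\mathcal{L}_{\infty,\lambda}^{-1/2}(\mathcal{L}_\infty-\mathcal{L}_M)\mathcal{L}_{\infty,\lambda}^{-1/2}\|$, showing it is below a fixed threshold on the stated event of probability $\ge 1-3\delta$ under the $M$-conditions (the three concentration events combined, plus possibly $C_{\delta,\kappa}$ ensuring $\mathcal{L}_M$ itself stays bounded away from the degenerate case); (5) collect constants so that the product of the three factors is $\le 3Rc_{r\vee1}\lambda^r$. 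The main obstacle I anticipate is step (3)–(4): getting the fractional-power perturbation bounds with clean enough constants that the case distinction in $M$ comes out exactly as stated, and correctly tracking which $\log\frac{2}{\delta}$ powers and which appearances of $\mathcal{N}_{\mathcal{L}_\infty}(\lambda)$ survive in each regime — this bookkeeping, rather than any single deep inequality, is what makes the proposition delicate.
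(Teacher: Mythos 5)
Your proposal follows essentially the same route as the paper: rewrite the bias as $R\,\|r_\lambda(\mathcal{L}_M)\mathcal{L}_\infty^r\|$ via the source condition, split off the qualification factor $c_{r\vee 1}\lambda^{r\vee 1}$ by inserting a power of $\mathcal{L}_{M,\lambda}$, and control the remaining comparison term $\|\mathcal{L}_{M,\lambda}^{-(r\vee1)}\mathcal{L}_{\infty,\lambda}^{r}\|$ through the relative perturbation bound on $\|\mathcal{L}_{\infty,\lambda}^{-1/2}(\mathcal{L}_M-\mathcal{L}_\infty)\mathcal{L}_{\infty,\lambda}^{-1/2}\|$ plus Cordes-type transfers, which is exactly the content of the paper's Proposition~\ref{OPbound7} and where the three-regime condition on $M$ and the probability $1-3\delta$ come from. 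The only cosmetic difference is that the paper inserts $\mathcal{L}_{M,\lambda}^{(r\vee 1)}$ rather than $\mathcal{L}_{M,\lambda}^{r}$, so for $r<1$ the factor $\lambda^{-(1-r)}$ sits in the operator-comparison term instead of the qualification term; the product is the same.
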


\begin{proof}
We use from Assumption \ref{ass:source} that  $g_\rho=\mathcal{L}_\infty^r h$  with $\|h\|_{L^2(\rho_x)}\leq R$  to obtain, 
\begin{align}
\|g_\rho-\mathcal{S}_Mf_\lambda^*\|_{L^2(\rho_x)} &= \left\|\left(\mathcal{L}_M\phi_\lambda(\mathcal{L}_M)-I\right)\mathcal{L}_\infty^r h\right\|_{L^2(\rho_x)} \leq R \left\|r_\lambda(\mathcal{L}_M)\mathcal{L}_\infty^r\right\|\, ,
\end{align}

where $r_\lambda$ denotes the residual polynomial from \eqref{residual}. For the last term we have 
\begin{align*}
R\left\| r_\lambda(\mathcal{L}_M) \mathcal{L}_\infty^r\right\|&\leq R \left\| r_\lambda(\mathcal{L}_M) \mathcal{L}_{M,\lambda}^{(r\vee 1)}\right\|\left\|\mathcal{L}_{M,\lambda}^{-(r\vee1)}\mathcal{L}_{\infty,\lambda}^{r}\right\|\\
&\leq 3 R c_{r\vee 1}\lambda^{r},
\end{align*}
where we used for the last inequality that from \eqref{c_r} we have $\left\| r_\lambda(\mathcal{L}_M) \mathcal{L}_{M,\lambda}^{(r\vee 1)}\right\|\leq c_{r\vee 1} \lambda^{(r\vee 1)}$ and from the conditions on $M$ we have from Proposition \ref{OPbound7}  $\left\|\mathcal{L}_{M,\lambda}^{-(r\vee1)}\mathcal{L}_{\infty,\lambda}^{r}\right\| \leq 3\lambda^{-(1-r)^+}$ with probability at least $1-3\delta$.

\end{proof}
Now we want to bound the variance term. To do so we first need the following technical proposition.

\begin{proposition}
\label{T2I} 
Given the Assumptions \ref{ass:input} ,\ref{ass:kernel} , \ref{ass:source}, \ref{ass:dim}, then we have for any $s\in[0,0.5]$, $\delta \in(0,\frac{1}{11})$ with probability at least $1-11\delta$,
\begin{align}\label{a)techniqineq}
&a) \,\,\,\left\|\Sigma_M^{\frac{1}{2}-s} \phi_\lambda(\widehat\Sigma_M) \widehat{\mathcal{S}}_{M}^{*}\left(y-\widehat{\mathcal{S}}_M f_\lambda^*\right)\right\|_{\mathcal{H}_M}\leq 12D\left(\log\frac{2}{\delta}+ R c_{r\vee 1}\right)\lambda^{r-s},\\[5pt]
&b)\,\,\,\, \left\|\Sigma_M^{\frac{1}{2}-s}r_\lambda(\widehat\Sigma_M)f_\lambda^*\right\|_{\mathcal{H}_M}\leq 12DR c_{\frac{1}{2}+r   }    \lambda^{r-s},\label{b)techniqineq}
\end{align}
as long as $\nu\geq r\vee1$ and
\vspace{-0.2cm}
\begin{align*}
M&\geq 
\begin{cases}
\frac{8 p\kappa^2 \beta_\infty}{\lambda}\vee C_{\delta,\kappa} & r\in\left(0,\frac{1}{2}\right)\\
\frac{(8 p\kappa^2 \beta_\infty)\vee C_1^{\frac{1}{r}}}{\lambda}\vee \frac{C_2}{\lambda^{1+b(2r-1)} } \vee C_{\delta,\kappa} & r\in\left[\frac{1}{2},1\right] \\
\frac{C_3}{\lambda^{2r}}\vee C_{\delta,\kappa} & r \in(1,\infty),\\
\end{cases}\\
 n&\geq 
\begin{cases}
\eta_1\vee\eta_2\vee\eta_3\vee\eta_4 & r\in(0,\frac{1}{2}],\\
\eta_1\vee\eta_2\vee\eta_3\vee\eta_4 \vee\eta_5 \vee\eta_6 & r>\frac{1}{2},
\end{cases}
\end{align*}

where  $C_1=2(4\kappa\log\frac{2}{\delta})^{2r-1}(8p\kappa^2\beta_\infty)^{1-r}$ ,  $C_2=4(4c_b\kappa^2\log\frac{2}{\delta})^{2r-1}(8p\kappa^2\beta_\infty)^{2-2r}$, \\ $C_3:= 4\kappa^4C_{\kappa,r}^2\log^2\frac{2}{\delta}$ ,  $C_{\delta,\kappa}= 8\kappa^4\|\mathcal{L}_\infty\|^{-1}\log^2 \frac{2}{\delta}$ , $\beta_\infty=\log \frac{4 \kappa^2(\mathcal{N}_{\mathcal{L}_\infty}(\lambda)+1)}{\delta\|\mathcal{L}_\infty\|}$, $C_{\kappa,r}$ from Proposition \ref{ineq1}, 
\begin{center}
$\begin{aligned}
&\eta_1:= \frac{8\kappa^2 \tilde\beta}{\lambda}, 
&& \eta_2:=\frac{8QZ\kappa}{\lambda^{r+\frac{1}{2}}}, \\
&\eta_3:=\frac{128Q^2\left(1+2\log\frac{2}{\delta}\right)\mathcal{N}_{\mathcal{L}_{\infty}}(\lambda)}{\lambda^{2r}}, &&\eta_4:=\frac{72 R^2 c_{r\vee 1}^2\left(Q^2+C_{\kappa,R,D}^2\right)}{\lambda^{2r+(1-2r)^+}},\\
&\eta_5 = \frac{100\kappa^2 \mathcal{N}_{\mathcal{L}_{\infty}}(\lambda)\log^3 \frac{2}{\delta}}{\lambda} , 
&&\eta_6= \frac{8C_{\kappa,r}^2 \kappa^4 \log^2\frac{2}{\delta}}{\lambda^{2r}}
\end{aligned}$ \\\vspace{0.2cm}
\end{center}
and  $\tilde{\beta}:= \log \frac{4 \kappa^2(\left(1+2\log\frac{2}{\delta}\right)4\mathcal{N}_{\mathcal{L}_{\infty}}(\lambda)+1)}{\delta\|\mathcal{L}_\infty\|}$,  $C_{\kappa,R,D}=2 \kappa^{2r+1} R D$.

\end{proposition}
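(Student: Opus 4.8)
Both estimates fit a single template: the left–hand side has the form $\|\Sigma_M^{\frac{1}{2}-s}\,\psi(\widehat\Sigma_M)\,\xi\|_{\mathcal H_M}$, with $\psi=\phi_\lambda$ and $\xi=\widehat{\mathcal S}_M^{*}(y-\widehat{\mathcal S}_M f_\lambda^{*})$ in part a), and $\psi=r_\lambda$, $\xi=f_\lambda^{*}=\phi_\lambda(\Sigma_M)\mathcal S_M^{*}g_\rho$ in part b). The plan is to (1) pass from empirical to population half–powers of the covariance operator, (2) isolate a purely spectral factor controlled by properties (i)--(iii) and the qualification \eqref{c_r}, and (3) estimate the remaining ``data factor'' either by a Hilbert–space Bernstein inequality (the noise part) or by the source condition \eqref{hsource} together with operator–comparison bounds. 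For step (1), the key point is that on an event of probability at least $1-3\delta$, under the stated conditions on $M$ and $n$, all basic operator ratios are $O(1)$: by operator monotonicity of $t\mapsto t^{a}$ on $[0,1]$ one gets $\|\Sigma_M^{a}\widehat\Sigma_{M,\lambda}^{-a}\|\le\|\Sigma_{M,\lambda}^{\frac12}\widehat\Sigma_{M,\lambda}^{-\frac12}\|^{2a}$ and symmetrically for $\|\widehat\Sigma_{M,\lambda}^{a}\Sigma_{M,\lambda}^{-a}\|$, with $\|\Sigma_{M,\lambda}^{\frac12}\widehat\Sigma_{M,\lambda}^{-\frac12}\|\le\sqrt 2$ from the concentration results of Appendix III; simultaneously, Propositions \ref{OPbound7} and \ref{ineq1} supply the random–feature versus limit comparisons $\mathcal N_{\mathcal L_M}(\lambda)\lesssim\mathcal N_{\mathcal L_\infty}(\lambda)$ and $\|\mathcal L_{M,\lambda}^{-a}\mathcal L_{\infty,\lambda}^{r}\|\lesssim\lambda^{-(1-r)^{+}}$. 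It is exactly here that the three feature–count regimes $r\in(0,\tfrac12)$, $r\in[\tfrac12,1]$, $r>1$ and the constants $C_1,C_2,C_3,C_{\delta,\kappa},\beta_\infty$ enter, and this part is essentially the same computation as in the proof of Proposition \ref{mainprop2}.

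For step (2), I would use $\sup_{0<t\le\kappa^{2}}|t\phi_\lambda(t)|\le D$, $\sup|\phi_\lambda(t)|\le E/\lambda$, and \eqref{c_r} (legitimate since $\nu\ge r\vee1$, resp.\ at the appropriate qualification level for part b)) to bound the purely spectral factors, e.g.\ $\|\widehat\Sigma_{M,\lambda}^{\frac12-s}\phi_\lambda(\widehat\Sigma_M)\widehat\Sigma_{M,\lambda}^{\frac12}\|=\sup_{0<t\le\kappa^{2}}(t+\lambda)^{1-s}|\phi_\lambda(t)|\le 2D\lambda^{-s}$ and $\|r_\lambda(\widehat\Sigma_M)\widehat\Sigma_{M,\lambda}^{q}\|\lesssim\lambda^{q}$. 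For part a) I split $y-\widehat{\mathcal S}_M f_\lambda^{*}=(y-\bar{g}_\rho)+\overline{r_\lambda(\mathcal L_M)g_\rho}$, using $g_\rho-\mathcal S_M f_\lambda^{*}=r_\lambda(\mathcal L_M)g_\rho$. On the noise summand I factor $\Sigma_M^{\frac12-s}\phi_\lambda(\widehat\Sigma_M)\widehat{\mathcal S}_M^{*}=[\Sigma_M^{\frac12-s}\widehat\Sigma_{M,\lambda}^{-(\frac12-s)}][\widehat\Sigma_{M,\lambda}^{\frac12-s}\phi_\lambda(\widehat\Sigma_M)\widehat\Sigma_{M,\lambda}^{\frac12}][\widehat\Sigma_{M,\lambda}^{-\frac12}\Sigma_{M,\lambda}^{\frac12}][\Sigma_{M,\lambda}^{-\frac12}\widehat{\mathcal S}_M^{*}]$; the first three factors are $O(D\lambda^{-s})$ by steps (1)--(2), and the last, applied to $(y_j-g_\rho(x_j))_j$, is handled by a Bernstein inequality in $\mathcal H_M$ for the conditionally centered sum $\tfrac{1}{n}\sum_j\Sigma_{M,\lambda}^{-1/2}K_{M,x_j}(y_j-g_\rho(x_j))$, whose variance is bounded by $\tfrac{Q^2}{n}\mathcal N_{\mathcal L_M}(\lambda)$ and whose almost–sure per–term size is $\lesssim Z\kappa\lambda^{-1/2}$, both via Assumption \ref{ass:input} and $\|\Sigma_{M,\lambda}^{-1/2}K_{M,x}\|^{2}\le\kappa^{2}/\lambda$. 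Under $n\ge\eta_2\vee\eta_3$ this factor is $\lesssim\lambda^{r}\log\tfrac{2}{\delta}$, which reproduces the $D(\log\tfrac{2}{\delta})\lambda^{r-s}$ contribution in a).

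For the approximation summand $\overline{r_\lambda(\mathcal L_M)g_\rho}$ in a), and for $r_\lambda(\widehat\Sigma_M)f_\lambda^{*}=r_\lambda(\widehat\Sigma_M)\phi_\lambda(\Sigma_M)\mathcal S_M^{*}g_\rho$ in b), I would first pass from the empirical sampling operator $\widehat{\mathcal S}_M^{*}\overline{(\cdot)}$ to the population one $\mathcal S_M^{*}(\cdot)$ (and from $\mathcal L_M$ to its empirical counterpart where needed), paying the concentration cost controlled by $\eta_4$ and a factor $Rc_{r\vee1}$; then, using the intertwining $\mathcal S_M^{*}r_\lambda(\mathcal L_M)=r_\lambda(\Sigma_M)\mathcal S_M^{*}$, insert $\widehat\Sigma_{M,\lambda}$–/$\Sigma_{M,\lambda}$–powers, extract $c_{r\vee1}\lambda^{r\vee1}$ (resp.\ $c_{\frac{1}{2}+r}\lambda^{\frac12+r}$) from the residual bound of step (2), and control the leftover function factor $\|\Sigma_{M,\lambda}^{-q}\mathcal S_M^{*}g_\rho\|$ from the source condition $g_\rho=\mathcal L_\infty^{r}h$, $\|h\|_{L^2(\rho_x)}\le R$, together with the operator–ratio estimates of step (1). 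Collecting all powers of $\lambda$ collapses everything to $\lambda^{r-s}$ with constant $12DRc_{r\vee1}$ in a) and $12DRc_{\frac{1}{2}+r}$ in b); a final union bound over the $\approx3\delta$ events of step (1) and the $\approx8\delta$ events from the Bernstein step and the sampling–concentration step yields probability at least $1-11\delta$.

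The main obstacle I anticipate is step (1): controlling the random–feature operators $\Sigma_M,\mathcal L_M$ against the limits $\Sigma_\infty,\mathcal L_\infty$ in the correct $\lambda$–weighted operator norm, because the source and effective–dimension assumptions are formulated for $\mathcal L_\infty$ whereas the estimator and the filter $\psi(\widehat\Sigma_M)$ live in $\mathcal H_M$. Making the feature–count thresholds split cleanly into the three $r$–regimes, and keeping the numerous factors of $\sqrt2$, $2$, $3$ explicit enough to land exactly on $12D(\log\tfrac{2}{\delta}+Rc_{r\vee1})\lambda^{r-s}$ and $12DRc_{\frac{1}{2}+r}\lambda^{r-s}$, is the delicate bookkeeping; by contrast steps (2)--(3) are the now–standard spectral–filtering plus Bernstein arguments, adapted from \cite{Muecke2017op.rates,features}.
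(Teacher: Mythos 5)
Your proposal follows essentially the same route as the paper's proof: the factorization through $\Sigma_{M,\lambda}^{\pm 1/2}$ and $\widehat\Sigma_{M,\lambda}^{\pm 1/2}$ with the $O(1)$ operator-ratio bounds, the split of $y-\widehat{\mathcal S}_M f_\lambda^*$ into the conditionally centered noise (handled by the Hilbert-space Bernstein inequality with variance $\kappa^2\mathcal N_{\mathcal L_M}(\lambda)$ and the $\mathcal N_{\mathcal L_M}\lesssim\mathcal N_{\mathcal L_\infty}$ comparison under $n\ge\eta_2\vee\eta_3$) plus the approximation part (handled via the empirical-versus-population $L^2$ concentration under $\eta_4$ and the bias bound $3Rc_{r\vee1}\lambda^r$), and for part b) the qualification bound $\|\widehat\Sigma_{M,\lambda}^{1/2}r_\lambda(\widehat\Sigma_M)\widehat\Sigma_{M,\lambda}^{q}\|\lesssim\lambda^{1/2+q}$ combined with $\|\mathcal L_{M,\lambda}^{-(r\vee1)}\mathcal L_{\infty,\lambda}^{r}\|\lesssim\lambda^{-(1-r)^+}$, exactly mirroring the paper's case split at $r=1/2$. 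The remaining work is only the constant bookkeeping you already flag, so the plan is sound.
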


\begin{proof}
\textbf{$a)$} We start with the following decomposition

\begin{align}
\nonumber \Biggl\|\Sigma_M^{\frac{1}{2}-s}& \phi_\lambda(\widehat\Sigma_M) \widehat{\mathcal{S}}_{M}^{*}\left(y-\widehat{\mathcal{S}}_M f_\lambda^*\right)\Biggr\|_{L^2(\rho_x)} \\
\nonumber \leq& \Biggl\|\Sigma_M^{\frac{1}{2}-s}\phi_\lambda(\widehat\Sigma_M) \Sigma_{M,\lambda}^{\frac{1}{2}}\Biggr\|\left\|\Sigma_{M,\lambda}^{-\frac{1}{2}}\widehat{\mathcal{S}}_{M}^{*}\left(y-\widehat{\mathcal{S}}_M f_\lambda^*\right)\right\|_{\mathcal{H}_M} \\
=& I\cdot II. \label{I.II}
\end{align}

\begin{itemize}

\item[$I)$] From Proposition \ref{OPbound6} we have with probability at least $1-4\delta$, \\$\left\|\widehat{\Sigma}_{M,\lambda}^{-\frac{1}{2}}\Sigma_{M,\lambda}^{\frac{1}{2}}\right\| \leq 2$ . Using this inequality we obtain
\begin{align*}
\Biggl\|\Sigma_M^{\frac{1}{2}-s} \phi_\lambda(\widehat\Sigma_M) \Sigma_{M,\lambda}^{\frac{1}{2}}\Biggr\|&\leq \lambda^{-s}\Biggl\|\Sigma_{M,\lambda}^{\frac{1}{2}} \phi_\lambda(\widehat\Sigma_M) \Sigma_{M,\lambda}^{\frac{1}{2}}\Biggr\|\\
&\leq \lambda^{-s}\Biggl\|\Sigma_{M,\lambda}^{\frac{1}{2}} \phi_\lambda(\widehat\Sigma_M) \Sigma_{M,\lambda}^{\frac{1}{2}}\Biggr\|\\
&\leq \lambda^{-s}\Biggl\|\widehat\Sigma_{M,\lambda} \phi_\lambda(\widehat\Sigma_M) \Biggr\|\left\|\Sigma_{M,\lambda}^{\frac{1}{2}} \widehat{\Sigma}_{M,\lambda}^{-\frac{1}{2}}  \right\|^2\leq \lambda^{-s}4 D,
\end{align*}
where $D$ is defined in \eqref{def.phi}.
\item[$II)$] For the second term we have 

\begin{align*}
&\left\|\Sigma_{M,\lambda}^{-\frac{1}{2}}\widehat{\mathcal{S}}_{M}^{*}\left(y-\widehat{\mathcal{S}}_M f_\lambda^*\right)\right\|_{\mathcal{H}_M}\\
&\leq \left\|\Sigma_{M,\lambda}^{-\frac{1}{2}}\widehat{\mathcal{S}}_{M}^{*}\left(y-\bar{g}_\rho\right)\right\|_{\mathcal{H}_M}+\left\|\Sigma_{M,\lambda}^{-\frac{1}{2}}\widehat{\mathcal{S}}_{M}^{*}\left(\bar{g}_\rho-\widehat{\mathcal{S}}_M f_\lambda^*\right)\right\|_{\mathcal{H}_M}\\
&:= i+ii
\end{align*}
For the first norm $i)$ we use the bound of  Proposition \ref{concentrationineq1} together with the bound of \ref{prop:effecdim2}:
$$
\mathcal{N}_{\mathcal{L}_{M}}(\lambda)\leq  \left(1+2\log\frac{2}{\delta}\right)4\mathcal{N}_{\mathcal{L}_{\infty}}(\lambda),
$$
to obtain with probability at least $1-3\delta$,
\begin{align*}
\left\|\Sigma_{M,\lambda}^{-\frac{1}{2}}\widehat{\mathcal{S}}_{M}^{*}\left(y-\bar{g}_\rho\right)\right\|_{\mathcal{H}_M}&\leq \left(\frac{4QZ\kappa}{\sqrt{\lambda}n}+\frac{4Q\sqrt{\mathcal{N}_{\mathcal{L}_M}(\lambda)}}{\sqrt{n}}\right) \log \frac{2}{\delta}\\
&\leq\left(\frac{4QZ\kappa}{\sqrt{\lambda}n}+\frac{8Q\sqrt{\left(1+2\log\frac{2}{\delta}\right)\mathcal{N}_{\mathcal{L}_{\infty}}(\lambda)}}{\sqrt{n}}\right) \log \frac{2}{\delta}\\
&\leq \lambda^{r}\log \frac{2}{\delta},
\end{align*}
where we used in the last inequality that $n\geq\eta_2\vee\eta_3 :=\frac{8QZ\kappa}{\lambda^{r+\frac{1}{2}}}\vee\frac{128Q^2\left(1+2\log\frac{2}{\delta}\right)\mathcal{N}_{\mathcal{L}_{\infty}}(\lambda)}{\lambda^{2r}}$.

For the second norm $ii)$ we first use that
\begin{align*}
\left\|\Sigma_{M,\lambda}^{-\frac{1}{2}}\widehat{\mathcal{S}}_{M}^{*}\right\|^2&=
\left\|(\widehat{\mathcal{S}}_M^*\widehat{\mathcal{S}}_M+\lambda)^{-1/2}\widehat{\mathcal{S}}_M^*\right\|^2\\
&=\left\|(\widehat{\mathcal{S}}_M^*\widehat{\mathcal{S}}_M+\lambda)^{-1/2}\widehat{\mathcal{S}}_M^*\widehat{\mathcal{S}}_M(\widehat{\mathcal{S}}_M^*\widehat{\mathcal{S}}_M+\lambda)^{-1/2}\right\|\\
&=\left\|\widehat{\mathcal{S}}_M^*\widehat{\mathcal{S}}_M(\widehat{\mathcal{S}}_M^*\widehat{\mathcal{S}}_M+\lambda)^{-1}\right\|\leq1,
\end{align*}
to obtain together with the bound of Proposition \ref{concentrationineq2}, that with probability at least $1-\delta$,
\begin{align*}
&\left\|\Sigma_{M,\lambda}^{-\frac{1}{2}}\widehat{\mathcal{S}}_{M}^{*}\left(\bar{g}_\rho-\widehat{\mathcal{S}}_M f_\lambda^*\right)\right\|_{\mathcal{H}_M}\\
&\leq \frac{1}{\sqrt{n}}\left\|\bar{g}_\rho-\widehat{\mathcal{S}}_M f_\lambda^*\right\|_2\\
&\leq \sqrt{\left|\frac{1}{n}\left\|\bar{g}_\rho-\widehat{\mathcal{S}}_M f_\lambda^*\right\|_2^2-\left\|g_\rho-\mathcal{S}_M f_\lambda^*\right\|_{L^2(\rho_x)}^2\right|}+\left\|g_\rho-\mathcal{S}_M f_\lambda^*\right\|_{L^2(\rho_x)}\\
&\leq \sqrt{2\left(\frac{4\left(Q^2+ C_{\kappa,R,D}^2 \,\lambda^{-2(\frac{1}{2}-r)^+}\right)}{n}+\frac{\sqrt{2}\left(Q+C_{\kappa,R,D}  \,\lambda^{-(\frac{1}{2}-r)^+}\right)\left\|g_\rho-\mathcal{S}_M f_\lambda^*\right\|_{L^2(\rho_x)}}{\sqrt{n}}\right) \log \frac{2}{\delta}}\,\,+\\[5pt]
&\,\,\,\,\,\,\,\,\,\,\,\left\|g_\rho-\mathcal{S}_M f_\lambda^*\right\|_{L^2(\rho_x)}.
\end{align*}

From Proposition \ref{mainprop2} we further obtain with probability at least $1-3\delta$,
\begin{align*}
&\left\|\Sigma_{M,\lambda}^{-\frac{1}{2}}\widehat{\mathcal{S}}_{M}^{*}\left(\bar{g}_\rho-\widehat{\mathcal{S}}_M f_\lambda^*\right)\right\|_{\mathcal{H}_M}\\
&\leq \sqrt{2\left(\frac{4\left(Q^2+ C_{\kappa,R,D}^2 \,\lambda^{-(1-2r)^+}\right)}{n}+\frac{\sqrt{2}\left(Q+C_{\kappa,R,D}  \,\lambda^{-(\frac{1}{2}-r)^+}\right)3 R c_{r\vee 1}\lambda^{r}}{\sqrt{n}}\right) \log \frac{2}{\delta}}+3 R c_{r\vee 1}\lambda^{r}\\
&\leq \lambda^{r} \left(\sqrt{\log\frac{2}{\delta}}+3 R c_{r\vee 1}\right),
\end{align*}
where we used in the last inequality that $n\geq \eta_4:=\frac{72 R^2 c_{r\vee 1}^2\left(Q^2+C_{\kappa,R,D}^2\right)}{\lambda^{2r+(1-2r)^+}}$.
Therefore we have for the second term
$$
II \leq \lambda^{r} \left(\log \frac{2}{\delta}+\sqrt{\log\frac{2}{\delta}}+3 R c_{r\vee 1}\right)
$$
\end{itemize}
Plugging the bounds of $I$ and $II$ in \eqref{I.II} proves the inequality from the statement. Collecting all the concentration inequalities in this proof and applying Proposition \ref{conditioning}, we obtain that the inequality \eqref{a)techniqineq} holds true with probability at least $1-11\delta$.\\

\textbf{$b)$} Using Mercers theorem (see for example \cite{steinwart2008support}) we have with probability at least $1-4\delta$,
\begin{align*}
\left\|\Sigma_M^{\frac{1}{2}-s} r_\lambda(\widehat\Sigma_M)f_\lambda^*\right\|_{\mathcal{H}_M}&\leq\lambda^{-s}\left\| \Sigma_{M,\lambda}^{\frac{1}{2}} r_\lambda(\widehat\Sigma_M)f_\lambda^*\right\|_{\mathcal{H}_M}\\
&\leq \lambda^{-s}\left\|\widehat{\Sigma}_{M,\lambda}^{-\frac{1}{2}}\Sigma_{M,\lambda}^{\frac{1}{2}}\right\| \left\| \widehat\Sigma_{M,\lambda}^{\frac{1}{2}} r_\lambda(\widehat\Sigma_M)f_\lambda^*\right\|_{\mathcal{H}_M}\\
&\leq 2 \lambda^{-s} \left\| \widehat\Sigma_{M,\lambda}^{\frac{1}{2}} r_\lambda(\widehat\Sigma_M)f_\lambda^*\right\|_{\mathcal{H}_M},
\end{align*}
where we used Proposition \ref{OPbound6} for the last inequality. 
To continue we write out the definition of $f_\lambda^*$ to obtain
\begin{align}
\nonumber&2 \lambda^{-s}\left\| \widehat\Sigma_{M,\lambda}^{\frac{1}{2}} r_\lambda(\widehat\Sigma_M)f_\lambda^*\right\|_{\mathcal{H}_M}\\
&\leq 2 R \lambda^{-s} \left\|\widehat\Sigma_{M,\lambda}^{\frac{1}{2}} r_\lambda(\widehat\Sigma_M) \mathcal{S}_M^* \phi(\mathcal{L}_M) \mathcal{L}_\infty^r\right\|.\label{casesT2}
\end{align}
To bound the last term we need to differ between the following two cases. 
\begin{itemize}

\item CASE ($r\leq \frac{1}{2}$) : To bound the norm of \eqref{casesT2} for $r\leq\frac{1}{2}$ we start with 
\begin{align*}
&\left\|\widehat\Sigma_{M,\lambda}^{\frac{1}{2}} r_\lambda(\widehat\Sigma_M) \mathcal{S}_M^* \phi_\lambda(\mathcal{L}_M) \mathcal{L}_\infty^r\right\|\\
&\leq\left\|\widehat\Sigma_{M,\lambda}^{\frac{1}{2}} r_\lambda(\widehat\Sigma_M) \mathcal{S}_M^* \phi_\lambda(\mathcal{L}_M) \mathcal{L}_{M,\lambda}^{r}\right\|\left\|\mathcal{L}_{M,\lambda}^{-r}\mathcal{L}_{\infty,\lambda}^{r}\right\|\\
&= \left\|\widehat\Sigma_{M,\lambda}^{\frac{1}{2}} r_\lambda(\widehat\Sigma_M) \Sigma_{M,\lambda}^{r}\mathcal{S}_M^* \phi_\lambda(\mathcal{L}_M) \right\|\left\|\mathcal{L}_{M,\lambda}^{-r}\mathcal{L}_{\infty,\lambda}^{r}\right\|.\\
&\leq  \left\|\widehat\Sigma_{M,\lambda}^{\frac{1}{2}} r_\lambda(\widehat\Sigma_M) \Sigma_{M,\lambda}^{r}\right\| \left\| \mathcal{L}_M^{\frac{1}{2}}\phi_\lambda(\mathcal{L}_M) \right\|\left\|\mathcal{L}_{M,\lambda}^{-r}\mathcal{L}_{\infty,\lambda}^{r}\right\|.
\end{align*}

From Proposition \ref{ineqvolkan} we have $ \left\| \mathcal{L}_M^{\frac{1}{2}}\phi_\lambda(\mathcal{L}_M) \right\| \leq D \lambda^{-\frac{1}{2}}$ and from Proposition \ref{ineq2} together with \ref{OPbound5} we have $\left\|\mathcal{L}_{M,\lambda}^{-r}\mathcal{L}_{\infty,\lambda}^{r}\right\|\leq\left\|\mathcal{L}_{M,\lambda}^{-\frac{1}{2}}\mathcal{L}_{\infty,\lambda}^{\frac{1}{2}}\right\|^{2r}\leq 2^{2r}\leq 2$ with probability at least $1-4\delta$.  Using those bounds we obtain for \eqref{casesT2}

\begin{align}
\left\|\Sigma_M^{\frac{1}{2}-s}r_\lambda(\widehat\Sigma_M)f_\lambda^*\right\|_{\mathcal{H}_M}\leq 4DR \lambda^{-s-\frac{1}{2}}\left\|\widehat\Sigma_{M,\lambda}^{\frac{1}{2}} r_\lambda(\widehat\Sigma_M) \Sigma_{M,\lambda}^{r}\right\|. \label{TIIcase1}
\end{align}

It remains to bound $\left\|\widehat\Sigma_{M,\lambda}^{\frac{1}{2}} r_\lambda(\widehat\Sigma_M) \Sigma_{M,\lambda}^{r}\right\|$ . Using again Proposition \ref{OPbound6} we have that $\left\|\widehat{\Sigma}_{M,\lambda}^{-r} \Sigma_{M,\lambda}^{r}\right\|\leq \left\|\widehat{\Sigma}_{M,\lambda}^{-\frac{1}{2}} \Sigma_{M,\lambda}^{\frac{1}{2}}\right\|^{2r}\leq 2$. From this bound together with \eqref{c_r}
we obtain 
\begin{align*}
\left\|\widehat\Sigma_{M,\lambda}^{\frac{1}{2}} r_\lambda(\widehat\Sigma_M) \Sigma_{M,\lambda}^{r}\right\|&\leq \left\|\widehat\Sigma_{M,\lambda}^{\frac{1}{2}} r_\lambda(\widehat\Sigma_M) \widehat{\Sigma}_{M,\lambda}^{r}\right\|\left\|\widehat{\Sigma}_{M,\lambda}^{-r} \Sigma_{M,\lambda}^{r}\right\| \\
&\leq c_{\frac{1}{2}+r} \lambda^{\frac{1}{2}+r} \left\|\widehat{\Sigma}_{M,\lambda}^{-r} \Sigma_{M,\lambda}^{r}\right\|\\
&\leq 2c_{\frac{1}{2}+r} \lambda^{\frac{1}{2}+r} 
\end{align*}

Plugging the above bound into \eqref{TIIcase1} gives
\begin{align*}
\left\|\Sigma_M^{\frac{1}{2}-s} r_\lambda(\widehat\Sigma_M)f_\lambda^*\right\|_{\mathcal{H}_M}\leq 8DR c_{\frac{1}{2}+r   }     \lambda^{r-s}.
\end{align*}

\item CASE ($r>\frac{1}{2}$) :  To bound the norm of \eqref{casesT2} for $r>\frac{1}{2}$ we start similar with 
\begin{align*}
&\left\|\widehat\Sigma_{M,\lambda}^{\frac{1}{2}} r_\lambda(\widehat\Sigma_M) \mathcal{S}_M^* \phi_\lambda(\mathcal{L}_M) \mathcal{L}_\infty^r\right\|\\
&\leq\left\|\widehat\Sigma_{M,\lambda}^{\frac{1}{2}} r_\lambda(\widehat\Sigma_M) \mathcal{S}_M^* \phi_\lambda(\mathcal{L}_M) \mathcal{L}_{M,\lambda}^{(r\vee1)}\right\|\left\|\mathcal{L}_{M,\lambda}^{-(r\vee1)}\mathcal{L}_{\infty,\lambda}^{r}\right\|\\
&= \left\|\widehat\Sigma_{M,\lambda}^{\frac{1}{2}} r_\lambda(\widehat\Sigma_M) \Sigma_{M,\lambda}^{(r\vee1)}\mathcal{S}_M^* \phi_\lambda(\mathcal{L}_M) \right\|\left\|\mathcal{L}_{M,\lambda}^{-(r\vee1)}\mathcal{L}_{\infty,\lambda}^{r}\right\|.\\
&\leq  \left\|\widehat\Sigma_{M,\lambda}^{\frac{1}{2}} r_\lambda(\widehat\Sigma_M) \Sigma_{M,\lambda}^{(r\vee1)-\frac{1}{2}}\right\| \left\| \mathcal{L}_M\phi_\lambda(\mathcal{L}_M) \right\|\left\|\mathcal{L}_{M,\lambda}^{-(r\vee1)}\mathcal{L}_{\infty,\lambda}^{r}\right\|.
\end{align*}

By definition of a spectral method $\phi_t$ and  \ref{OPbound7} we have $ \left\| \mathcal{L}_M\phi_\lambda(\mathcal{L}_M) \right\| \leq D $ and $\left\|\mathcal{L}_{M,\lambda}^{-(r\vee1)}\mathcal{L}_{\infty,\lambda}^{r}\right\|\leq3 \lambda^{- (1-r)^+}$. Using those bounds we obtain for \eqref{casesT2}

\begin{align}
\left\|\Sigma_M^{\frac{1}{2}-s}r_\lambda(\widehat\Sigma_M)f_\lambda^*\right\|_{\mathcal{H}_M}\leq \frac{6DR}{\lambda^{s+(1-r)^+}} \left\|\widehat\Sigma_{M,\lambda}^{\frac{1}{2}} r_\lambda(\widehat\Sigma_M) \Sigma_{M,\lambda}^{(r\vee1)-\frac{1}{2}}\right\|. \label{TIIcase2}
\end{align}

It remains to bound $\left\|\widehat\Sigma_{M,\lambda}^{\frac{1}{2}} r_\lambda(\widehat\Sigma_M) \Sigma_{M,\lambda}^{q}\right\|$  with $q:= (r\vee1)-\frac{1}{2}$. From \eqref{c_r} and Proposition \ref{OPbound8} we have with probability at least $1-\delta$,
\begin{align*}
\left\|\widehat\Sigma_{M,\lambda}^{\frac{1}{2}} r_\lambda(\widehat\Sigma_M) \Sigma_{M,\lambda}^{q}\right\|&\leq \left\|\widehat\Sigma_{M,\lambda}^{\frac{1}{2}} r_\lambda(\widehat\Sigma_M) \widehat{\Sigma}_{M,\lambda}^{q}\right\|\left\|\widehat{\Sigma}_{M,\lambda}^{-q} \Sigma_{M,\lambda}^{q}\right\| \\
&\leq c_{r \vee 1} \lambda^{(r\vee1)} \left\|\widehat{\Sigma}_{M,\lambda}^{-q} \Sigma_{M,\lambda}^{q}\right\|\\
&\leq2 c_{r\vee 1} \lambda^{(r\vee1)} 
\end{align*}

Plugging the above bound into \eqref{TIIcase2} gives
\begin{align*}
\left\|\mathcal{S}_M r_\lambda(\widehat\Sigma_M)f_\lambda^*\right\|_{L^2(\rho_x)}\leq 12DR c_{\frac{1}{2}+r   }    \lambda^{r-s}.
\end{align*}

\end{itemize}
Combining the bounds of both cases proves the claim. If we collect all the concentration inequalities used, and applying Proposition \ref{conditioning}, we obtain that the statement \eqref{b)techniqineq} holds true with probability at least $1-8\delta$.
\end{proof}

Now we are able to bound the variance term.

\begin{proposition}
\label{mainprop}
Provided the same assumptions of Proposition \ref{T2I}, we have for any $s\in[0,0.5]$, $\delta \in(0,\frac{1}{19})$ with probability at least $1-19\delta$, 
\begin{align*}
\left\|\Sigma_M^{\frac{1}{2}-s} (f_\lambda^M-f_\lambda^*)\right\|_{\mathcal{H}_M}\leq \left( 12D\left(\log\frac{2}{\delta}+ R c_{r\vee 1}\right)+ 12DR c_{\frac{1}{2}+r }  \right)\lambda^{r-s}.
\end{align*}
\end{proposition}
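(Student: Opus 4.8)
The plan is to reduce the claim to the two technical estimates already contained in Proposition~\ref{T2I} via a single exact operator identity. First I would insert the term $\pm\,\phi_\lambda(\widehat\Sigma_M)\widehat{\mathcal{S}}_M^*\widehat{\mathcal{S}}_M f_\lambda^*$ into $f_\lambda^M=\phi_\lambda(\widehat\Sigma_M)\widehat{\mathcal{S}}_M^* y$. Since $\widehat\Sigma_M=\widehat{\mathcal{S}}_M^*\widehat{\mathcal{S}}_M$ and all functions of $\widehat\Sigma_M$ commute, this yields, using the residual $r_\lambda(t)=1-t\phi_\lambda(t)$ from \eqref{residual},
\[
f_\lambda^M-f_\lambda^* \;=\; \phi_\lambda(\widehat\Sigma_M)\widehat{\mathcal{S}}_M^*\bigl(y-\widehat{\mathcal{S}}_M f_\lambda^*\bigr)\;-\;r_\lambda(\widehat\Sigma_M)f_\lambda^*\,.
\]

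Next I would apply $\Sigma_M^{\frac12-s}$ to both sides, take the $\mathcal{H}_M$-norm, and use the triangle inequality to split the bound into exactly the two quantities controlled in Proposition~\ref{T2I}: part~a) bounds $\bigl\|\Sigma_M^{\frac12-s}\phi_\lambda(\widehat\Sigma_M)\widehat{\mathcal{S}}_M^*(y-\widehat{\mathcal{S}}_M f_\lambda^*)\bigr\|_{\mathcal{H}_M}$ by $12D(\log\frac2\delta+Rc_{r\vee1})\lambda^{r-s}$, and part~b) bounds $\bigl\|\Sigma_M^{\frac12-s}r_\lambda(\widehat\Sigma_M)f_\lambda^*\bigr\|_{\mathcal{H}_M}$ by $12DRc_{\frac12+r}\lambda^{r-s}$; adding the two gives the stated constant. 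For the probability, I would take a union bound: the event in~a) has probability at least $1-11\delta$ and that in~b) at least $1-8\delta$ under the very same hypotheses on $M$ and $n$, so their intersection has probability at least $1-19\delta$, as claimed.

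I do not expect a genuine obstacle here. The entire analytic difficulty---the operator perturbation arguments comparing $\widehat\Sigma_M$, $\Sigma_M$, $\mathcal{L}_M$ and $\mathcal{L}_\infty$, the noise concentration, and the scaling requirements on $M$ and $n$---has already been absorbed into Proposition~\ref{T2I}. The only points requiring a little care are verifying that the displayed decomposition is an exact identity on $\mathcal{H}_M$ (in particular that $\phi_\lambda(\widehat\Sigma_M)\widehat{\mathcal{S}}_M^*\widehat{\mathcal{S}}_M f_\lambda^*-f_\lambda^*=-r_\lambda(\widehat\Sigma_M)f_\lambda^*$), and that the hypotheses assumed here coincide verbatim with those of Proposition~\ref{T2I}, so that both high-probability estimates are simultaneously in force.
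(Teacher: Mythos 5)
Your proposal is correct and matches the paper's proof essentially verbatim: the paper uses the same insertion of $\pm\,\phi_\lambda(\widehat\Sigma_M)\widehat\Sigma_M f_\lambda^*$ to split $\Sigma_M^{\frac12-s}(f_\lambda^M-f_\lambda^*)$ into the two terms controlled by Proposition~\ref{T2I}~a) and~b), and then combines the $1-11\delta$ and $1-8\delta$ events exactly as you do. No gaps.
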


\begin{proof}
We start with the following decomposition
\begin{align}
&\left\|\Sigma_M^{\frac{1}{2}-s} (f_\lambda^M-f_\lambda^*)\right\|_{\mathcal{H}_M} \\
&\leq\left\|\Sigma_M^{\frac{1}{2}-s}\left(\phi_\lambda(\widehat\Sigma_M) \widehat{\cS}_{M}^{*} y-\phi_\lambda(\widehat\Sigma_M) \widehat{\Sigma}_{M}f_\lambda^*\right)\right\|_{\mathcal{H}_M}+\left\|\Sigma_M^{\frac{1}{2}-s} \left(\phi_\lambda(\widehat\Sigma_M) \widehat{\Sigma}_{M}-I\right)f_\lambda^*\right\|_{\mathcal{H}_M}\\
&=\left\|\Sigma_M^{\frac{1}{2}-s}\phi_\lambda(\widehat\Sigma_M) \widehat{\cS}_{M}^{*}\left(y-\widehat{\mathcal{S}}_M f_\lambda^*\right)\right\|_{\mathcal{H}_M}+\left\|\Sigma_M^{\frac{1}{2}-s} r_\lambda(\widehat\Sigma_M)f_\lambda^*\right\|_{\mathcal{H}_M}\\
&:=I+II.
\end{align}

For $I)$ we have from Proposition \ref{T2I}  a) with probability at least $1-11\delta$,
$$I\leq 12D\left(\log\frac{2}{\delta}+ R c_{r\vee 1}\right)\lambda^{r-s}$$
and for $II)$ we have from Proposition \ref{T2I} b) with probability at least $1-8\delta$,
$$II\leq 12DR c_{\frac{1}{2}+r }  \lambda^{r-s}.$$
Combining those bounds proves the claim.
\end{proof}

\begin{theorem}
\label{theo2}
Provided all the assumptions of Proposition \ref{T2I}  we have with probability at least $1-22\delta$,\\
\begin{align*}
\|g_\rho-\mathcal{S}_M f_\lambda^M\|_{L^2(\rho_x)}\leq \left(3 R c_{r\vee 1}+ 12D\left(\log\frac{2}{\delta}+ R c_{r\vee 1}\right)+ 12DR c_{\frac{1}{2}+r }  \right)\lambda^{r}.
\end{align*}

\end{theorem}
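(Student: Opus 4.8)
The plan is to feed the bias--variance decomposition \eqref{excessrisk} with the two estimates already prepared in Proposition \ref{mainprop2} and Proposition \ref{mainprop}. Recall that \eqref{excessrisk} reads $\|g_\rho-\mathcal{S}_M f_\lambda^M\|_{L^2(\rho_x)} \le \mathrm{BIAS} + \mathrm{VARIANCE}$ with $\mathrm{BIAS} = \|g_\rho-\mathcal{S}_M f_\lambda^*\|_{L^2(\rho_x)}$ and $\mathrm{VARIANCE} = \|\mathcal{S}_M f_\lambda^* - \mathcal{S}_M f_\lambda^M\|_{L^2(\rho_x)}$, so it suffices to bound each summand on a high-probability event and then intersect.

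For the bias, Proposition \ref{mainprop2} applies verbatim under the hypotheses inherited from Proposition \ref{T2I}, giving $\mathrm{BIAS} \le 3Rc_{r\vee 1}\lambda^r$ on an event $\mathcal{A}$ with $\mathbb{P}(\mathcal{A}) \ge 1-3\delta$. For the variance, the key observation I would use is the isometry $\|\mathcal{S}_M h\|_{L^2(\rho_x)} = \|\Sigma_M^{1/2}h\|_{\mathcal{H}_M}$ for $h\in\mathcal{H}_M$, which is immediate from $\mathcal{S}_M^*\mathcal{S}_M = \Sigma_M$. Taking $h = f_\lambda^* - f_\lambda^M$ rewrites $\mathrm{VARIANCE} = \|\Sigma_M^{1/2}(f_\lambda^M - f_\lambda^*)\|_{\mathcal{H}_M}$, which is precisely the quantity controlled by Proposition \ref{mainprop} in the special case $s = 0$ (so that $\Sigma_M^{\frac{1}{2} - s} = \Sigma_M^{1/2}$ and $\lambda^{r-s} = \lambda^r$). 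Hence, on an event $\mathcal{B}$ with $\mathbb{P}(\mathcal{B})\ge 1-19\delta$, one gets $\mathrm{VARIANCE}\le \bigl(12D(\log(2/\delta)+Rc_{r\vee 1}) + 12DRc_{\frac{1}{2}+r}\bigr)\lambda^r$.

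Finally I would intersect the two events: on $\mathcal{A}\cap\mathcal{B}$, which has probability at least $1 - 3\delta - 19\delta = 1-22\delta$ by a union bound, adding the two displays yields $\|g_\rho-\mathcal{S}_M f_\lambda^M\|_{L^2(\rho_x)} \le \bigl(3Rc_{r\vee 1} + 12D(\log(2/\delta)+Rc_{r\vee 1}) + 12DRc_{\frac{1}{2}+r}\bigr)\lambda^r$, which is exactly the claim. The argument is essentially bookkeeping; the only point requiring a little care is the probability accounting, since Proposition \ref{mainprop} already re-uses Proposition \ref{mainprop2} (via Proposition \ref{T2I}) inside its own $19\delta$ budget, so one must decide whether the bias event is being double-counted — the conservative $22\delta = 3\delta + 19\delta$ bound stated in the theorem is obtained simply by not attempting to share these sub-events. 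Everything else (the choice $s=0$, the identification of $\mathrm{VARIANCE}$ with an $\mathcal{H}_M$-norm through $\mathcal{S}_M^*\mathcal{S}_M=\Sigma_M$, and the triangle inequality) is routine.
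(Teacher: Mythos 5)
Your proposal is correct and follows essentially the same route as the paper's proof: the same bias--variance split, Proposition \ref{mainprop2} for the bias, Proposition \ref{mainprop} with $s=0$ together with the isometry $\|\mathcal{S}_M h\|_{L^2(\rho_x)}=\|\Sigma_M^{1/2}h\|_{\mathcal{H}_M}$ for the variance, and a union bound giving $1-22\delta$. Your remark about the conservative probability accounting matches what the paper implicitly does.
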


\begin{proof}
We start with the following decomposition
\begin{align}
\|g_\rho-\mathcal{S}_M f_\lambda^M\|_{L^2(\rho_x)}\leq\|g_\rho-\mathcal{S}_Mf_\lambda^*\|_{L^2(\rho_x)}+\|\mathcal{S}_M f_\lambda^M-\mathcal{S}_Mf_\lambda^*\|_{L^2(\rho_x)}
:=T_1+T_2 .\label{Maindecomposition}
\end{align}

We will now bound $T_1$ and $T_2$ separately :
\begin{itemize}

\item[$T_1)$] For the first term of \eqref{Maindecomposition} we have from Proposition \ref{mainprop2} with probability at least $1-3\delta$,
\begin{align}
\|g_\rho-\mathcal{S}_Mf_\lambda^*\|_{L^2(\rho_x)} \leq 3 R c_{r\vee 1}\lambda^{r}\, .\label{(T_1)}
\end{align}

\item[$T_2)$] For the second norm in \eqref{Maindecomposition} we have from Mercers Theorem (see for example \cite{steinwart2008support}) and Proposition \ref{mainprop} with probability at least $1-19\delta$,
\begin{align*}
\|\mathcal{S}_M f_\lambda^M-\mathcal{S}_Mf_\lambda^*\|_{L^2(\rho_x)} = \left\|\Sigma_M^{\frac{1}{2}} (f_\lambda^M-f_\lambda^*)\right\|_{\mathcal{H}_M} \leq \left( 12D\left(\log\frac{2}{\delta}+ R c_{r\vee 1}\right)+ 12DR c_{\frac{1}{2}+r }  \right)\lambda^{r}.
\end{align*}
Combining this bound with the bound of $T_1$ proves the claim.

\end{itemize}

\end{proof}

\begin{proof}[Proof of Theorem \ref{theo1}]
The proof follows from \ref{theo2} . First we need to check if $\lambda = C n^{-\frac{1}{2r+b}}\log^3\frac{2}{\delta}$ for some $C>0$ fulfills the conditions of \ref{T2I} on $n$ and $M$. Using the bound of \ref{effecDim}  we have that the condition  $n\geq \eta_1\vee\eta_2\vee\eta_3\vee\eta_4 \vee\eta_6$ is fulfilled if
$$
n\geq c_1 \left(\frac{\log(\lambda^{-1})}{\lambda} +\frac{1}{\lambda^{2r+b}}\right)\log^3\frac{2}{\delta},
$$
where 
$$
c_1=\max\left\{8\kappa^2,\,\, 8QZ\kappa,\,\, 382Q^2c_b,\,\, 72 R^2 c_{r\vee 1}^2\left(Q^2+C_{\kappa,R,D}^2\right)\,\, \right\} \cdot \max\left\{1,\,\log \frac{48 \kappa^2c_b}{\|\mathcal{L}_\infty\|}\right\}
$$

Therefore for the case $r\leq \frac{1}{2}$ it is enough to assume $\lambda= 2c_1 n^{-\frac{1}{2r+b}} \log^3\frac{2}{\delta}$ as long as $n\geq n_0$
where $n_0\geq n_0^{\frac{1}{2r+b}} \log(n_0)$ or equivalent $n_0\geq e^{\frac{2r+b}{2r+b-1}}$.  In case $r>\frac{1}{2}$ it remains to check if 
$n\geq \eta_5$. This holds if $n\geq  \frac{c_2}{\lambda^{1+b}} \log^3\frac{2}{\delta}$ with $c_2:=100\kappa^2c_b$ and therefore the condition on $n$ is fulfilled if 
$\lambda = C n^{-\frac{1}{2r+b}}\log^3\frac{2}{\delta}$  where $C:= \max\{2c_1,c_2\}$. The condition on $M$ is fulfilled if

\begin{align*}
M&\geq 
\begin{cases}
\frac{8 p\kappa^2 \beta_\infty}{\lambda}\vee C_{\delta,\kappa} & r\in\left(0,\frac{1}{2}\right)\\
\frac{8 p\kappa^2 \beta_\infty\vee C_1^{\frac{1}{r}}\vee C_2}{\lambda^{1+b(2r-1)} } \vee C_{\delta,\kappa} & r\in\left[\frac{1}{2},1\right] \\[5pt]
\frac{8 p\kappa^2 \beta_\infty\vee C_3}{\lambda^{2r}}\vee C_{\delta,\kappa} & r \in(1,\infty).\\
\end{cases}\\
 \end{align*}
Using $\lambda = C n^{-\frac{1}{2r+b}}\log^3\frac{2}{\delta}$ we have that the condition is fulfilled if

\begin{align*}
M\geq \frac{8 p\kappa^2 \beta_\infty\vee C_{\delta,\kappa} \vee \left(C_1^{\frac{1}{r}}\vee C_2\right)\mathbbm{1}_{r<1} \vee C_3}{C\log^3\frac{2}{\delta}}\cdot
\begin{cases}
n^{\frac{1}{2r+b}}& r\in\left(0,\frac{1}{2}\right)\\
n^{\frac{1+b(2r-1)}{2r+b}}  & r\in\left[\frac{1}{2},1\right] \\
n^{\frac{2r}{2r+b}} & r \in(1,\infty)\,.\\
\end{cases}\\
\end{align*}

Note that 
\begin{align*}
\frac{8 p\kappa^2 \beta_\infty\vee C_{\delta,\kappa} \vee \left(C_1^{\frac{1}{r}}\vee C_2\right)\mathbbm{1}_{\{r<1\}} \vee C_3}{C\log^3\frac{2}{\delta}} \leq \tilde{C} \log(n),
\end{align*}
for 
$\tilde{C}= 8p\kappa^2\cdot \max\left\{1,\,\log \frac{48 \kappa^2c_b}{\|\mathcal{L}_\infty\|}\right\}\cdot\frac{ 8\kappa^4\|\mathcal{L}_\infty\|^{-1}  \vee  8\kappa  \vee   16c_b\kappa^2\vee 4\kappa^4C_{\kappa,r}^2
}{C}$

Therefore the condition on $M$ holds true if 
\begin{align*}
M\geq \tilde{C}\log(n) \cdot \begin{cases}
n^{\frac{1}{2r+b}}& r\in\left(0,\frac{1}{2}\right)\\
n^{\frac{1+b(2r-1)}{2r+b}}  & r\in\left[\frac{1}{2},1\right] \\
n^{\frac{2r}{2r+b}} & r \in(1,\infty)\,.\\
\end{cases}\\
\end{align*}

Theorem \ref{theo2} now states with probability at least $1-22\delta$,
\begin{align}
\|g_\rho-\mathcal{S}_M f_\lambda^M\|_{L^2(\rho_x)}&\leq \left(3 R c_{r\vee 1}+ 12D\left(\log\frac{2}{\delta}+ R c_{r\vee 1}\right)+ 12DR c_{\frac{1}{2}+r }  \right)\lambda^{r} \label{statement}\\
&= \left(3 R c_{r\vee 1}+ 12D\left(\log\frac{2}{\delta}+ R c_{r\vee 1}\right)+ 12DR c_{\frac{1}{2}+r }  \right)\left(C n^{-\frac{1}{2r+b}}\log^3\frac{2}{\delta}\right)^{r}\\
&\leq \bar{C} \log^{3r+1} \left(\frac{2}{\delta}\right) \, n^{-\frac{r}{2r+b}},
\end{align}
where 
$$
\bar{C}:=\left(3 R c_{r\vee 1}+ 12D\left(1+ R c_{r\vee 1}\right)+ 12DR c_{\frac{1}{2}+r }  \right)C^r. 
$$ 
Redefining  $\tilde{\delta}=22\delta\in(0,1)$ proves the statement.

\end{proof}


\subsection{Appendix II}

\begin{proposition}
\label{conditioning}
Let  $E_i$ be events with probability at least $1-\delta_i$ and set
$$
E:=\bigcap^k_{i=1}E_i
$$
If we can show for some event $A$ that $\mathbb{P}(A|E)\geq 1-\delta$  then we also have
\begin{align*}
\mathbb{P}(A)&\geq\int_{E}\mathbb{P}(A|\omega)d\mathbb{P}(\omega)
\geq (1-\delta)\mathbb{P}(E)\\
&=(1-\delta)\left(1-\mathbb{P}\left(\bigcup_{i=1}^k (\Omega/E_i)\right)\right)\geq(1-\delta)\left(1-\sum_{i=1}^k\delta_i\right).
\end{align*}
\end{proposition}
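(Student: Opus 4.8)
The statement is elementary and the displayed chain of (in)equalities in the proposition is essentially already the proof; the plan is simply to justify each link.

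First I would use monotonicity of $\mathbb{P}$: since $A \cap E \subseteq A$, we have $\mathbb{P}(A) \geq \mathbb{P}(A \cap E)$. Then, assuming $\mathbb{P}(E) > 0$, the definition of conditional probability gives $\mathbb{P}(A \cap E) = \mathbb{P}(A \mid E)\,\mathbb{P}(E)$ --- equivalently, interpreting $\omega \mapsto \mathbb{P}(A \mid \omega)$ as a regular conditional probability, $\mathbb{P}(A \cap E) = \int_E \mathbb{P}(A \mid \omega)\, d\mathbb{P}(\omega)$. Inserting the hypothesis $\mathbb{P}(A \mid E) \geq 1 - \delta$ then yields $\mathbb{P}(A) \geq (1 - \delta)\,\mathbb{P}(E)$.

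Next I would lower-bound $\mathbb{P}(E)$ by a finite union bound. By De Morgan, $\Omega \setminus E = \bigcup_{i=1}^k (\Omega \setminus E_i)$, so $\mathbb{P}(E) = 1 - \mathbb{P}\big( \bigcup_{i=1}^k (\Omega \setminus E_i) \big) \geq 1 - \sum_{i=1}^k \mathbb{P}(\Omega \setminus E_i) \geq 1 - \sum_{i=1}^k \delta_i$, where the last step uses $\mathbb{P}(\Omega \setminus E_i) = 1 - \mathbb{P}(E_i) \leq \delta_i$. Combining the two estimates gives $\mathbb{P}(A) \geq (1 - \delta)\big(1 - \sum_{i=1}^k \delta_i\big)$, which is the claim.

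There is no real obstacle here; the only point deserving a word of care is the meaning of the conditional probability in the statement. One should either assume $\mathbb{P}(E) > 0$ --- which holds in every application in this paper, since the lemma is invoked only after one has already verified that the intersection $E$ of the relevant events has probability close to one --- or adopt the (harmless) convention that the asserted bound is vacuous when $\mathbb{P}(E) = 0$. With that understood, the proof is just monotonicity of $\mathbb{P}$, the definition of conditioning, and a finite union bound.
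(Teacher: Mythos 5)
Your proposal is correct and follows exactly the chain of inequalities that the paper itself displays as the proof: monotonicity of $\mathbb{P}$, the definition of conditional probability, De Morgan, and a finite union bound. Your added remark about requiring $\mathbb{P}(E)>0$ (or treating the bound as vacuous otherwise) is a sensible clarification but does not change the argument.
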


\begin{proposition}[ \cite{aleksandrov2009operatorholderzygmundfunctions}, \cite{Muecke2017op.rates} (Proposition B.1.) ]
\label{ineq1}
Let $B_{1}, B_{2}$ be two non-negative self-adjoint operators on some Hilbert space with $\left\|B_{j}\right\| \leq a, j=1,2$, for some non-negative a.
\begin{itemize}
\item[(i)] If $0 \leq r \leq 1$, then
$$
\left\|B_{1}^{r}-B_{2}^{r}\right\| \leq C_{r}\left\|B_{1}-B_{2}\right\|^{r},
$$
for some $C_{r}<\infty$.
\item[(ii)] If $r>1$, then
$$
\left\|B_{1}^{r}-B_{2}^{r}\right\| \leq C_{a, r}\left\|B_{1}-B_{2}\right\|,
$$
for some $C_{a, r}<\infty$. 
\end{itemize}
\end{proposition}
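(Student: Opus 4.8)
The two regimes call for genuinely different tools, so I would split the argument accordingly. For $0\le r\le1$ the map $t\mapsto t^r$ is \emph{operator monotone} on $[0,\infty)$ (L\"owner--Heinz), and combined with the scalar subadditivity $(x+\delta)^r\le x^r+\delta^r$ this yields the H\"older estimate in one line, even with $C_r=1$. For $r>1$ operator monotonicity is lost, so I would instead use an integral (resolvent) representation of $t^r$ on the range $r\in(1,2)$ and bootstrap to all $r>1$ by the elementary product rule for operator powers.

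\textbf{Part (i).} Set $\delta:=\|B_1-B_2\|$. Since $B_1-B_2$ is self-adjoint, $\|B_1-B_2\|\le\delta$ is equivalent to the operator inequalities $B_1\le B_2+\delta I$ and $B_2\le B_1+\delta I$, whose right-hand sides are nonnegative. Applying the operator monotone function $t\mapsto t^r$ ($0\le r\le1$) to $B_1\le B_2+\delta I$ gives $B_1^r\le(B_2+\delta I)^r$; since $B_2$ and $\delta I$ commute, the order-preserving functional calculus applied to $B_2$ together with the pointwise bound $(x+\delta)^r\le x^r+\delta^r$ (valid for $x,\delta\ge0$, $0\le r\le1$) gives $(B_2+\delta I)^r\le B_2^r+\delta^r I$. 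Hence $B_1^r-B_2^r\le\delta^r I$, and by symmetry $B_2^r-B_1^r\le\delta^r I$, so $\|B_1^r-B_2^r\|\le\delta^r$, i.e.\ $C_r=1$ works. The cases $r=0$ and $r=1$ are trivial.

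\textbf{Part (ii).} Take first $r\in(1,2)$. From the classical formula $t^{\alpha}=\frac{\sin(\pi\alpha)}{\pi}\int_0^\infty\lambda^{\alpha-1}\frac{t}{t+\lambda}\,d\lambda$ (valid for $0<\alpha<1$, $t\ge0$) with $\alpha=r-1$, multiplying by $t$ yields $t^r=\frac{\sin(\pi(r-1))}{\pi}\int_0^\infty\lambda^{r-2}\frac{t^2}{t+\lambda}\,d\lambda$ for all $t\ge0$. Passing to functional calculus and using $B^2(B+\lambda I)^{-1}=B-\lambda I+\lambda^2(B+\lambda I)^{-1}$ with the resolvent identity, one verifies
\[
B_1^2(B_1+\lambda I)^{-1}-B_2^2(B_2+\lambda I)^{-1}
= B_1(B_1+\lambda I)^{-1}(B_1-B_2)+\lambda(B_1+\lambda I)^{-1}(B_1-B_2)B_2(B_2+\lambda I)^{-1}.
\]
Since $\|B_j(B_j+\lambda I)^{-1}\|\le\min(1,a/\lambda)$ and $\|\lambda(B_j+\lambda I)^{-1}\|\le1$, the right-hand side has norm at most $2\min(1,a/\lambda)\,\|B_1-B_2\|$; as $r-2\in(-1,0)$ the weight $\lambda^{r-2}\min(1,a/\lambda)$ is integrable on $(0,\infty)$, and integrating gives $\|B_1^r-B_2^r\|\le C_{a,r}\|B_1-B_2\|$. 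For $r\ge2$ I would induct on $\lfloor r\rfloor$ via $B_1^r-B_2^r=B_1^{r-1}(B_1-B_2)+(B_1^{r-1}-B_2^{r-1})B_2$ together with $\|B_1^{r-1}\|\le a^{r-1}$ and $\|B_2\|\le a$, the base case being the interval $[1,2]$ ($r=1$ trivial, $r\in(1,2)$ just treated, $r=2$ from $B_1^2-B_2^2=B_1(B_1-B_2)+(B_1-B_2)B_2$).

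\textbf{Main obstacle.} The real content sits in part (i): operator H\"older continuity of $t\mapsto t^r$ is \emph{not} a formal consequence of scalar H\"older continuity (the absolute value is Lipschitz but not operator Lipschitz), and the argument genuinely relies on operator monotonicity of $t^r$ for $r\le1$. In part (ii) the only delicate point is the non-integer range $r\in(1,2)$, where the naive factorization $t^r=t^{\lfloor r\rfloor}t^{\{r\}}$ combined with part (i) would produce only the useless factor $\|B_1-B_2\|^{\{r\}}$; it is precisely the integral representation --- equivalently, the fact that functions with H\"older-continuous derivative are operator Lipschitz, cf.\ \cite{aleksandrov2009operatorholderzygmundfunctions} --- that repairs this. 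Since the statement is classical, one may alternatively just cite \cite{aleksandrov2009operatorholderzygmundfunctions} or \cite{Muecke2017op.rates}.
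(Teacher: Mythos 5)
Your proof is correct. Note that the paper itself gives no proof of this proposition --- it is imported verbatim from \cite{aleksandrov2009operatorholderzygmundfunctions} and \cite{Muecke2017op.rates} (Proposition B.1) --- so there is no in-paper argument to compare against; what you supply is a self-contained derivation of the cited result. Your part (i) is the standard L\"owner--Heinz argument: from $-\delta I\le B_1-B_2\le\delta I$ you get $B_1\le B_2+\delta I$, apply operator monotonicity of $t\mapsto t^r$, and use the commuting functional calculus with the scalar subadditivity $(x+\delta)^r\le x^r+\delta^r$ to conclude $\|B_1^r-B_2^r\|\le\delta^r$; this even gives the sharp constant $C_r=1$. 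In part (ii), your operator identity for $B_1^2(B_1+\lambda I)^{-1}-B_2^2(B_2+\lambda I)^{-1}$ checks out (both sides reduce to $(B_1-B_2)-\lambda^2(B_1+\lambda I)^{-1}(B_1-B_2)(B_2+\lambda I)^{-1}$), the weight $\lambda^{r-2}\min(1,a/\lambda)$ is indeed integrable precisely for $r\in(1,2)$, and the induction on $\lfloor r\rfloor$ via $B_1^r-B_2^r=B_1^{r-1}(B_1-B_2)+(B_1^{r-1}-B_2^{r-1})B_2$ closes the remaining cases. Your closing remark correctly identifies why the naive factorization $t^r=t^{\lfloor r\rfloor}t^{\{r\}}$ cannot yield the Lipschitz bound in (ii) and why (i) is not a formal consequence of scalar H\"older continuity.
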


\begin{proposition}[Fujii et al., 1993, Cordes inequality]
\label{ineq2}
Let $A$ and $B$ be two positive bounded linear operators on a separable Hilbert space. Then
$$
\left\|A^s B^s\right\| \leq\|A B\|^s, \quad \text { when } 0 \leq s \leq 1 .
$$

\end{proposition}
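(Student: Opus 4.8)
Since Proposition~\ref{ineq2} is a classical fact, the plan is to recall the standard short proof, which rests on the L\"owner--Heinz operator monotonicity inequality: if $0\le X\le Y$ are bounded self-adjoint operators and $0\le s\le 1$, then $X^s\le Y^s$. I would treat this as a known result from the operator-theory literature; it is genuinely stronger than the H\"older-type estimate of Proposition~\ref{ineq1}(i) and so cannot be deduced from what precedes. The only reduction I need is to the case where $B$ is boundedly invertible: if $B$ is merely positive I would replace it by $B_\varepsilon:=B+\varepsilon I$, prove the inequality for $B_\varepsilon$, and let $\varepsilon\downarrow 0$ at the end, using that $B_\varepsilon^s\to B^s$ in operator norm (continuity of the functional calculus $t\mapsto t^s$ on a compact interval containing the spectrum of $B$) and $\|AB_\varepsilon\|\to\|AB\|$.

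So assume $B$ invertible, and discard the trivial case $A=0$, so that $c:=\|AB\|^2>0$. Using that $A^s$, $A^{2s}$, $A$, $B^s$, $B$ are self-adjoint, I would first pass to positive operators via the $C^*$-identity: $\|A^sB^s\|^2=\|(A^sB^s)^*(A^sB^s)\|=\|B^sA^{2s}B^s\|$ and, likewise, $\|AB\|^2=\|BA^2B\|=c$. From $BA^2B\le cI$, multiplying on the left and on the right by $B^{-1}\ge 0$ gives $A^2\le cB^{-2}$. Applying L\"owner--Heinz with exponent $s\in[0,1]$ to $0\le A^2\le cB^{-2}$ yields $A^{2s}\le c^sB^{-2s}$, and conjugating by $B^s$ gives $B^sA^{2s}B^s\le c^sB^sB^{-2s}B^s=c^sI$. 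Taking norms, $\|A^sB^s\|^2=\|B^sA^{2s}B^s\|\le c^s=\|AB\|^{2s}$, i.e.\ $\|A^sB^s\|\le\|AB\|^s$; passing $\varepsilon\downarrow 0$ in the regularized statement then closes the proof.

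The algebra is short, so the only conceptual step worth flagging is the reformulation $\|A^sB^s\|^2=\|B^sA^{2s}B^s\|$ combined with the inversion of $B$, which converts $BA^2B\le cI$ into the genuine operator comparison $A^2\le cB^{-2}$ — exactly the form on which L\"owner--Heinz applies. The remaining points need care rather than ideas: the appeal to L\"owner--Heinz operator monotonicity must be to an explicit external reference (it does not follow from Proposition~\ref{ineq1}), and the $\varepsilon\downarrow 0$ limit requires norm-continuity of $\varepsilon\mapsto(B+\varepsilon I)^s$ at $0$ together with $\|A(B+\varepsilon I)\|\to\|AB\|$. In the write-up this proposition would most naturally just be cited to the classical references on the Cordes inequality.
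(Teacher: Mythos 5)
Your argument is correct. Note that the paper itself offers no proof of Proposition~\ref{ineq2}: it is stated as a cited classical result (Fujii et al., 1993; Cordes), so there is nothing internal to compare against, and supplying the standard L\"owner--Heinz derivation is exactly the right move. Every step checks out: the $C^*$-identity gives $\|A^sB^s\|^2=\|B^sA^{2s}B^s\|$ and $\|AB\|^2=\|BA^2B\|=c$; positivity gives $BA^2B\le cI$; conjugation by $B^{-1}$ (legitimate once $B$ is regularized to be invertible) gives $A^2\le cB^{-2}$; operator monotonicity of $t\mapsto t^s$ for $s\in[0,1]$ gives $A^{2s}\le c^sB^{-2s}$; and conjugating back by $B^s$ yields $B^sA^{2s}B^s\le c^sI$, hence the claim. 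The limiting step is also sound: for $s\in[0,1]$ one has the uniform bound $(t+\varepsilon)^s-t^s\le\varepsilon^s$ on the spectrum of $B$, so $\|(B+\varepsilon I)^s-B^s\|\le\varepsilon^s\to0$, and $\|A(B+\varepsilon I)\|\to\|AB\|$ trivially. You are right that L\"owner--Heinz is a genuinely external input not deducible from Proposition~\ref{ineq1}; in the paper one would simply keep the citation, but your reconstruction is complete and correct.
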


\begin{proposition}[\cite{features} (Proposition 9)]
\label{ineq3}
Let $\mathcal{H}, \mathcal{K}$ be two separable Hilbert spaces and $X, A$ be bounded linear operators, with $A: \mathcal{H} \rightarrow \mathcal{K}$ and $B: \mathcal{H} \rightarrow \mathcal{H}$ be positive semidefinite.
$$
\left\|A B^\sigma\right\| \leq\|A\|^{1-\sigma}\|A B\|^\sigma, \quad \forall \sigma \in[0,1] .
$$

\end{proposition}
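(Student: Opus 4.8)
The plan is to give a short self‑contained argument, the statement being classical: it is a special case of the Heinz inequality, and it is in the same spirit as the Cordes inequality of Proposition~\ref{ineq2}. The endpoint cases $\sigma\in\{0,1\}$ are trivial, and the cases $A=0$ or $AB=0$ are immediate from the functional calculus, so we may fix $\sigma\in(0,1)$ and assume $\|A\|>0$, $\|AB\|>0$. Everything rests on one elementary submultiplicativity estimate: for all $a,b\in[0,1]$, using that $B\ge 0$ (hence $(B^{c})^{*}=B^{c}$ and $B^{a}B^{b}=B^{a+b}$) together with the identity $\|T\|^{2}=\|TT^{*}\|$,
\begin{equation}\label{keyineq}
\left\|AB^{\frac{a+b}{2}}\right\|^{2}
=\left\|AB^{a+b}A^{*}\right\|
=\left\|\paren{AB^{a}}\paren{AB^{b}}^{*}\right\|
\le \left\|AB^{a}\right\|\left\|AB^{b}\right\|.
\end{equation}

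From \eqref{keyineq} I would first prove the asserted bound at all dyadic points by induction on the dyadic level $n\in\mathbb{N}$, namely that $\|AB^{k/2^{n}}\|\le\|A\|^{1-k/2^{n}}\|AB\|^{k/2^{n}}$ for every $k\in\{0,\dots,2^{n}\}$. The base case $n=0$ reads $\|A\|\le\|A\|$ and $\|AB\|\le\|AB\|$. For the step, fix $k\in\{0,\dots,2^{n+1}\}$: if $k=2j$ is even, the level-$(n+1)$ claim is exactly the level-$n$ claim for $j$; if $k=2j+1$ is odd, then $j\in\{0,\dots,2^{n}-1\}$, so $a:=j/2^{n}$ and $b:=(j+1)/2^{n}$ lie in $[0,1]$, are level-$n$ dyadics, and satisfy $(a+b)/2=k/2^{n+1}$. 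Applying \eqref{keyineq} and bounding each of the two factors by the induction hypothesis gives
\[
\left\|AB^{k/2^{n+1}}\right\|^{2}\le \|A\|^{2-(a+b)}\|AB\|^{a+b}=\|A\|^{2-k/2^{n}}\|AB\|^{k/2^{n}},
\]
that is, $\|AB^{k/2^{n+1}}\|\le\|A\|^{1-k/2^{n+1}}\|AB\|^{k/2^{n+1}}$, completing the induction.

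Finally I would extend the inequality from dyadic $\sigma$ to all $\sigma\in[0,1]$ by density and continuity: the map $\sigma\mapsto B^{\sigma}$ is norm‑continuous on $[0,1]$ because $\|B^{\sigma}-B^{\sigma'}\|=\sup_{t\in\operatorname{spec}(B)}|t^{\sigma}-t^{\sigma'}|\to 0$ as $\sigma'\to\sigma$ (elementary, $t\mapsto t^{\sigma}$ being uniformly approximated by $t\mapsto t^{\sigma'}$ on the bounded set $[0,\|B\|]$), hence $\sigma\mapsto\|AB^{\sigma}\|$ is continuous, as is $\sigma\mapsto\|A\|^{1-\sigma}\|AB\|^{\sigma}$; since the dyadics are dense in $[0,1]$ the inequality passes to the limit.

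I do not anticipate a genuine obstacle; the only mildly delicate points are the index bookkeeping in the dyadic induction (it is $k$ being odd that guarantees $a,b\in[0,1]$) and the norm‑continuity of $\sigma\mapsto B^{\sigma}$. If one prefers a proof that exhibits the underlying log‑convexity of $\sigma\mapsto\|AB^{\sigma}\|$ directly, an alternative is to apply the Hadamard three‑lines theorem to the operator‑valued function $z\mapsto AB^{z}$ on the strip $0\le\Re z\le 1$ — first replacing $B$ by $B+\varepsilon I$ so that $B^{z}=e^{z\log B}$ is entire and bounded on the strip with $\|B^{it}\|=1$, then letting $\varepsilon\downarrow 0$ using $\|(B+\varepsilon I)^{\sigma}-B^{\sigma}\|\le\varepsilon^{\sigma}$ and $\|A(B+\varepsilon I)\|\to\|AB\|$ — but this invokes heavier machinery than the induction above.
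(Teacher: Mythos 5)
The paper never proves this proposition; it is imported verbatim from \cite{features} (Proposition 9), so there is no internal argument to compare yours against. Your proof is correct and self-contained. The key step $\|AB^{(a+b)/2}\|^{2}=\|AB^{a+b}A^{*}\|=\|(AB^{a})(AB^{b})^{*}\|\le\|AB^{a}\|\,\|AB^{b}\|$ is valid (it uses only the C*-identity $\|T\|^2=\|TT^*\|$ and the fact that $B\ge 0$ makes each $B^{c}$ self-adjoint with additive exponents), the dyadic induction is bookkept correctly (oddness of $k=2j+1\le 2^{n+1}$ indeed forces $j,j+1\in\{0,\dots,2^{n}\}$), and the passage from midpoint log-convexity to all of $[0,1]$ by density is sound. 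The one imprecision is the claim that $\sigma\mapsto B^{\sigma}$ is norm-continuous on all of $[0,1]$: when $0$ lies in, or is an accumulation point of, $\operatorname{spec}(B)$, continuity fails at $\sigma=0$ (with the convention $B^{0}=I$, one has $\sup_{t\in\operatorname{spec}(B)}|t^{0}-t^{\sigma'}|$ staying near $1$ as $\sigma'\downarrow 0$). This is harmless here because you have already disposed of the endpoints and fixed $\sigma\in(0,1)$, where the uniform convergence $t^{\sigma'}\to t^{\sigma}$ on $[0,\|B\|]$ does hold; it would be cleaner to assert continuity only on $(0,1)$. Your three-lines remark is the standard textbook route to the same log-convexity; the elementary bisection argument you give is preferable in this context and genuinely fills a gap the paper leaves to an external reference.
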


\begin{proposition}
\label{eq4}
Let $H_1, H_2$ be two separable Hilbert spaces and $\mathcal{S}: H_1 \rightarrow H_2$ a compact operator. Then for any function $f:[0,\|\mathcal{S}\|] \rightarrow[0, \infty[$,
$$
f\left(\mathcal{S} \mathcal{S}^*\right) \mathcal{S}=\mathcal{S} f\left(\mathcal{S}^* \mathcal{S}\right)
$$

\end{proposition}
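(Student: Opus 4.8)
The plan is to reduce the identity to the singular system of the compact operator $\mathcal{S}$. Since $\mathcal{S}:H_1\to H_2$ is compact, there are orthonormal systems $(e_j)_j$ in $H_1$ and $(f_j)_j$ in $H_2$ and strictly positive singular values $\sigma_j$ (decreasing to $0$ if there are infinitely many) with $\mathcal{S}x=\sum_j\sigma_j\langle x,e_j\rangle f_j$ for all $x\in H_1$, and hence $\mathcal{S}^*y=\sum_j\sigma_j\langle y,f_j\rangle e_j$ for all $y\in H_2$. First I would record the immediate consequences $\mathcal{S}^*\mathcal{S}=\sum_j\sigma_j^2\,e_j\otimes e_j$ on $H_1$ and $\mathcal{S}\mathcal{S}^*=\sum_j\sigma_j^2\,f_j\otimes f_j$ on $H_2$, where $e\otimes e=\langle\cdot,e\rangle e$ as in the paper's notation. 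These are exactly the spectral decompositions of the two positive, compact, self-adjoint operators; the only other spectral value is $0$, with eigenspace $\ker\mathcal{S}=\ker(\mathcal{S}^*\mathcal{S})$ on $H_1$ and $\ker\mathcal{S}^*=\ker(\mathcal{S}\mathcal{S}^*)$ on $H_2$. In particular $\mathrm{spec}(\mathcal{S}^*\mathcal{S})=\mathrm{spec}(\mathcal{S}\mathcal{S}^*)\subseteq\{0\}\cup\{\sigma_j^2\}$, so the functional calculus only ever evaluates $f$ on this discrete set together with $0$, and $f(\mathcal{S}^*\mathcal{S})$, $f(\mathcal{S}\mathcal{S}^*)$ are bounded operators (as is the case for all the power functions and spectral filters used in the paper).

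Next I would spell out the functional calculus in the two eigenbases, $f(\mathcal{S}^*\mathcal{S})=f(0)\,I_{H_1}+\sum_j\big(f(\sigma_j^2)-f(0)\big)\,e_j\otimes e_j$ and $f(\mathcal{S}\mathcal{S}^*)=f(0)\,I_{H_2}+\sum_j\big(f(\sigma_j^2)-f(0)\big)\,f_j\otimes f_j$, and then apply each side of the claimed identity to an arbitrary $x\in H_1$. Using $\langle\mathcal{S}x,f_j\rangle=\sigma_j\langle x,e_j\rangle$ and the fact that $\mathcal{S}x\in\overline{\mathrm{ran}\,\mathcal{S}}=\overline{\mathrm{span}}\{f_j\}$, one obtains $f(\mathcal{S}\mathcal{S}^*)\,\mathcal{S}x=\sum_j f(\sigma_j^2)\,\sigma_j\langle x,e_j\rangle f_j$; using $\mathcal{S}e_j=\sigma_j f_j$ and $\mathcal{S}|_{\ker\mathcal{S}}=0$, one obtains $\mathcal{S}\,f(\mathcal{S}^*\mathcal{S})x=\sum_j f(\sigma_j^2)\,\sigma_j\langle x,e_j\rangle f_j$ as well. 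Since the two expressions coincide for every $x$, this yields $f(\mathcal{S}\mathcal{S}^*)\mathcal{S}=\mathcal{S}f(\mathcal{S}^*\mathcal{S})$.

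The only subtlety --- the step I expect to need the most care, though it remains modest --- is the bookkeeping around the zero eigenvalue: one must check that the $f(0)\,I$ terms do not break the identity. On the $\mathcal{S}f(\mathcal{S}^*\mathcal{S})$ side this is immediate, since $f(0)\mathcal{S}x=\mathcal{S}\,f(0)x$ and the rank-one corrections supported on $\ker\mathcal{S}$ are annihilated by $\mathcal{S}$; on the $f(\mathcal{S}\mathcal{S}^*)\mathcal{S}$ side, since $\mathcal{S}x\perp\ker\mathcal{S}^*$, the $f(0)$-component of $f(\mathcal{S}\mathcal{S}^*)$ contributes only through its $f(0)I_{H_2}$ part, which matches. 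As an alternative route --- cleaner to state but requiring $f$ continuous --- one can bypass the singular value decomposition altogether: verify $(\mathcal{S}\mathcal{S}^*)^n\mathcal{S}=(\mathcal{S}\mathcal{S}^*)^{n-1}\mathcal{S}(\mathcal{S}^*\mathcal{S})=\cdots=\mathcal{S}(\mathcal{S}^*\mathcal{S})^n$ by induction on $n$, extend to polynomials by linearity, and pass to continuous $f$ via Stone--Weierstrass on the common compact spectral interval together with continuity of the functional calculus, falling back on the spectral-decomposition argument above for general $f$. I would present the singular value decomposition proof as the main one, since it handles all admissible $f$ uniformly and makes the role of the two singular systems transparent; this is also precisely the identity invoked in the proof of Proposition \ref{T2I} to commute powers of $\Sigma_M$ and $\mathcal{L}_M$ past $\mathcal{S}_M$ and $\mathcal{S}_M^*$.
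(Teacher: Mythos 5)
Your argument is correct and is precisely the route the paper indicates: its proof of this proposition consists of the single remark that the identity follows from the singular value decomposition of a compact operator, which is exactly the decomposition you carry out (including the careful treatment of the $f(0)$-term on the kernels, which the paper leaves implicit). No discrepancy to report.
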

\begin{proof}
The result can be proved using singular value decomposition of a compact operator.
\end{proof}

\begin{proposition}[\cite{spectral.rates} (Lemma 10)]
\label{ineqvolkan}
 Let $L$ be a compact, positive operator on a separable Hilbert space $H$ such that $\|L\| \leq \kappa^2$. Then for any $\lambda \geq 0$,
 \begin{align*}
\left\|(L+\lambda)^\alpha \phi_\lambda(L)\right\| &\leq 2 D \lambda^{-(1-\alpha)}, \quad \forall \alpha \in[0,1],\\
\left\|L^\alpha \phi_\lambda(L)\right\| &\leq  D \lambda^{-(1-\alpha)}, \quad \forall \alpha \in[0,1],
 \end{align*}

 where $D$ is defined in \eqref{def.phi}.
\end{proposition}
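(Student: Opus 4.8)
The plan is to reduce both operator-norm inequalities to elementary one-variable estimates via the spectral theorem, and then to interpolate between the two size conditions \eqref{def.phi} and~(ii) in the definition of a regularization function. Since $L$ is compact, positive and self-adjoint, its spectrum lies in $[0,\|L\|]\subseteq[0,\kappa^2]$, and for every bounded Borel function $g$ on $[0,\kappa^2]$ the functional calculus yields
\[
\|g(L)\|\;=\;\sup_{t\in\sigma(L)}|g(t)|\;\le\;\sup_{0\le t\le\kappa^2}|g(t)|\,.
\]
Applying this with $g(t)=t^{\alpha}\phi_\lambda(t)$ and with $g(t)=(t+\lambda)^{\alpha}\phi_\lambda(t)$ --- both bounded on $[0,\kappa^2]$ for $\lambda\in(0,1]$ by condition~(ii) --- it suffices to prove the two scalar bounds
\[
\sup_{0\le t\le\kappa^2}t^{\alpha}|\phi_\lambda(t)|\le D\,\lambda^{-(1-\alpha)}\,,\qquad
\sup_{0\le t\le\kappa^2}(t+\lambda)^{\alpha}|\phi_\lambda(t)|\le 2D\,\lambda^{-(1-\alpha)}\,,
\]
for all $\alpha\in[0,1]$; only $\lambda\in(0,1]$ is needed, and for $\lambda=0$ with $\alpha<1$ the bounds are vacuous while for $\alpha=1$ they restate \eqref{def.phi}.

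For the first bound I would factor the fractional power as $t^{\alpha}|\phi_\lambda(t)|=\bigl(t\,|\phi_\lambda(t)|\bigr)^{\alpha}\,|\phi_\lambda(t)|^{1-\alpha}$ and use \eqref{def.phi} on the first factor ($t|\phi_\lambda(t)|\le D$) and condition~(ii) on the second ($|\phi_\lambda(t)|\le E/\lambda$), which gives $t^{\alpha}|\phi_\lambda(t)|\le D^{\alpha}(E/\lambda)^{1-\alpha}$. For the second bound I would use the subadditivity of $x\mapsto x^{\alpha}$ on $[0,\infty)$, namely $(t+\lambda)^{\alpha}\le t^{\alpha}+\lambda^{\alpha}$, then bound $t^{\alpha}|\phi_\lambda(t)|$ as above and $\lambda^{\alpha}|\phi_\lambda(t)|\le\lambda^{\alpha}E/\lambda=E\lambda^{-(1-\alpha)}$, and add the two contributions; both right-hand sides are then multiples of $\lambda^{-(1-\alpha)}$.

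The only point that requires any care is the bookkeeping of the two constants $D$ and $E$: since the statement is phrased purely in terms of $D$, one should either redefine $D:=\max\{D,E\}$ or simply note that in every concrete instance relevant here --- Tikhonov, Landweber gradient descent, Heavy--Ball, Nesterov --- one has $E\le D$. Under $E\le D$ the first estimate becomes $D^{\alpha}D^{1-\alpha}\lambda^{-(1-\alpha)}=D\lambda^{-(1-\alpha)}$ and the second becomes $(D+E)\lambda^{-(1-\alpha)}\le 2D\lambda^{-(1-\alpha)}$, exactly the claim. I do not expect a genuine obstacle here: the content is just the functional-calculus reduction plus one H\"older/interpolation split, and the most ``technical'' ingredient --- checking that $\phi_\lambda$, hence $g$, is a bounded Borel function on $\sigma(L)$ so that the calculus applies --- is immediate from condition~(ii).
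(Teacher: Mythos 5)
Your argument is correct in substance, and it is the standard proof of this fact. Note that the paper itself gives no proof here: Proposition \ref{ineqvolkan} is imported verbatim from the cited reference (\cite{spectral.rates}, Lemma 10), so there is no in-paper argument to compare against. Your route --- reduce to scalar bounds by the spectral theorem, then interpolate via the factorization $t^{\alpha}|\phi_\lambda(t)|=\bigl(t|\phi_\lambda(t)|\bigr)^{\alpha}|\phi_\lambda(t)|^{1-\alpha}$ together with $(t+\lambda)^{\alpha}\le t^{\alpha}+\lambda^{\alpha}$ --- is exactly how such lemmas are proved in the spectral-filtering literature, and every step checks out (including the endpoint cases $\alpha=0,1$).

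The one point you flag, the bookkeeping of $D$ versus $E$, is real and is in fact an imprecision of the statement as transcribed rather than of your proof: the bound as written involves only the constant $D$ from \eqref{def.phi}, whereas the interpolation inevitably produces $D^{\alpha}E^{1-\alpha}$ and $D^{\alpha}E^{1-\alpha}+E$. Your resolution (observe $E\le D$, or replace $D$ by $D\vee E$) is the right one, and it is harmless here since all three examples in the paper have $D=E$ (Tikhonov: $D=E=1$; Landweber: $D=E=\alpha$; Heavy--Ball: $D=E=2$). Two further small technicalities you could acknowledge, though they are shared with the paper's own setup and do not affect anything: the conditions (i)--(ii) are stated only for $t\in(0,1]$, so one implicitly normalizes $\kappa^2\le 1$ (the definition of $\phi$ even restricts $t$ to $[0,1]$); and $0$ lies in the spectrum of a compact operator on an infinite-dimensional space, so one should extend the scalar bounds to $t=0$ by continuity, which holds for all the filters considered.
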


\begin{proposition}
\label{ineq5}
With probability at least $1-\delta$ we have
\begin{align*}
\|f^*_\lambda\|_\infty &\leq  2 \kappa^{2r+1} R D \,\lambda^{-(\frac{1}{2}-r)^+},\\
\|f^*_\lambda\|_{\mathcal{H}_M} &\leq 2 \kappa^{2r} R D \,\lambda^{-(\frac{1}{2}-r)^+},\\
\end{align*}
as long as $M\geq \frac{8 p\kappa^2 \beta_\infty}{\lambda}$, with $\beta_\infty=\log \frac{4 \kappa^2(\mathcal{N}_{\mathcal{L}_\infty}(\lambda)+1)}{\delta\|\mathcal{L}_\infty\|} $.

\end{proposition}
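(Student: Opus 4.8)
\emph{Overview.} The plan is to deduce the uniform bound from the RKHS bound, and to prove the latter by rewriting $\|f_\lambda^*\|_{\mathcal{H}_M}$ as an $L^2$-norm and then isolating the singular factor $\mathcal{L}_{M,\lambda}^{-1/2}$ against the extra smoothness $\mathcal{L}_\infty^r$ coming from Assumption~\ref{ass:source}. For the reduction: for $f\in\mathcal{H}_M$ the reproducing property in Definition~\ref{def-vvk} gives $f(x)=K_{M,x}^{*}f$, hence $\|f(x)\|_{\mathcal{Y}}\le\|K_{M,x}^{*}\|\,\|f\|_{\mathcal{H}_M}$ with $\|K_{M,x}^{*}\|^{2}=\|K_M(x,x)\|_{\mathcal{L}(\mathcal{Y})}\le\kappa^{2}$, the last step being Assumption~\ref{ass:kernel} applied to the finite feature expansion of $K_M$ (exactly as in the remark following that assumption). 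Thus $\|f\|_{\infty}\le\kappa\,\|f\|_{\mathcal{H}_M}$, so it suffices to prove the second displayed inequality of the statement, after which the first follows by multiplying by $\kappa$.

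\emph{Rewriting the RKHS norm and the key factorization.} Set $\psi:=\phi_\lambda(\mathcal{L}_M)g_\rho\in L^{2}(\mathcal{X},\rho_X)$, so that $f_\lambda^{*}=\mathcal{S}_M^{*}\psi$ and
\[
\|f_\lambda^{*}\|_{\mathcal{H}_M}^{2}=\langle\mathcal{S}_M^{*}\psi,\mathcal{S}_M^{*}\psi\rangle_{\mathcal{H}_M}=\langle\psi,\mathcal{L}_M\psi\rangle_{L^{2}}=\|\mathcal{L}_M^{1/2}\psi\|_{L^{2}}^{2}.
\]
Inserting $g_\rho=\mathcal{L}_\infty^{r}h$ and using that $\mathcal{L}_M^{1/2}$ commutes with $\phi_\lambda(\mathcal{L}_M)$, this equals $\|\mathcal{L}_M^{1/2}\phi_\lambda(\mathcal{L}_M)\mathcal{L}_\infty^{r}h\|_{L^{2}}$, which I would then split as
\[
\mathcal{L}_M^{1/2}\phi_\lambda(\mathcal{L}_M)\mathcal{L}_\infty^{r}h
=\bigl(\mathcal{L}_M^{1/2}\phi_\lambda(\mathcal{L}_M)\mathcal{L}_{M,\lambda}^{1/2}\bigr)\bigl(\mathcal{L}_{M,\lambda}^{-1/2}\mathcal{L}_{\infty,\lambda}^{1/2}\bigr)\bigl(\mathcal{L}_{\infty,\lambda}^{-1/2}\mathcal{L}_\infty^{r}h\bigr).
\]
The first factor is a function of $\mathcal{L}_M$ only, so by the functional calculus its norm is $\sup_{t}t^{1/2}(t+\lambda)^{1/2}|\phi_\lambda(t)|\le\sup_{t}(t+\lambda)|\phi_\lambda(t)|\le 2D$ (Proposition~\ref{ineqvolkan} with $\alpha=1$). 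The second factor is $\le 2$ with probability at least $1-\delta$ by the operator comparison of Proposition~\ref{OPbound5}, whose hypothesis is precisely $M\ge 8p\kappa^{2}\beta_\infty/\lambda$; this is the only random event in the whole argument.

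\emph{The third factor and conclusion.} Again by the functional calculus $\|\mathcal{L}_{\infty,\lambda}^{-1/2}\mathcal{L}_\infty^{r}\|=\sup_{0\le t\le\|\mathcal{L}_\infty\|}t^{r}(t+\lambda)^{-1/2}$, which I would bound by cases: for $r\ge\tfrac12$, $t^{r}(t+\lambda)^{-1/2}=t^{r-1/2}\cdot t^{1/2}(t+\lambda)^{-1/2}\le\|\mathcal{L}_\infty\|^{\,r-1/2}\le\kappa^{2r-1}$; for $r<\tfrac12$, $t^{r}(t+\lambda)^{-1/2}=\bigl(t/(t+\lambda)\bigr)^{r}(t+\lambda)^{-(1/2-r)}\le\lambda^{-(1/2-r)}$. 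With $\|h\|_{L^{2}(\rho_X)}\le R$ the third factor is thus at most $\kappa^{(2r-1)_{+}}R\,\lambda^{-(1/2-r)^{+}}$. Multiplying the three bounds gives $\|f_\lambda^{*}\|_{\mathcal{H}_M}\le C\,\kappa^{2r}RD\,\lambda^{-(1/2-r)^{+}}$ on that event, with the explicit constant obtained by tracking the above (using $\kappa\ge1$, which only weakens Assumption~\ref{ass:kernel}, to pin down $C=2$), and then the first step yields the bound on $\|f_\lambda^{*}\|_{\infty}$.

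\emph{Main obstacle.} The genuinely non-trivial point is the regime $r\ge\tfrac12$: the crude estimate $\|\mathcal{L}_M^{1/2}\phi_\lambda(\mathcal{L}_M)\|\le D\lambda^{-1/2}$ combined with $\|\mathcal{L}_\infty^{r}h\|\le\kappa^{2r}R$ only produces a $\lambda^{-1/2}$ factor, whereas the asserted bound is $\lambda$-free there. The resolution is the factorization above — one must spend half a power of the source smoothness, $\mathcal{L}_\infty^{1/2}$, against the singular factor $\mathcal{L}_{M,\lambda}^{-1/2}$, which forces the cross term $\mathcal{L}_{M,\lambda}^{-1/2}\mathcal{L}_{\infty,\lambda}^{1/2}$ and hence the single use of Proposition~\ref{OPbound5}. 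The remaining work is purely bookkeeping: choosing the inserted powers of $\mathcal{L}_{M,\lambda}$ and $\mathcal{L}_{\infty,\lambda}$ so that no negative power of $\mathcal{L}_\infty$ (which is not boundedly invertible) ever appears, in particular in the case $r<\tfrac12$.
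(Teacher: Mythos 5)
Your proof follows essentially the same route as the paper's: reduce the sup-norm to the $\mathcal{H}_M$-norm via the reproducing property, rewrite $\|\mathcal{S}_M^*\psi\|_{\mathcal{H}_M}$ as $\|\mathcal{L}_M^{1/2}\psi\|_{L^2}$, and control the mismatch between $\mathcal{L}_M$ and $\mathcal{L}_\infty$ through the single random event $\|\mathcal{L}_{M,\lambda}^{-1/2}\mathcal{L}_{\infty,\lambda}^{1/2}\|\le 2$ of Proposition \ref{OPbound5}; the paper merely packages the split as two factors (using Cordes' inequality, Proposition \ref{ineq2}, to raise the power-$\tfrac12$ comparison to power $r$ when $r<\tfrac12$) instead of your three. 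The only quibble is the constant: your three factors multiply to $2D\cdot 2\cdot \kappa^{(2r-1)_+}R$, and $\kappa\ge 1$ alone does not reduce $4\kappa^{(2r-1)_+}$ to the stated $2\kappa^{2r}$ --- you recover the rate $\lambda^{-(\frac12-r)^{+}}$ exactly but lose a factor of $2$, which you could avoid by bounding the first factor as the paper does, i.e.\ keeping $\mathcal{L}_M^{1/2}\phi_\lambda(\mathcal{L}_M)\mathcal{L}_{M,\lambda}^{\,r\wedge\frac12}$ together so that Proposition \ref{ineqvolkan} yields $D\lambda^{-(\frac12-r)^{+}}$ directly.
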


\begin{proof}
Note that $f_\lambda^*\in \mathcal{H}_M$. Therefore we obtain from the reproducing property and the definition  $g_\rho=\mathcal{L}^r_\infty h $,  for any $x\in\mathcal{X}$:
\begin{align*}
\|f_\lambda^*(x)\|_\mathcal{Y}&=\| K_{M,x}^*f_\lambda^* \|_\mathcal{Y}\\
&\leq \kappa \left\|f_\lambda^*\right\|_{\mathcal{H}_M} \\
&= \kappa \left\|\mathcal{S}^*_M\phi_\lambda(\mathcal{L}_M) g_\rho \right\|_{\mathcal{H}_M} \\
&=  \kappa \left\|\mathcal{L}_M^{\frac{1}{2}}\phi_\lambda(\mathcal{L}_M) \mathcal{L}_\infty^r h \right\|_{L^2(\rho_x)}.
\end{align*}
Using the assumption $\|h\|_{L^2(\rho_x)}\leq R$ we therefore have
\begin{align}
&\|f_\lambda^*\|_\infty
\leq  \kappa  R \left\|\mathcal{L}_M^{\frac{1}{2}}\phi_\lambda(\mathcal{L}_M) \mathcal{L}_{M,\lambda}^{(r\wedge\frac{1}{2})}\right\|  \left\|\mathcal{L}_{M,\lambda}^{-(r\wedge\frac{1}{2})}\mathcal{L}_{\infty}^{r}\right\|
= \kappa  R\,\, I\cdot II\label{fbound},\\
&\|f_\lambda^*\|_{\mathcal{H}_M}
\leq   R \left\|\mathcal{L}_M^{\frac{1}{2}}\phi_\lambda(\mathcal{L}_M) \mathcal{L}_{M,\lambda}^{(r\wedge\frac{1}{2})}\right\|  \left\|\mathcal{L}_{M,\lambda}^{-(r\wedge\frac{1}{2})}\mathcal{L}_{\infty}^{r}\right\|
=   R\,\, I\cdot II\label{fbound2}.
\end{align}
\begin{itemize}
\item [$I)$] For the first norm in \eqref{fbound} we have from Proposition \ref{ineqvolkan} that 
\begin{align*}
I&=\left\|\mathcal{L}_M^{\left(\frac{1}{2}+\left(r\wedge\frac{1}{2}\right)\right)}\phi_\lambda(\mathcal{L}_M) \right\|
\leq
\begin{cases}
D & r\geq \frac{1}{2}\\
D  \lambda^{r-\frac{1}{2}} & r<\frac{1}{2} 
\end{cases}\\
&\leq D \lambda^{-(\frac{1}{2}-r)^+} .
\end{align*}

\item [$II)$] For the second norm in \eqref{fbound} we have from Proposition \ref{ineq2} and \ref{OPbound5} that with probability at least $1-\delta$
\begin{align*}
II&= 
\begin{cases}
\left\|\mathcal{L}_{M,\lambda}^{-\frac{1}{2}}\mathcal{L}_{\infty,\lambda}^{r}\right\| \leq \left\|\mathcal{L}_{M,\lambda}^{-\frac{1}{2}}\mathcal{L}_{\infty,\lambda}^{\frac{1}{2}}\right\|\left\|\mathcal{L}_{\infty}^{r-\frac{1}{2}}\right\|\leq 2\kappa^{2r-1}& r\geq \frac{1}{2}\\[9pt]
\left\|\mathcal{L}_{M,\lambda}^{-r}\mathcal{L}_{\infty,\lambda}^{r}\right\| \leq \left\|\mathcal{L}_{M,\lambda}^{-\frac{1}{2}}\mathcal{L}_{\infty,\lambda}^{\frac{1}{2}}\right\|^{2r} \leq 4^r \leq 2 & r<\frac{1}{2} 
\end{cases}\\[5pt]
&\leq 2\kappa^{2r}
\end{align*}
\end{itemize}

Plugging the bounds of $I$ and $II$ into \eqref{fbound} leads to 
\begin{align*}
&\|f_\lambda^*\|_\infty\leq   2 \kappa^{2r+1} R D \,\lambda^{-(\frac{1}{2}-r)^+},\\
&\|f_\lambda^*\|_{\mathcal{H}_M}\leq   2 \kappa^{2r} R D \,\lambda^{-(\frac{1}{2}-r)^+}.
\end{align*}

\end{proof}

\begin{proposition}
\label{OPbound1}
Let $\mathcal{H}$ be a separable Hilbert space and let $A$ and $B$  be two bounded self-adjoint positive linear operators on $\mathcal{H}$ and $\lambda>0$. Then

$$
\left\|A_\lambda^{-\frac{1}{2}}B_\lambda ^{\frac{1}{2}}\right\| \leq(1-c)^{-\frac{1}{2}}, \quad \left\|A_\lambda ^{\frac{1}{2}}B_\lambda ^{-\frac{1}{2}}\right\| \leq (1+c)^{\frac{1}{2}}
$$
with
$$
c=\left\|B_\lambda ^{-\frac{1}{2}}(A-B)B_\lambda ^{-\frac{1}{2}}\right\|.
$$
\end{proposition}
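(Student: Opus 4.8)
The plan is to derive both estimates from a single two-sided operator inequality obtained by sandwiching $A_\lambda := A+\lambda I$ between copies of $B_\lambda^{-1/2}$, where $B_\lambda := B + \lambda I$. First note that since $\lambda>0$ and $A,B$ are positive and self-adjoint, one has $A_\lambda \succeq \lambda I$ and $B_\lambda \succeq \lambda I$ in the Loewner order; hence both operators are boundedly invertible, their positive square roots $A_\lambda^{1/2},B_\lambda^{1/2}$ and inverse square roots $A_\lambda^{-1/2},B_\lambda^{-1/2}$ are well defined and bounded, and $c = \|B_\lambda^{-1/2}(A-B)B_\lambda^{-1/2}\|$ is finite. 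If $c\geq 1$ the first inequality is vacuous, so I may assume $c<1$.

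Step 1 (key identity). Since $A-B = A_\lambda - B_\lambda$, conjugating by $B_\lambda^{-1/2}$ gives $B_\lambda^{-1/2}A_\lambda B_\lambda^{-1/2} = I + B_\lambda^{-1/2}(A-B)B_\lambda^{-1/2}$. The self-adjoint operator $B_\lambda^{-1/2}(A-B)B_\lambda^{-1/2}$ has norm $c$, hence lies between $-cI$ and $cI$ in the Loewner order, and therefore $(1-c)I \preceq B_\lambda^{-1/2}A_\lambda B_\lambda^{-1/2} \preceq (1+c)I$.

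Step 2 (the two bounds). For the second inequality, apply $\|T\|^2 = \|T^*T\|$ with $T = A_\lambda^{1/2}B_\lambda^{-1/2}$, so that $T^*T = B_\lambda^{-1/2}A_\lambda B_\lambda^{-1/2}$; the right-hand side of Step 1 gives $\|A_\lambda^{1/2}B_\lambda^{-1/2}\|^2 \leq 1+c$. For the first inequality, the left-hand side of Step 1 together with anti-monotonicity of the operator inverse on positive operators yields $\big(B_\lambda^{-1/2}A_\lambda B_\lambda^{-1/2}\big)^{-1} \preceq (1-c)^{-1}I$; since this inverse equals $B_\lambda^{1/2}A_\lambda^{-1}B_\lambda^{1/2}$, applying $\|T\|^2 = \|T^*T\|$ now with $T = A_\lambda^{-1/2}B_\lambda^{1/2}$ gives $\|A_\lambda^{-1/2}B_\lambda^{1/2}\|^2 = \|B_\lambda^{1/2}A_\lambda^{-1}B_\lambda^{1/2}\| \leq (1-c)^{-1}$. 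Taking square roots in both cases finishes the proof.

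There is no substantial obstacle here; the care required is purely in the bookkeeping — verifying that all square roots and inverses exist and are bounded (this is exactly where $\lambda>0$ is used), correctly identifying $T^*T$ in each invocation of $\|T\|^2=\|T^*T\|$, and observing that the case $c\geq 1$ makes the first bound trivial.
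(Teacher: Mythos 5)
Your proof is correct and follows essentially the same route as the paper: for the bound $\|A_\lambda^{1/2}B_\lambda^{-1/2}\|\leq(1+c)^{1/2}$ the paper uses exactly your computation $\|T\|^2=\|T^*T\|$ plus the triangle inequality, which is the same content as the upper half of your Loewner sandwich. For the bound $\|A_\lambda^{-1/2}B_\lambda^{1/2}\|\leq(1-c)^{-1/2}$ the paper merely cites an external reference, whereas you supply the standard self-contained argument (identifying $T^*T=B_\lambda^{1/2}A_\lambda^{-1}B_\lambda^{1/2}=(B_\lambda^{-1/2}A_\lambda B_\lambda^{-1/2})^{-1}$ and using anti-monotonicity of the inverse, with the sensible remark that $c\geq 1$ makes the claim vacuous); this is a welcome completion rather than a different method.
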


\begin{proof}
The proof for the first inequality can for example be found in \cite{features} (Proposition 8). Using simple calculations the second inequality follows from 
\begin{align*}
\left\|(A+\lambda I)^{\frac{1}{2}}(B+\lambda I)^{-\frac{1}{2}}\right\|^2&=\left\|(B+\lambda I)^{-\frac{1}{2}}(A+\lambda I)(B+\lambda I)^{-\frac{1}{2}}\right\|\\
&\leq \left\|(B+\lambda I)^{-\frac{1}{2}}(A-B)(B+\lambda I)^{-\frac{1}{2}}\right\|+\|I\|\leq 1+c\\
\end{align*}

\end{proof}

\begin{proposition}[\cite{features} (Lemma 9)]
\label{OPbound3}
For any $M\geq8\kappa^4\|\mathcal{L}_\infty\|^{-1}\log^2 \frac{2}{\delta}$  we have with probability at least $1-\delta$
$$
\|\mathcal{L}_M\|\geq\frac{1}{2}\|\mathcal{L}_\infty\|.
$$
\end{proposition}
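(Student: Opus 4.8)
The plan is to bound $\|\mathcal{L}_M\|$ from below by evaluating its (nonnegative) quadratic form in a single, well-chosen direction --- a top eigenfunction of the limiting operator $\mathcal{L}_\infty$ --- and then to control the resulting scalar empirical average via a one-sided Bernstein inequality. Since $\mathcal{L}_M = \mathcal{S}_M\mathcal{S}_M^*$ is positive and self-adjoint, one has $\|\mathcal{L}_M\| = \sup_{\|f\|_{L^2(\rho_X)}=1}\langle f,\mathcal{L}_M f\rangle_{L^2(\rho_X)}$, so it suffices to exhibit one unit vector on which this form is at least $\tfrac12\|\mathcal{L}_\infty\|$ with the required probability. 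Assuming $\|\mathcal{L}_\infty\|>0$ (otherwise the claim is vacuous), compactness and self-adjointness of $\mathcal{L}_\infty$ provide a unit eigenfunction $f_* \in L^2(\mathcal{X},\rho_X)$ with $\mathcal{L}_\infty f_* = \|\mathcal{L}_\infty\|\, f_*$.

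I would then use the empirical structure $\mathcal{L}_M = \frac1M\sum_{m=1}^M \mathcal{L}_{\omega_m}$, where $\mathcal{L}_\omega$ is the integral operator on $L^2(\rho_X)$ with kernel $\sum_{i=1}^p \varphi^{(i)}(\cdot,\omega)\otimes\varphi^{(i)}(\cdot,\omega)$, the $\omega_m$ are i.i.d.\ from $\pi$, and $\mathbb{E}_\omega[\mathcal{L}_\omega] = \mathcal{L}_\infty$ by the integral representation \eqref{kernel}. Setting $\xi_m := \langle f_*, \mathcal{L}_{\omega_m} f_*\rangle_{L^2(\rho_X)} = \sum_{i=1}^p \langle f_*, \varphi^{(i)}(\cdot,\omega_m)\rangle_{L^2(\rho_X)}^2 \geq 0$, the random variables $\xi_1,\dots,\xi_M$ are i.i.d.; Cauchy--Schwarz and Assumption \ref{ass:kernel} give $\xi_m \leq \sum_{i=1}^p\|\varphi^{(i)}(\cdot,\omega_m)\|_{L^2(\rho_X)}^2 = \int \sum_{i=1}^p \|\varphi^{(i)}(x,\omega_m)\|_\mathcal{Y}^2\, d\rho_X(x) \leq \kappa^2$; and since $f_*$ is a top eigenfunction, $\mathbb{E}[\xi_m] = \langle f_*,\mathcal{L}_\infty f_*\rangle = \|\mathcal{L}_\infty\|$. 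Because $0 \leq \xi_m \leq \kappa^2$, the variance obeys $\operatorname{Var}(\xi_m) \leq \mathbb{E}[\xi_m^2] \leq \kappa^2\,\mathbb{E}[\xi_m] = \kappa^2\|\mathcal{L}_\infty\|$.

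Applying a one-sided Bernstein inequality for bounded i.i.d.\ scalars (see Appendix III) to $\frac1M\sum_m\xi_m$ then gives, with probability at least $1-\delta$,
\begin{align*}
\|\mathcal{L}_M\| \;\geq\; \langle f_*,\mathcal{L}_M f_*\rangle_{L^2(\rho_X)} = \frac1M\sum_{m=1}^M \xi_m
\;\geq\; \|\mathcal{L}_\infty\| - \sqrt{\frac{2\kappa^2\|\mathcal{L}_\infty\|\log(2/\delta)}{M}} - \frac{2\kappa^2\log(2/\delta)}{3M}.
\end{align*}
A direct computation shows that the stated threshold $M \geq 8\kappa^4\|\mathcal{L}_\infty\|^{-1}\log^2(2/\delta)$, together with $\|\mathcal{L}_\infty\| \leq \kappa^2$ and the usual normalization $\kappa \geq 1$, forces the two subtracted terms to sum to at most $\tfrac12\|\mathcal{L}_\infty\|$, whence $\|\mathcal{L}_M\| \geq \tfrac12\|\mathcal{L}_\infty\|$ on the same event.

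The argument is essentially routine once the reduction to a single test direction is made; the only delicate point is the bookkeeping of the absolute constant so that the two error terms stay jointly below $\tfrac12\|\mathcal{L}_\infty\|$ under exactly the stated bound on $M$. This is precisely where it matters to use the sharp variance proxy $\operatorname{Var}(\xi_m) \leq \kappa^2\|\mathcal{L}_\infty\|$ rather than the crude $\kappa^4$: the former yields the $\|\mathcal{L}_\infty\|^{-1}$ dependence in the condition on $M$, whereas the latter would only give the weaker $\|\mathcal{L}_\infty\|^{-2}$. Alternatively, the statement is \cite{features} (Lemma 9), whose proof follows exactly these lines.
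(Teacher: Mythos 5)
Your proof is correct in substance but takes a genuinely different route from the paper. The paper's proof is two lines: it invokes event $E_6$ of Proposition \ref{OPbound2} (a Bernstein inequality for the Hilbert--Schmidt-valued averages $\mathcal{L}_M^{(m)}$, with variance proxy $\kappa^4$) to get $\|\mathcal{L}_\infty-\mathcal{L}_M\|_{HS}\leq\frac{1}{2}\|\mathcal{L}_\infty\|$, and concludes by the reverse triangle inequality $\|\mathcal{L}_M\|\geq\|\mathcal{L}_\infty\|-\|\mathcal{L}_\infty-\mathcal{L}_M\|_{HS}$. You instead reduce to a scalar problem by testing the quadratic form on a top eigenfunction of $\mathcal{L}_\infty$ and apply one-dimensional Bernstein with the refined variance proxy $\kappa^2\|\mathcal{L}_\infty\|$. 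Your route buys something real: with the crude proxy $\kappa^4$, the condition $M\geq8\kappa^4\|\mathcal{L}_\infty\|^{-1}\log^2(2/\delta)$ only yields $\|\mathcal{L}_\infty-\mathcal{L}_M\|_{HS}\lesssim\|\mathcal{L}_\infty\|^{1/2}/\sqrt{2}$, which is below $\frac{1}{2}\|\mathcal{L}_\infty\|$ only when $\|\mathcal{L}_\infty\|\geq2$; making the paper's argument scale-correct would require $M\gtrsim\kappa^4\|\mathcal{L}_\infty\|^{-2}\log^2(2/\delta)$. Your variance bound is exactly what produces the $\|\mathcal{L}_\infty\|^{-1}$ dependence actually stated in the hypothesis, as you note. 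The only caveat is your last "direct computation": under $M\geq8\kappa^4\|\mathcal{L}_\infty\|^{-1}\log^2(2/\delta)$, $\kappa\geq1$ and $\log(2/\delta)\geq1$, the two error terms sum to at most $\bigl(\tfrac{1}{2}+\tfrac{1}{12}\bigr)\|\mathcal{L}_\infty\|=\tfrac{7}{12}\|\mathcal{L}_\infty\|$, not $\tfrac{1}{2}\|\mathcal{L}_\infty\|$, so the constant $8$ should be enlarged slightly (e.g.\ to $16$) to close the bound literally --- a minor blemish, and one the paper's own proof shares in a worse form.
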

\begin{proof}
For $M\geq8\kappa^4\|\mathcal{L}_\infty\|^{-1}\log^2 \frac{2}{\delta}$ we have from Proposition \ref{OPbound2} ($E_6$) that with probability at least $1-\delta$, $\left\|\mathcal{L}_\infty-\mathcal{L}_M\right\|_{H S}\leq \frac{1}{2}\|\mathcal{L}_\infty\|$ 
and therefore
$$
\|\mathcal{L}_M\|  \geq \|\mathcal{L}_\infty\|-\left\|\mathcal{L}_\infty-\mathcal{L}_M\right\|_{H S}\geq \frac{1}{2} \|\mathcal{L}_\infty\|.
$$
\end{proof}

\begin{proposition}
\label{OPbound5}
Providing Assumption \ref{ass:kernel} 
we have for any $M\geq \frac{8 p\kappa^2 \beta_\infty}{\lambda}$ \\where $\beta_\infty=\log \frac{4 \kappa^2(\mathcal{N}_{\mathcal{L}_\infty}(\lambda)+1)}{\delta\|\mathcal{L}_\infty\|} $ with probability at least $1-\delta$
$$
\left\|\mathcal{L}_{M,\lambda}^{-\frac{1}{2}}\mathcal{L}_{\infty,\lambda}^{\frac{1}{2}}\right\| \leq 2, \quad \left\|\mathcal{L}_{M,\lambda}^{\frac{1}{2}}\mathcal{L}_{\infty,\lambda}^{-\frac{1}{2}}\right\| \leq 2.
$$

\end{proposition}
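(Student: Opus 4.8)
The plan is to reduce both norm inequalities to a single perturbation estimate and then control that perturbation by an operator Bernstein inequality that sees only the effective dimension. First I would apply Proposition~\ref{OPbound1} with $A=\mathcal{L}_M$ and $B=\mathcal{L}_\infty$, which gives
\[
\left\|\mathcal{L}_{M,\lambda}^{-\frac12}\mathcal{L}_{\infty,\lambda}^{\frac12}\right\|\le (1-c)^{-\frac12},
\qquad
\left\|\mathcal{L}_{M,\lambda}^{\frac12}\mathcal{L}_{\infty,\lambda}^{-\frac12}\right\|\le (1+c)^{\frac12},
\qquad
c:=\left\|\mathcal{L}_{\infty,\lambda}^{-\frac12}\left(\mathcal{L}_M-\mathcal{L}_\infty\right)\mathcal{L}_{\infty,\lambda}^{-\frac12}\right\|.
\]
Hence it suffices to prove that, under the stated lower bound on $M$, the event $\{c\le \tfrac34\}$ holds with probability at least $1-\delta$: on that event $(1-c)^{-1/2}\le 2$ and $(1+c)^{1/2}\le 2$ simultaneously, which is exactly the claim.

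Next I would expose the i.i.d.\ sum structure underlying $\mathcal{L}_M$. Writing $\psi^{(i)}_\omega:=\varphi^{(i)}(\cdot,\omega)\in L^2(\rho_X)$ and $\zeta_\omega:=\sum_{i=1}^p \psi^{(i)}_\omega\otimes\psi^{(i)}_\omega$, one has $\mathcal{L}_M=\frac1M\sum_{m=1}^M\zeta_{\omega_m}$ and $\mathbb{E}_\omega[\zeta_\omega]=\mathcal{L}_\infty$, so that $c=\big\|\frac1M\sum_{m=1}^M(\xi_m-\mathbb{E}\xi_m)\big\|$ with $\xi_m:=\mathcal{L}_{\infty,\lambda}^{-1/2}\zeta_{\omega_m}\mathcal{L}_{\infty,\lambda}^{-1/2}$ positive, finite rank and self-adjoint. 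From $\|\mathcal{L}_{\infty,\lambda}^{-1/2}\psi^{(i)}_\omega\|^2_{L^2}=\langle\psi^{(i)}_\omega,\mathcal{L}_{\infty,\lambda}^{-1}\psi^{(i)}_\omega\rangle\le\lambda^{-1}\|\psi^{(i)}_\omega\|^2_{L^2}$ together with Assumption~\ref{ass:kernel} (which gives $\|\psi^{(i)}_\omega\|^2_{L^2}\le\kappa^2$ for each $i$), one gets the almost-sure bound $\|\xi_m\|\le p\kappa^2/\lambda$ — the factor $p$ arising from handling the $p$ feature families through the per-component bound — and $\|\mathbb{E}\xi_m\|\le 1$. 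For the variance, $\mathbb{E}[\xi_m^2]\preceq \frac{p\kappa^2}{\lambda}\,\mathbb{E}[\xi_m]=\frac{p\kappa^2}{\lambda}\,\mathcal{L}_{\infty,\lambda}^{-1/2}\mathcal{L}_\infty\mathcal{L}_{\infty,\lambda}^{-1/2}$, whose trace equals $\frac{p\kappa^2}{\lambda}\,\mathcal{N}_{\mathcal{L}_\infty}(\lambda)$; this is what makes the effective dimension, rather than the ambient one, enter the concentration bound.

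Finally I would invoke a Bernstein-type inequality for sums of i.i.d.\ self-adjoint operators with intrinsic-dimension dependence (collected in Appendix~III): with probability at least $1-\delta$,
\[
c\;\lesssim\;\frac{p\kappa^2}{\lambda M}\,\beta_\infty\;+\;\sqrt{\frac{p\kappa^2}{\lambda M}\,\beta_\infty}\,,
\qquad
\beta_\infty=\log\frac{4\kappa^2\big(\mathcal{N}_{\mathcal{L}_\infty}(\lambda)+1\big)}{\delta\|\mathcal{L}_\infty\|}\,,
\]
so the choice $M\ge 8p\kappa^2\beta_\infty/\lambda$ forces $c\le\tfrac34$, and combining with the first step yields both stated inequalities. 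The step I expect to be the main obstacle is precisely this concentration estimate: one needs the \emph{intrinsic-dimension} version of operator Bernstein so that the logarithm involves $\mathcal{N}_{\mathcal{L}_\infty}(\lambda)$ instead of a possibly infinite ambient dimension, and one must track carefully the multiplicative factor $p$ coming from the sum over the $p$ feature families, since it is exactly that factor which appears in the sample-size requirement. Once that concentration bound is available from Appendix~III, the proof of the proposition is just the reduction in the first paragraph.
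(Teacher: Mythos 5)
Your proposal is correct and follows essentially the same route as the paper: the paper likewise reduces both bounds via Proposition \ref{OPbound1} to showing $\left\|\mathcal{L}_{\infty,\lambda}^{-\frac{1}{2}}(\mathcal{L}_M-\mathcal{L}_\infty)\mathcal{L}_{\infty,\lambda}^{-\frac{1}{2}}\right\|\leq \tfrac34$, and obtains that from the intrinsic-dimension operator Bernstein inequality (this is exactly event $E_2$ of Proposition \ref{OPbound2}, whose proof matches your i.i.d.\ decomposition and variance bound $\mathcal{V}=\frac{p\kappa^2}{\lambda}\mathcal{L}_\infty\mathcal{L}_{\infty,\lambda}^{-1}$). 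The only cosmetic difference is that the paper bounds $\|\mathcal{L}_M^{(m)}\|\le\kappa^2$ directly from $\sum_{i=1}^p\|\varphi^{(i)}\|^2\le\kappa^2$ rather than through a per-component bound times $p$, but your slightly looser constant still yields $c\le\tfrac34$ under the stated condition on $M$.
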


\begin{proof}
From Proposition \ref{OPbound2} ($E_2$) we have for any $\lambda>0$ ,
\begin{align}
\left\|\mathcal{L}_{\infty,\lambda}^{-\frac{1}{2}}(\mathcal{L}_M-\mathcal{L}_\infty)\mathcal{L}_{\infty,\lambda}^{-\frac{1}{2}}\right\|\leq \frac{4 \kappa^2 \beta_\infty}{3M \lambda}+\sqrt{\frac{2 p\kappa^2 \beta_\infty}{M\lambda}}.
\end{align}
From $M\geq \frac{8 p\kappa^2 \beta_\infty}{\lambda}$ we therefore obtain
\begin{align}
\left\|\mathcal{L}_{\infty,\lambda}^{-\frac{1}{2}}(\mathcal{L}_M-\mathcal{L}_\infty)\mathcal{L}_{\infty,\lambda}^{-\frac{1}{2}}\right\|\leq \frac{3}{4}
\end{align}

The result now follows from Proposition \ref{OPbound1}
\end{proof}

\begin{proposition}
\label{OPbound6}
Providing Assumption \ref{ass:kernel} we have for  any $n\geq \frac{8\kappa^2 \tilde\beta}{\lambda}$ with\\ $\tilde{\beta}:= \log \frac{4 \kappa^2(\left(1+2\log\frac{2}{\delta}\right)4\mathcal{N}_{\mathcal{L}_{\infty}}(\lambda)+1)}{\delta\|\mathcal{L}_\infty\|}$ and $M\geq \frac{8 p\kappa^2 \beta_\infty}{\lambda}\vee 8\kappa^4\|\mathcal{L}_\infty\|^{-1}\log^2 \frac{2}{\delta}$, where $\beta_\infty=\log \frac{4 \kappa^2(\mathcal{N}_{\mathcal{L}_\infty}(\lambda)+1)}{\delta\|\mathcal{L}_\infty\|} $  that with probability at least $1-4\delta$
$$ 
\left\|\widehat{\Sigma}_{M,\lambda}^{-\frac{1}{2}}\Sigma_{M,\lambda}^{\frac{1}{2}}\right\| \leq 2, \quad  \left\|\widehat{\Sigma}_{M,\lambda}^{\frac{1}{2}}\Sigma_{M,\lambda}^{-\frac{1}{2}}\right\| \leq 2 .
$$

\end{proposition}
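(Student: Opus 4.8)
The plan is to mirror the proof of Proposition~\ref{OPbound5}, with the deterministic pair $(\mathcal{L}_M,\mathcal{L}_\infty)$ replaced by the random pair $(\widehat\Sigma_M,\Sigma_M)$. I would first apply Proposition~\ref{OPbound1} with $A=\widehat\Sigma_M$ and $B=\Sigma_M$, which gives
\[
\left\|\widehat\Sigma_{M,\lambda}^{-\frac12}\Sigma_{M,\lambda}^{\frac12}\right\|\leq (1-c)^{-\frac12},
\qquad
\left\|\widehat\Sigma_{M,\lambda}^{\frac12}\Sigma_{M,\lambda}^{-\frac12}\right\|\leq (1+c)^{\frac12},
\]
with $c=\left\|\Sigma_{M,\lambda}^{-\frac12}(\widehat\Sigma_M-\Sigma_M)\Sigma_{M,\lambda}^{-\frac12}\right\|$. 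Thus it suffices to show that, under the stated conditions on $n$ and $M$, one has $c\leq\tfrac34$ with probability at least $1-4\delta$: then $(1-c)^{-1/2}\leq 2$ and $(1+c)^{1/2}\leq\sqrt{7}/2<2$.

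To control $c$ I would condition on the random features $\omega_1,\dots,\omega_M$, so that $\mathcal{H}_M$, $\Sigma_M$ and $\mathcal{L}_M$ become fixed and $\widehat\Sigma_M=\frac1n\sum_{j=1}^n K_{M,x_j}K_{M,x_j}^*$ is an empirical average of $n$ i.i.d.\ self-adjoint rank-one operators with mean $\Sigma_M$, where the $x_j$ are i.i.d.\ samples from $\rho_X$. Applying the operator Bernstein inequality of Appendix~III (in the spirit of the bound for $\mathcal{L}_M$ in Proposition~\ref{OPbound2}), with the summand norm controlled by Assumption~\ref{ass:kernel} via $\left\|\Sigma_{M,\lambda}^{-1/2}K_{M,x}\right\|^2\leq\kappa^2/\lambda$, the conditional variance by $\kappa^2/\lambda$, and the intrinsic dimension by $\operatorname{tr}\!\big(\Sigma_M\Sigma_{M,\lambda}^{-1}\big)=\mathcal{N}_{\mathcal{L}_M}(\lambda)$, yields that with conditional probability at least $1-\delta$,
\[
c\ \leq\ \frac{4\kappa^2\beta_M}{3n\lambda}+\sqrt{\frac{2\kappa^2\beta_M}{n\lambda}},
\qquad
\beta_M:=\log\frac{4\kappa^2\big(\mathcal{N}_{\mathcal{L}_M}(\lambda)+1\big)}{\delta\,\|\mathcal{L}_M\|},
\]
recalling that $\|\mathcal{L}_M\|=\|\Sigma_M\|$.

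Next I would make this bound deterministic. By Proposition~\ref{OPbound3}, the hypothesis $M\geq 8\kappa^4\|\mathcal{L}_\infty\|^{-1}\log^2(2/\delta)$ ensures $\|\mathcal{L}_M\|\geq\tfrac12\|\mathcal{L}_\infty\|$ with probability at least $1-\delta$; by the effective-dimension comparison (Proposition~\ref{prop:effecdim2}), the hypothesis $M\geq 8p\kappa^2\beta_\infty/\lambda$ ensures $\mathcal{N}_{\mathcal{L}_M}(\lambda)\leq 4\big(1+2\log(2/\delta)\big)\mathcal{N}_{\mathcal{L}_\infty}(\lambda)$ with probability at least $1-\delta$. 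On the intersection of these feature events, $\beta_M$ is dominated by $\tilde\beta$ (up to harmless absolute constants), so the condition $n\geq 8\kappa^2\tilde\beta/\lambda$ forces the two terms of the bound on $c$ below $\tfrac16$ and $\tfrac12$ respectively, giving $c<\tfrac34$. Finally I would collect all the exceptional events and invoke Proposition~\ref{conditioning} to conclude that both operator-norm bounds hold simultaneously with probability at least $1-4\delta$.

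The main obstacle is this second step: the covariance concentration is only available conditionally on the random features, and its right-hand side involves the \emph{random} quantities $\mathcal{N}_{\mathcal{L}_M}(\lambda)$ and $\|\mathcal{L}_M\|$. Converting it into an unconditional statement requires substituting their deterministic surrogates inside the logarithmic factor $\beta_M$ — which is precisely what the definition of $\tilde\beta$ encodes — and carefully budgeting the probabilities of the nested concentration events through Proposition~\ref{conditioning}, while choosing the absolute constants so that the resulting bound on $c$ stays strictly below $3/4$.
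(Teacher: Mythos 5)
Your proposal is correct and follows essentially the same route as the paper: reduce to bounding $c=\|\Sigma_{M,\lambda}^{-1/2}(\widehat\Sigma_M-\Sigma_M)\Sigma_{M,\lambda}^{-1/2}\|\le 3/4$ via Proposition~\ref{OPbound1}, control $c$ by the operator Bernstein bound (event $E_1$ of Proposition~\ref{OPbound2}), replace the random quantities in $\beta_M$ by deterministic surrogates using Propositions~\ref{OPbound3} and~\ref{prop:effecdim2} so that $\beta_M\le\tilde\beta$, and union-bound the four events via Proposition~\ref{conditioning}. The only cosmetic difference is that you re-derive the conditional concentration step that the paper simply cites as $E_1$.
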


\begin{proof}

From Proposition \ref{OPbound2} ($E_1$) we have for any $\lambda>0$  with probability at least $1-\delta$,
\begin{align}
\left\|\Sigma_{M,\lambda}^{-\frac{1}{2}}\left(\widehat{\Sigma}_{M}-\Sigma_{M}\right) \Sigma_{M,\lambda}^{-\frac{1}{2}}\right\|&\leq\frac{4 \kappa^2 \beta_M}{3n \lambda}+\sqrt{\frac{2 \kappa^2 \beta_M}{n\lambda}},\label{e2bb}
\end{align}
with $\beta_M=\log \frac{4 \kappa^2(\mathcal{N}_{\mathcal{L}_M}(\lambda)+1)}{\delta\|\mathcal{L}_M\|}$ . 
For $M\geq \frac{8 p\kappa^2 \beta_\infty}{\lambda}$ we obtain from Proposition \ref{prop:effecdim2} that with probability at least $1-2\delta$,
\begin{align}
\mathcal{N}_{\mathcal{L}_{M}}(\lambda)\leq  \left(1+2\log\frac{2}{\delta}\right)4\mathcal{N}_{\mathcal{L}_{\infty}}(\lambda). \label{boundE24}
\end{align}
From Proposition \ref{OPbound3} we have with probability $1-\delta,$

\begin{align}
\|\mathcal{L}_M\|\geq\frac{1}{2}\|\mathcal{L}_\infty\|. \label{boundE56}
\end{align}

Note that the bounds of \eqref{boundE24} and \eqref{boundE56} imply $\beta_M\leq\tilde{\beta}= \log \frac{4 \kappa^2(\left(1+2\log\frac{2}{\delta}\right)4\mathcal{N}_{\mathcal{L}_{\infty}}(\lambda)+1)}{\delta\|\mathcal{L}_\infty\|}$ . Using this together with $n\geq \frac{8\kappa^2 \tilde\beta}{\lambda}$ we obtain for \eqref{e2bb}

\begin{align}
\left\|\Sigma_{M,\lambda}^{-\frac{1}{2}}\left(\widehat{\Sigma}_{M}-\Sigma_{M}\right) \Sigma_{M,\lambda}^{-\frac{1}{2}}\right\|&\leq\frac{4 \kappa^2 \beta_M}{3n \lambda}+\sqrt{\frac{2 \kappa^2 \beta_M}{n\lambda}}\\
&\leq\frac{4 \kappa^2 \tilde\beta}{3n \lambda}+\sqrt{\frac{2 \kappa^2 \tilde\beta}{n\lambda}}\leq \frac{3}{4}.
\end{align}
Note that from Proposition \ref{conditioning} we have that the above inequality holds with probability at least $1-4\delta$.
The result now follows from Proposition \ref{OPbound1}
\end{proof}

\begin{proposition}
\label{OPbound7}
Providing Assumption \ref{ass:kernel} we have for any  
\begin{align*}
M\geq 
\begin{cases}
\frac{8 p\kappa^2 \beta_\infty}{\lambda} & r\in\left(0,\frac{1}{2}\right)\\
\frac{(8 p\kappa^2 \beta_\infty)\vee C_1^{\frac{1}{r}}}{\lambda}\vee \frac{C_2}{\lambda^{1+b(2r-1)} } & r\in\left[\frac{1}{2},1\right] \\
\frac{C_3}{\lambda^{2r}} & r \in(1,\infty)
\end{cases}
\end{align*}
with probability at least $1-3\delta$,
$$
\left\|\mathcal{L}_{M,\lambda}^{-(r\vee1)}\mathcal{L}_{\infty,\lambda}^{r}\right\| \leq \frac{3}{\lambda^{(1-r)^+}},
$$
where  $C_1=2(4\kappa\log\frac{2}{\delta})^{2r-1}(8p\kappa^2\beta_\infty)^{1-r}$ ,  $C_2=4(4c_b\kappa^2\log\frac{2}{\delta})^{2r-1}(8p\kappa^2\beta_\infty)^{2-2r}$, \\ $C_3:= 4\kappa^4C_{\kappa,r}^2\log^2\frac{2}{\delta}$ and with $C_{\kappa,r}$ from Proposition \ref{ineq1}.
\end{proposition}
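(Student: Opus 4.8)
Since $\mathcal{L}_{M,\lambda}^{-1}=\mathcal{L}_{M,\lambda}^{-(1-r)}\mathcal{L}_{M,\lambda}^{-r}$ with $\|\mathcal{L}_{M,\lambda}^{-(1-r)}\|\le\lambda^{-(1-r)}$ when $r\le 1$, and $(1-r)^+=0$, $r\vee1=r$ when $r>1$, the plan is to reduce the claim, in all three regimes, to the single estimate $\|\mathcal{L}_{M,\lambda}^{-r}\mathcal{L}_{\infty,\lambda}^{r}\|\le 3$ (in fact $\le 2$ when $r\le\tfrac12$) under the corresponding lower bound on $M$, and then to establish this estimate by a separate argument in each regime.

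For $r\in(0,\tfrac12)$ I would proceed as in the proof of Proposition \ref{ineq5}: since $2r\in(0,1)$, Cordes' inequality (Proposition \ref{ineq2}) applied to the positive operators $\mathcal{L}_{M,\lambda}^{-1/2}$ and $\mathcal{L}_{\infty,\lambda}^{1/2}$ gives $\|\mathcal{L}_{M,\lambda}^{-r}\mathcal{L}_{\infty,\lambda}^{r}\|\le\|\mathcal{L}_{M,\lambda}^{-1/2}\mathcal{L}_{\infty,\lambda}^{1/2}\|^{2r}$, and Proposition \ref{OPbound5} bounds the right–hand side by $2^{2r}\le 2$ as soon as $M\ge 8p\kappa^2\beta_\infty/\lambda$. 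For $r>1$ I would instead use the perturbative estimate $\|\mathcal{L}_{M,\lambda}^{-r}\mathcal{L}_{\infty,\lambda}^{r}\|\le 1+\lambda^{-r}\|\mathcal{L}_{\infty,\lambda}^{r}-\mathcal{L}_{M,\lambda}^{r}\|$; Assumption \ref{ass:kernel} yields $\|\mathcal{L}_{\infty,\lambda}\|,\|\mathcal{L}_{M,\lambda}\|\le\kappa^2+\lambda$, so Proposition \ref{ineq1}(ii) gives $\|\mathcal{L}_{\infty,\lambda}^{r}-\mathcal{L}_{M,\lambda}^{r}\|\le C_{\kappa,r}\|\mathcal{L}_\infty-\mathcal{L}_M\|$, and the Bernstein-type bound underlying Proposition \ref{OPbound3}, whose leading term is of order $\kappa^2\log(2/\delta)/\sqrt M$, forces $\|\mathcal{L}_\infty-\mathcal{L}_M\|\le 2\lambda^{r}/C_{\kappa,r}$ exactly when $M\ge C_3/\lambda^{2r}$, whence $\|\mathcal{L}_{M,\lambda}^{-r}\mathcal{L}_{\infty,\lambda}^{r}\|\le 3$.

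The regime $r\in[\tfrac12,1]$ is the hard part, since the crude manipulations above lose an uncontrolled factor of order $(\kappa^2/\lambda)^{r-1/2}$; the idea is to interpolate between the two endpoints $r=\tfrac12$ and $r=1$. Applying the three-lines inequality to the bounded analytic family $z\mapsto\mathcal{L}_{M,\lambda}^{-z}\mathcal{L}_{\infty,\lambda}^{z}$ on the strip $\operatorname{Re}z\in[\tfrac12,1]$ — where the imaginary powers are unitary, so the two boundary suprema equal $\|\mathcal{L}_{M,\lambda}^{-1/2}\mathcal{L}_{\infty,\lambda}^{1/2}\|$ and $\|\mathcal{L}_{M,\lambda}^{-1}\mathcal{L}_{\infty,\lambda}\|$ — gives $\|\mathcal{L}_{M,\lambda}^{-r}\mathcal{L}_{\infty,\lambda}^{r}\|\le\|\mathcal{L}_{M,\lambda}^{-1/2}\mathcal{L}_{\infty,\lambda}^{1/2}\|^{2(1-r)}\|\mathcal{L}_{M,\lambda}^{-1}\mathcal{L}_{\infty,\lambda}\|^{2r-1}$. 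The first factor is at most $2^{2(1-r)}$ by Proposition \ref{OPbound5} (this is what the terms $8p\kappa^2\beta_\infty/\lambda$ and $C_1^{1/r}/\lambda$ in the hypothesis pay for). For the second factor I would write $\|\mathcal{L}_{M,\lambda}^{-1}\mathcal{L}_{\infty,\lambda}\|\le 1+\|\mathcal{L}_{M,\lambda}^{-1}\mathcal{L}_{\infty,\lambda}^{1/2}\|\,\|\mathcal{L}_{\infty,\lambda}^{-1/2}(\mathcal{L}_\infty-\mathcal{L}_M)\|$, bound $\|\mathcal{L}_{M,\lambda}^{-1}\mathcal{L}_{\infty,\lambda}^{1/2}\|\le 2\lambda^{-1/2}$ via Proposition \ref{OPbound5}, and control $\|\mathcal{L}_{\infty,\lambda}^{-1/2}(\mathcal{L}_\infty-\mathcal{L}_M)\|$ through its Hilbert–Schmidt norm by a Bernstein-type concentration inequality (of the kind already used in the proofs of Propositions \ref{OPbound3} and \ref{OPbound5}), whose deterministic part is of order $\kappa^2\log(2/\delta)/(M\sqrt\lambda)$ and whose variance part is of order $\sqrt{\kappa^2\mathcal{N}_{\mathcal{L}_\infty}(\lambda)\log(2/\delta)/M}$. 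Substituting the capacity bound $\mathcal{N}_{\mathcal{L}_\infty}(\lambda)\le c_b\lambda^{-b}$ from Assumption \ref{ass:dim}, using $(1+x)^{2r-1}\le 1+(2r-1)x$, and imposing $2^{2(1-r)}\|\mathcal{L}_{M,\lambda}^{-1}\mathcal{L}_{\infty,\lambda}\|^{2r-1}\le 3$ then separates into a deterministic requirement on $M$ (which merges with $C_1^{1/r}/\lambda$) and a variance requirement in which the interpolation exponent $2r-1$ multiplies the $\lambda^{-b}$ coming from $\mathcal{N}_{\mathcal{L}_\infty}$, producing exactly $M\ge C_2/\lambda^{1+b(2r-1)}$. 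I expect the genuine work to lie in this last regime, specifically in tracking the absolute constants $C_1,C_2$ so that the product $2^{2(1-r)}\|\mathcal{L}_{M,\lambda}^{-1}\mathcal{L}_{\infty,\lambda}\|^{2r-1}$ stays below $3$ uniformly over $r\in[\tfrac12,1]$. In every regime the (at most three) concentration events are combined through Proposition \ref{conditioning} to deliver the stated probability $1-3\delta$.
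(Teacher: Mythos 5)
Your treatment of the outer regimes is fine: for $r\in(0,\tfrac12)$ the Cordes-plus-Proposition~\ref{OPbound5} argument is exactly the paper's, and for $r>1$ the perturbative estimate via Proposition~\ref{ineq1}(ii) and the event $E_6$ of Proposition~\ref{OPbound2} reproduces the paper's computation, including the threshold $M\geq C_3\lambda^{-2r}$. The gap is in the middle regime $r\in[\tfrac12,1]$. Your reduction to the multiplicative bound
\begin{align*}
\left\|\mathcal{L}_{M,\lambda}^{-r}\mathcal{L}_{\infty,\lambda}^{r}\right\|\leq\left\|\mathcal{L}_{M,\lambda}^{-\frac12}\mathcal{L}_{\infty,\lambda}^{\frac12}\right\|^{2(1-r)}\left\|\mathcal{L}_{M,\lambda}^{-1}\mathcal{L}_{\infty,\lambda}\right\|^{2r-1}
\end{align*}
is a correct interpolation inequality, but it forces you to make $\left\|\mathcal{L}_{M,\lambda}^{-1}\mathcal{L}_{\infty,\lambda}\right\|$ itself of order $1$. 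In your own perturbative bound this quantity is $1+x$ with $x\simeq\sqrt{\kappa^2 c_b/(M\lambda^{1+b})}\,\log(2/\delta)$ (up to the lower-order term), and raising $1+x$ to the power $2r-1$ does not change the $\lambda$-scaling of the requirement $x\lesssim 1$: you end up needing $M\gtrsim\lambda^{-(1+b)}$, not the claimed $M\geq C_2\lambda^{-(1+b(2r-1))}$. Since $1+b(2r-1)<1+b$ whenever $b>0$ and $r<1$, your route proves the conclusion only under a strictly stronger hypothesis on $M$ than the proposition states; the assertion that the variance requirement ``produces exactly $M\ge C_2/\lambda^{1+b(2r-1)}$'' is where the argument breaks down. (Relatedly, the target $\|\mathcal{L}_{M,\lambda}^{-r}\mathcal{L}_{\infty,\lambda}^{r}\|\le 3$ you reduce to is itself stronger than what is needed, and there is no reason to expect it to hold under the stated condition on $M$ for $r>\tfrac12$; the paper only imposes a condition of that strength in the case $r>1$, where it pays $M\gtrsim\lambda^{-2r}$ for it.)

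The paper gets the weaker exponent by working additively rather than multiplicatively: it writes $\|\mathcal{L}_{M,\lambda}^{-1}\mathcal{L}_{\infty,\lambda}^{r}\|\le\lambda^{r-1}+\|(\mathcal{L}_{M,\lambda}^{-1}-\mathcal{L}_{\infty,\lambda}^{-1})\mathcal{L}_{\infty,\lambda}^{r}\|$, so the perturbation only has to be $\le 2\lambda^{r-1}$ --- a threshold that itself diverges as $\lambda\to0$ --- rather than $O(1)$. After the resolvent identity and extraction of a factor $2\lambda^{-1/2}$ via Proposition~\ref{OPbound5}, the interpolation is performed with Proposition~\ref{ineq3} on the \emph{difference} operator,
\begin{align*}
\left\|\mathcal{L}_{\infty,\lambda}^{-\frac12}(\mathcal{L}_{M}-\mathcal{L}_{\infty})\mathcal{L}_{\infty,\lambda}^{r-1}\right\|
\leq\left\|\mathcal{L}_{\infty,\lambda}^{-\frac12}(\mathcal{L}_{M}-\mathcal{L}_{\infty})\right\|^{2r-1}
\left\|\mathcal{L}_{\infty,\lambda}^{-\frac12}(\mathcal{L}_{M}-\mathcal{L}_{\infty})\mathcal{L}_{\infty,\lambda}^{-\frac12}\right\|^{2-2r},
\end{align*}
and the two factors are controlled by the events $E_5$ and $E_2$ of Proposition~\ref{OPbound2}. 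The effective dimension then enters only through the first factor, raised to the power $2r-1$, which is what produces $\mathcal{N}_{\mathcal{L}_\infty}(\lambda)^{2r-1}\le c_b^{2r-1}\lambda^{-b(2r-1)}$ and hence the exponent $1+b(2r-1)$. If you want to salvage your strategy, you should interpolate between concentration bounds on the perturbation (one singly normalized, one doubly normalized) rather than between norms of the unperturbed ratio operators.
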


\begin{proof} For the proof we need to differ between the following three cases:
 \begin{itemize}

 \item CASE ($r\leq \frac{1}{2}$) :  From Proposition \ref{OPbound5} we have with probability at least $1-\delta$,
\begin{align*}
\left\|\mathcal{L}_{M,\lambda}^{-(r\vee1)}\mathcal{L}_{\infty,\lambda}^{r}\right\|&=\left\|\mathcal{L}_{M,\lambda}^{-1}\mathcal{L}_{\infty,\lambda}^{r}\right\| \\
&\leq \lambda^{r-1}\left\|\mathcal{L}_{M,\lambda}^{-r}\mathcal{L}_{\infty,\lambda}^{r}\right\|\\
&\leq \lambda^{r-1}\left\|\mathcal{L}_{M,\lambda}^{-\frac{1}{2}}\mathcal{L}_{\infty,\lambda}^{\frac{1}{2}}\right\|^{2r} \leq 2^{2r}\lambda^{r-1}\leq 3\lambda^{r-1}.
\end{align*}

\item CASE ($r\in[\frac{1}{2},1]$) :  Using $\left\|\mathcal{L}_{\infty,\lambda}^{-1}\mathcal{L}_{\infty,\lambda}^{r}\right\|\leq \lambda^{r-1}$ we have

\begin{align}
\left\|\mathcal{L}_{M,\lambda}^{-(r\vee1)}\mathcal{L}_{\infty,\lambda}^{r}\right\|&=\left\|\mathcal{L}_{M,\lambda}^{-1}\mathcal{L}_{\infty,\lambda}^{r}\right\| \\
&\leq\left\|\left(\mathcal{L}_{M,\lambda}^{-1}-\mathcal{L}_{\infty,\lambda}^{-1}\right)\mathcal{L}_{\infty,\lambda}^{r}\right\|+\lambda^{r-1}.\label{OP8i}
\end{align}

For the norm of the last inequality we have from the algebraic identity 
\\$A^{-1}-B^{-1}=A^{-1}(A-B)B^{-1}$:
\begin{align*}
\left\|\left(\mathcal{L}_{M,\lambda}^{-1}-\mathcal{L}_{\infty,\lambda}^{-1}\right)\mathcal{L}_{\infty,\lambda}^{r}\right\|
=\left\|\mathcal{L}_{M,\lambda}^{-1}\left(\mathcal{L}_{M,\lambda}-\mathcal{L}_{\infty,\lambda}\right)\mathcal{L}_{\infty,\lambda}^{r-1}\right\|
\end{align*}
and from Proposition \ref{OPbound5} we further have with probability at least $1-\delta$,
\begin{align*}
&\left\|\mathcal{L}_{M,\lambda}^{-1}\left(\mathcal{L}_{M,\lambda}-\mathcal{L}_{\infty,\lambda}\right)\mathcal{L}_{\infty,\lambda}^{r-1}\right\|\\
&\leq \lambda^{-\frac{1}{2}}\left\|\mathcal{L}_{M,\lambda}^{-\frac{1}{2}}\mathcal{L}_{\infty,\lambda}^{\frac{1}{2}}\right\| \left\|\mathcal{L}_{\infty,\lambda}^{-\frac{1}{2}}\left(\mathcal{L}_{M,\lambda}-\mathcal{L}_{\infty,\lambda}\right)\mathcal{L}_{\infty,\lambda}^{r-1}\right\|\\
&\leq 2\lambda^{-\frac{1}{2}} \left\|\mathcal{L}_{\infty,\lambda}^{-\frac{1}{2}}\left(\mathcal{L}_{M,\lambda}-\mathcal{L}_{\infty,\lambda}\right)\mathcal{L}_{\infty,\lambda}^{r-1}\right\|.\\
\end{align*}
Since $\sigma:=2-2r\leq 1$ we have from Proposition \ref{ineq3}  

\begin{align*}
& \left\|\mathcal{L}_{\infty,\lambda}^{-\frac{1}{2}}\left(\mathcal{L}_{M,\lambda}-\mathcal{L}_{\infty,\lambda}\right)\mathcal{L}_{\infty,\lambda}^{r-1}\right\|\\
 &\leq \left\|\mathcal{L}_{\infty,\lambda}^{-\frac{1}{2}}\left(\mathcal{L}_{M,\lambda}-\mathcal{L}_{\infty,\lambda}\right)\right\|^{2r-1}  \left\|\mathcal{L}_{\infty,\lambda}^{-\frac{1}{2}}\left(\mathcal{L}_{M,\lambda}-\mathcal{L}_{\infty,\lambda}\right)\mathcal{L}_{\infty,\lambda}^{-\frac{1}{2}}\right\|^{2-2r}.
\end{align*}
Using Proposition \ref{OPbound2} ($E_2$ and $E_5$) we have for the last expression with probability at least $1-2\delta$,
\begin{align*}
\leq \left[\left(\frac{2 
\kappa}{\sqrt{\lambda}M}+\sqrt{\frac{4 \kappa^2 \mathcal{N}_{\mathcal{L}_\infty}(\lambda) }{M}}\right)\log \frac{2}{\delta}\right]^{2r-1}
\left(\frac{4 \kappa^2 \beta_\infty}{3M \lambda}+\sqrt{\frac{2 p\kappa^2 \beta_\infty}{M\lambda}}\right)^{2-2r}
\end{align*}
with $\beta_\infty=\log \frac{4 \kappa^2(\mathcal{N}_{\mathcal{L}_\infty}(\lambda)+1)}{\delta\|\mathcal{L}_\infty\|}$. Using this together with  $M\geq \frac{8 p\kappa^2 \beta_\infty}{\lambda} $ and the simple inequality $(a+b)^{2r-1}\leq a^{2r-1} + b^{2r-1}$ we have
\begin{align*}
&\left\|\left(\mathcal{L}_{M,\lambda}^{-1}-\mathcal{L}_{\infty,\lambda}^{-1}\right)\mathcal{L}_{\infty,\lambda}^{r}\right\|\\
&\leq 2\lambda^{-\frac{1}{2}}\left(\frac{4 
\kappa \log \frac{2}{\delta}}{\sqrt{\lambda}M}+\sqrt{\frac{4 \kappa^2 \mathcal{N}_{\mathcal{L}_\infty}(\lambda) \log \frac{2}{\delta}}{M}}\right)^{2r-1}
\left(\frac{4 \kappa^2 \beta_\infty}{3M \lambda}+\sqrt{\frac{2 p\kappa^2 \beta_\infty}{M\lambda}}\right)^{2-2r}\\
& \leq2\lambda^{-\frac{1}{2}} \left(\frac{4 
\kappa \log \frac{2}{\delta}}{\sqrt{\lambda}M}+\sqrt{\frac{4 \kappa^2 \mathcal{N}_{\mathcal{L}_\infty}(\lambda) \log \frac{2}{\delta}}{M}}\right)^{2r-1}
\left(2\sqrt{\frac{2p\kappa^2 \beta_\infty}{M\lambda}}\right)^{2-2r}\\
&\leq\frac{C_1}{\lambda M^r}+\sqrt{\frac{C_2'\mathcal{N}_{\mathcal{L}_\infty}(\lambda)^{2r-1}}{M\lambda^{3-2r}}}\leq\frac{C_1}{\lambda M^r}+\sqrt{\frac{C_2}{M\lambda^{3-2r+b(2r-1)}}},
\end{align*}
where we used in the last inequality the assumption $\mathcal{N}_{\mathcal{L}_\infty}(\lambda)\leq c_b \lambda^{-b}$  and set \\$C_1=2(4\kappa\log\frac{2}{\delta})^{2r-1}(8p\kappa^2\beta_\infty)^{1-r}$ ,  $C_2=4(4c_b\kappa^2\log^2\frac{2}{\delta})^{2r-1}(8p\kappa^2\beta_\infty)^{2-2r}$.
From $M\geq \frac{C_1^{\frac{1}{r}}}{\lambda} $ and $M\geq \frac{C_2}{\lambda^{1+b(2r-1)} }$ we obtain 
\begin{align*}
\left\|\left(\mathcal{L}_{M,\lambda}^{-1}-\mathcal{L}_{\infty,\lambda}^{-1}\right)\mathcal{L}_{\infty,\lambda}^{r}\right\|
\leq\frac{C_1}{\lambda M^r}+\sqrt{\frac{C_2}{M\lambda^{3-2r+b(2r-1)}}}
\leq 2\lambda^{r-1}.
\end{align*}
Plugging this bound into \eqref{OP8i} leads to
\begin{align*}
\left\|\mathcal{L}_{M,\lambda}^{-(r\vee1)}\mathcal{L}_{\infty,\lambda}^{r}\right\|\leq3 \lambda^{r-1}.
\end{align*}

\item CASE $(r\geq 1)$ :  
\begin{align*}
\left\|\mathcal{L}_{M,\lambda}^{-(r\vee1)}\mathcal{L}_{\infty,\lambda}^{r}\right\|&=\left\|\mathcal{L}_{M,\lambda}^{-r}\mathcal{L}_{\infty,\lambda}^{r}\right\|\\
&\leq 1+ \left\|\mathcal{L}_{M,\lambda}^{-r}\left(\mathcal{L}_{\infty,\lambda}^{r}-\mathcal{L}_{M,\lambda}^{r}\right)\right\|\\
&\leq 1+ \lambda^{-r}C_{\kappa, r}\left\|\mathcal{L}_{\infty,\lambda}-\mathcal{L}_{M,\lambda}\right\|,
\end{align*}
where $C_{\kappa, r}$ is defined in Proposition \ref{ineq1}.
From the bound of Proposition \ref{OPbound2} ($E_6$) we therefore obtain
\begin{align*}
&\left\|\mathcal{L}_{M,\lambda}^{-(r\vee1)}\mathcal{L}_{\infty,\lambda}^{r}\right\|\\
&\leq 1+ \lambda^{-r}C_{1, r}  \left(\frac{2 \kappa^2}{M} + \frac{2 \kappa^2}{\sqrt{M}} \right)\log \frac{2}{\delta}\leq 3
\end{align*}
where used $M\geq C_3 \lambda^{-2r}$, with $C_3:= 4\kappa^4C_{1,r}^2\log^2\frac{2}{\delta}$.
\end{itemize} 
\end{proof}

\begin{proposition}
\label{OPbound8}
  For any $q>0$, $n\geq \max\{8C_{\kappa,q}^2 \kappa^4\lambda^{-2q} \log^2\frac{2}{\delta},\, 100\kappa^2 \mathcal{N}_{\mathcal{L}_{\infty}}(\lambda)\lambda^{-1} \log^3 \frac{2}{\delta}\}$ and $M\geq \frac{8 p\kappa^2 \beta_\infty}{\lambda}$ we have with probability at least $1-\delta$,

\begin{align*}
\left\|\widehat{\Sigma}_{M,\lambda}^{-q} \Sigma_{M,\lambda}^{q}\right\|\leq 2.
\end{align*}
\end{proposition}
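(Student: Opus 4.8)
The plan is to reduce the bound on $\|\widehat\Sigma_{M,\lambda}^{-q}\Sigma_{M,\lambda}^{q}\|$ to a comparison between $\widehat\Sigma_M$ and $\Sigma_M$, in the style of Propositions~\ref{OPbound1} and \ref{OPbound6}. Two quantities will carry the argument: the \emph{whitened} perturbation $c:=\|\Sigma_{M,\lambda}^{-1/2}(\widehat\Sigma_M-\Sigma_M)\Sigma_{M,\lambda}^{-1/2}\|$ and the plain perturbation $\|\widehat\Sigma_M-\Sigma_M\|$. The proof then splits according to the size of $q$: for $q\le \tfrac12$ the function $t\mapsto t^{2q}$ is operator monotone and a sandwich argument works directly, while for $q>\tfrac12$ this power is only Hölder/Lipschitz (Proposition~\ref{ineq1}), so I would instead use the decomposition $\widehat\Sigma_{M,\lambda}^{-q}\Sigma_{M,\lambda}^{q}=I+\widehat\Sigma_{M,\lambda}^{-q}(\Sigma_{M,\lambda}^{q}-\widehat\Sigma_{M,\lambda}^{q})$.

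First I would show $c\le\tfrac12$ with high probability. Exactly as in the proof of Proposition~\ref{OPbound6}, a Bernstein-type estimate from Proposition~\ref{OPbound2} yields $\|\Sigma_{M,\lambda}^{-1/2}(\widehat\Sigma_M-\Sigma_M)\|\lesssim\bigl(\tfrac{\kappa^2}{\sqrt\lambda\,n}+\sqrt{\tfrac{\kappa^2\mathcal{N}_{\mathcal{L}_M}(\lambda)}{n}}\bigr)\log\tfrac2\delta$, hence $c\le\lambda^{-1/2}\|\Sigma_{M,\lambda}^{-1/2}(\widehat\Sigma_M-\Sigma_M)\|$; the hypothesis $M\ge 8p\kappa^2\beta_\infty/\lambda$ lets me replace $\mathcal{N}_{\mathcal{L}_M}(\lambda)$ by $4(1+2\log\tfrac2\delta)\mathcal{N}_{\mathcal{L}_\infty}(\lambda)$ via Proposition~\ref{prop:effecdim2} (and $\|\mathcal{L}_M\|$ by $\tfrac12\|\mathcal{L}_\infty\|$ via Proposition~\ref{OPbound3}), and then the assumption $n\ge 100\kappa^2\mathcal{N}_{\mathcal{L}_\infty}(\lambda)\lambda^{-1}\log^3\tfrac2\delta$ forces $c\le\tfrac12$. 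For $q\le\tfrac12$ this already finishes it: $\widehat\Sigma_{M,\lambda}\succeq(1-c)\Sigma_{M,\lambda}$ and operator monotonicity of $t\mapsto t^{2q}$ give $\Sigma_{M,\lambda}^{2q}\preceq(1-c)^{-2q}\widehat\Sigma_{M,\lambda}^{2q}$, and conjugating by $\widehat\Sigma_{M,\lambda}^{-q}$ yields $\|\widehat\Sigma_{M,\lambda}^{-q}\Sigma_{M,\lambda}^{q}\|^2=\|\widehat\Sigma_{M,\lambda}^{-q}\Sigma_{M,\lambda}^{2q}\widehat\Sigma_{M,\lambda}^{-q}\|\le(1-c)^{-2q}\le 2^{2q}\le 2$; equivalently, one may apply the Cordes inequality (Proposition~\ref{ineq2}) to get $\|\widehat\Sigma_{M,\lambda}^{-q}\Sigma_{M,\lambda}^{q}\|\le\|\widehat\Sigma_{M,\lambda}^{-1/2}\Sigma_{M,\lambda}^{1/2}\|^{2q}$ and invoke Proposition~\ref{OPbound1}.

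For $q>\tfrac12$ I would bound $\|\widehat\Sigma_{M,\lambda}^{-q}\Sigma_{M,\lambda}^{q}\|\le 1+\lambda^{-q}\|\Sigma_{M,\lambda}^{q}-\widehat\Sigma_{M,\lambda}^{q}\|$. When $q>1$, Proposition~\ref{ineq1}(ii) (with $\|\Sigma_{M,\lambda}\|\vee\|\widehat\Sigma_{M,\lambda}\|\le\kappa^2+1$) gives $\|\Sigma_{M,\lambda}^{q}-\widehat\Sigma_{M,\lambda}^{q}\|\le C_{\kappa,q}\|\Sigma_M-\widehat\Sigma_M\|$, and the plain concentration bound $\|\Sigma_M-\widehat\Sigma_M\|\lesssim\tfrac{\kappa^2}{\sqrt n}\log\tfrac2\delta$ (the $E_5$/$E_6$ events of Proposition~\ref{OPbound2}) together with $n\ge 8C_{\kappa,q}^2\kappa^4\lambda^{-2q}\log^2\tfrac2\delta$ makes $\lambda^{-q}\|\Sigma_{M,\lambda}^{q}-\widehat\Sigma_{M,\lambda}^{q}\|\le 1$. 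For $q\in(\tfrac12,1]$ the crude use of Proposition~\ref{ineq1}(i) would only give the weaker scaling $n\gtrsim\lambda^{-2}$; to obtain the advertised $\lambda^{-2q}$ one has to combine the integral representation $\Sigma_{M,\lambda}^{q}-\widehat\Sigma_{M,\lambda}^{q}=\tfrac{\sin\pi q}{\pi}\int_0^\infty t^{q}\bigl[(\widehat\Sigma_{M,\lambda}+t)^{-1}-(\Sigma_{M,\lambda}+t)^{-1}\bigr]\,dt$, the resolvent identity, and the whitened bound above (inserting $\Sigma_{M,\lambda}^{1/2}$-factors around $\widehat\Sigma_M-\Sigma_M$, estimating $\|\widehat\Sigma_{M,\lambda}^{-q}(\widehat\Sigma_{M,\lambda}+t)^{-1}\Sigma_{M,\lambda}^{1/2}\|$ and $\|\Sigma_{M,\lambda}^{1/2}(\Sigma_{M,\lambda}+t)^{-1}\|$ spectrally, and splitting the $t$-integral at $t\simeq\kappa^2$). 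Finally, since every estimate above is probabilistic, I would collect the events through a union bound and Proposition~\ref{conditioning} with rescaled confidence levels, which delivers the claim with probability at least $1-\delta$. I expect the intermediate range $q\in(\tfrac12,1]$ to be the real obstacle: keeping the $\lambda$-dependence of $\|\widehat\Sigma_{M,\lambda}^{-q}(\Sigma_{M,\lambda}^{q}-\widehat\Sigma_{M,\lambda}^{q})\|$ at order $\lambda^{-q}\|\Sigma_M-\widehat\Sigma_M\|$ rather than $\lambda^{-1}\|\Sigma_M-\widehat\Sigma_M\|$ is exactly what makes the stated sample-size condition sufficient, and it genuinely requires the fractional-power calculus together with the relative (whitened) concentration bound rather than a naive spectral-norm estimate.
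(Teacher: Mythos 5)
Your treatment of the ranges $q\le\tfrac12$ and $q>1$ is correct and coincides with the paper's proof: for small $q$ the reduction to $c=\|\Sigma_{M,\lambda}^{-1/2}(\widehat\Sigma_M-\Sigma_M)\Sigma_{M,\lambda}^{-1/2}\|\le\tfrac12$ (or equivalently Cordes plus Proposition~\ref{OPbound6}) is exactly the intended mechanism, and for $q>1$ the paper likewise writes $\|\widehat\Sigma_{M,\lambda}^{-q}\Sigma_{M,\lambda}^q\|\le 1+\lambda^{-q}\|\widehat\Sigma_M^q-\Sigma_M^q\|_{HS}$, applies Proposition~\ref{ineq1}(ii) and the event $E_7$, and uses $n\ge 8C_{\kappa,q}^2\kappa^4\lambda^{-2q}\log^2\tfrac2\delta$.

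Where you diverge is the intermediate range $q\in(\tfrac12,1]$, and there you have made the problem harder than it is. You are right that Proposition~\ref{ineq1}(i) would give the wrong $\lambda$-scaling, but the conclusion that one therefore needs the fractional-power integral representation and a split of the $t$-integral is not: the paper handles \emph{all} $q<1$ in one stroke by applying the Cordes inequality with exponent $q\le 1$ to get
\begin{align*}
\left\|\widehat\Sigma_{M,\lambda}^{-q}\Sigma_{M,\lambda}^{q}\right\|\le\left\|\widehat\Sigma_{M,\lambda}^{-1}\Sigma_{M,\lambda}\right\|^{q},
\end{align*}
and then bounding the power-one quantity by $1+\|\widehat\Sigma_{M,\lambda}^{-1}(\widehat\Sigma_M-\Sigma_M)\|\le 1+\lambda^{-1/2}\|\widehat\Sigma_{M,\lambda}^{-1/2}\Sigma_{M,\lambda}^{1/2}\|\,\|\Sigma_{M,\lambda}^{-1/2}(\widehat\Sigma_M-\Sigma_M)\|_{HS}$. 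The half-whitened concentration bound ($E_3$ of Proposition~\ref{OPbound2}), combined with Propositions~\ref{prop:effecdim2} and~\ref{OPbound6}, makes this at most $2$ under the \emph{assumed} condition $n\ge 100\kappa^2\mathcal N_{\mathcal L_\infty}(\lambda)\lambda^{-1}\log^3\tfrac2\delta$ — no $\lambda^{-2q}$ scaling is needed for $q<1$, because that part of the hypothesis is only invoked for $q\ge 1$. So the "real obstacle" you identify dissolves once you notice that Cordes reduces every $q\le1$ to the $q=1$ comparison rather than to the $q=\tfrac12$ one. Your integral-representation route would likely also close this case, but as written it is only a sketch, and it is strictly more work than the two-line argument the hypotheses already support.
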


\begin{proof}
\begin{itemize}
\item \textbf{Case $q<1$:}

From Proposition \ref{OPbound2}($E_3$) we obtain with probability at least $1-\delta$,
\begin{align*}
\left\|\widehat{\Sigma}_{M,\lambda}^{-q} \Sigma_{M,\lambda}^{q}\right\|&=\left\|\widehat{\Sigma}_{M,\lambda}^{-1} \Sigma_{M,\lambda}\right\|^q\\
&\leq\left\|\widehat{\Sigma}_{M,\lambda}^{-1}\left(\widehat{\Sigma}_{M}-\Sigma_{M}\right)\right\|_{HS}+1\\
&\leq\frac{1}{\sqrt{\lambda}} \left\|\widehat{\Sigma}_{M,\lambda}^{-\frac{1}{2}} \Sigma_{M,\lambda}^{\frac{1}{2}}\right\|\left\|\Sigma_{M,\lambda}^{-\frac{1}{2}}\left(\widehat{\Sigma}_{M}-\Sigma_{M}\right)\right\|_{HS}+1\\
&\leq \frac{1}{\sqrt{\lambda}} \left\|\widehat{\Sigma}_{M,\lambda}^{-\frac{1}{2}} \Sigma_{M,\lambda}^{\frac{1}{2}}\right\| \left(\frac{2\kappa}{\sqrt{\lambda} n}+\sqrt{\frac{ 4\kappa^2 \mathcal{N}_{\mathcal{L}_M}(\lambda) }{ n}}\right)\log \frac{2}{\delta}+1.
\end{align*}
We have from Proposition \ref{OPbound6} with probability at least $1-\delta$,
$$ 
\left\|\widehat{\Sigma}_{M,\lambda}^{-\frac{1}{2}}\Sigma_{M,\lambda}^{\frac{1}{2}}\right\| \leq 2
$$
and therefore
\begin{align}
\left\|\widehat{\Sigma}_{M,\lambda}^{-q} \Sigma_{M,\lambda}^{q}\right\|\leq \frac{2}{\sqrt{\lambda}} \left(\frac{2\kappa}{\sqrt{\lambda} n}+\sqrt{\frac{ 4\kappa^2 \mathcal{N}_{\mathcal{L}_M}(\lambda) }{ n}}\right)\log \frac{2}{\delta}+1.\label{ineqT2ii}
\end{align}

From  \ref{prop:effecdim2} we have with probability at least $1-2\delta$,
$$
\mathcal{N}_{\mathcal{L}_{M}}(\lambda)\leq  \left(1+2\log\frac{2}{\delta}\right)4\mathcal{N}_{\mathcal{L}_{\infty}}(\lambda).
$$
Plugging this bound into \eqref{ineqT2ii} leads to
\begin{align}
\left\|\widehat{\Sigma}_{M,\lambda}^{-q} \Sigma_{M,\lambda}^{q}\right\|\leq \frac{2}{\sqrt{\lambda}} \left(\frac{2\kappa}{\sqrt{\lambda} n}+\sqrt{\frac{ 4\kappa^2 \left(1+2\log\frac{2}{\delta}\right)4\mathcal{N}_{\mathcal{L}_{\infty}}(\lambda) }{ n}}\right)\log \frac{2}{\delta}+1\leq 2
\end{align}

where we used $n\geq 100\kappa^2 \mathcal{N}_{\mathcal{L}_{\infty}}(\lambda)\lambda^{-1} \log^3 \frac{2}{\delta}$ in the last inequality.

\item \textbf{Case $q\geq1$:}   From Proposition \ref{ineq1} and Proposition \ref{OPbound2}($E_7$) we have with probability at least $1-\delta$,

\begin{align*}
\left\|\widehat{\Sigma}_{M,\lambda}^{-q} \Sigma_{M,\lambda}^q\right\|
&\leq\lambda^{-q}\left\|\widehat{\Sigma}_{M}^q-\Sigma_{M}^q\right\|_{HS}+1\\
&\leq  \lambda^{-q}C_{\kappa,q}\left\|\widehat{\Sigma}_{M}-\Sigma_{M}\right\|_{HS}+1\\
&\leq  \lambda^{-q}C_{\kappa,q}\left(\frac{2 \kappa^2}{n} + \frac{2 \kappa^2}{\sqrt{n}} \right)\log \frac{2}{\delta}+1\leq2
\end{align*}
where we used $n\geq 8C_{\kappa,r}^2 \kappa^4\lambda^{-2q} \log^2\frac{2}{\delta}$ for the last inequality.
\end{itemize}
\end{proof}


\begin{proposition}
\label{prop:effecdim2}
For any $M\geq \frac{8 p\kappa^2 \beta_\infty}{\lambda}$ we have with probability at least $1-2\delta$,
$$
\mathcal{N}_{\mathcal{L}_{M}}(\lambda)\leq  \left(1+2\log\frac{2}{\delta}\right)4\mathcal{N}_{\mathcal{L}_{\infty}}(\lambda).
$$

\end{proposition}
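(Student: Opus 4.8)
The plan is to bound $\mathcal{N}_{\mathcal{L}_M}(\lambda)=\operatorname{tr}(\mathcal{L}_M\mathcal{L}_{M,\lambda}^{-1})$ by first trading $\mathcal{L}_{M,\lambda}^{-1}$ for $\mathcal{L}_{\infty,\lambda}^{-1}$ and then controlling the trace of $\mathcal{L}_M$ against $\mathcal{L}_{\infty,\lambda}$ by a scalar Bernstein inequality. Under the assumed lower bound on $M$, Proposition~\ref{OPbound5} gives $\|\mathcal{L}_{M,\lambda}^{-1/2}\mathcal{L}_{\infty,\lambda}^{1/2}\|\le 2$ on an event of probability at least $1-\delta$, which is equivalent to the operator inequality $\mathcal{L}_{M,\lambda}^{-1}\preceq 4\,\mathcal{L}_{\infty,\lambda}^{-1}$. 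Conjugating by $\mathcal{L}_M^{1/2}$ and using cyclicity of the trace then yields, on that event,
\[
  \mathcal{N}_{\mathcal{L}_M}(\lambda)=\operatorname{tr}\!\big(\mathcal{L}_M^{1/2}\mathcal{L}_{M,\lambda}^{-1}\mathcal{L}_M^{1/2}\big)\le 4\,\operatorname{tr}\!\big(\mathcal{L}_{\infty,\lambda}^{-1}\mathcal{L}_M\big),
\]
so it suffices to show $\operatorname{tr}(\mathcal{L}_{\infty,\lambda}^{-1}\mathcal{L}_M)\le(1+2\log\tfrac2\delta)\,\mathcal{N}_{\mathcal{L}_\infty}(\lambda)$ with probability at least $1-\delta$.

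For this I would write $\mathcal{L}_M=\frac1M\sum_{m=1}^M T_{\omega_m}$, where $T_\omega:=\sum_{i=1}^p\varphi^{(i)}(\cdot,\omega)\otimes\varphi^{(i)}(\cdot,\omega)$ is an operator on $L^2(\rho_X)$ of rank at most $p$ (tensor product taken in $L^2(\rho_X)$) with $\mathbb{E}_\omega[T_\omega]=\mathcal{L}_\infty$ by Fubini. Hence $\operatorname{tr}(\mathcal{L}_{\infty,\lambda}^{-1}\mathcal{L}_M)=\frac1M\sum_{m=1}^M\zeta_m$ with i.i.d.\ scalars
\[
  \zeta_m:=\operatorname{tr}(\mathcal{L}_{\infty,\lambda}^{-1}T_{\omega_m})=\sum_{i=1}^p\big\|\mathcal{L}_{\infty,\lambda}^{-1/2}\varphi^{(i)}(\cdot,\omega_m)\big\|_{L^2(\rho_X)}^2 .
\]
Assumption~\ref{ass:kernel} ($\sum_i\|\varphi^{(i)}(x,\omega)\|_\mathcal{Y}^2\le\kappa^2$ a.s.) together with $\|\mathcal{L}_{\infty,\lambda}^{-1/2}\|\le\lambda^{-1/2}$ gives $0\le\zeta_m\le\kappa^2/\lambda$ almost surely, while $\mathbb{E}\zeta_m=\operatorname{tr}(\mathcal{L}_{\infty,\lambda}^{-1}\mathcal{L}_\infty)=\mathcal{N}_{\mathcal{L}_\infty}(\lambda)=:\mathcal{N}$ and hence $\operatorname{Var}(\zeta_m)\le\mathbb{E}\zeta_m^2\le\tfrac{\kappa^2}{\lambda}\mathcal{N}$. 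A one-sided Bernstein inequality for bounded i.i.d.\ variables then gives, with probability at least $1-\delta$,
\[
  \tfrac1M\textstyle\sum_{m}\zeta_m-\mathcal{N}\;\le\;\sqrt{\tfrac{2\kappa^2\mathcal{N}\log(1/\delta)}{M\lambda}}+\tfrac{2\kappa^2\log(1/\delta)}{3M\lambda}.
\]

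It then remains to absorb these two terms using $M\ge 8p\kappa^2\beta_\infty/\lambda$. Since $p\ge1$ and $\kappa^2\ge\|\mathcal{L}_\infty\|$ (so $\beta_\infty\ge\log(4/\delta)\ge\log(2/\delta)$), we get $\kappa^2/(M\lambda)\le(8\log(2/\delta))^{-1}$; plugging this in bounds the right-hand side above by $\tfrac12\sqrt{\mathcal{N}}+\tfrac1{12}$, and using the a~priori bound $\mathcal{N}\ge\|\mathcal{L}_\infty\|/(\|\mathcal{L}_\infty\|+\lambda)$ one checks this is at most $2\log(\tfrac2\delta)\,\mathcal{N}$, so $\operatorname{tr}(\mathcal{L}_{\infty,\lambda}^{-1}\mathcal{L}_M)\le(1+2\log\tfrac2\delta)\mathcal{N}$; combined with the factor $4$ this is the claim. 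Intersecting the Proposition~\ref{OPbound5} event with the Bernstein event and invoking Proposition~\ref{conditioning} produces the stated probability $1-2\delta$. I expect the delicate point to be precisely this last bookkeeping: the operator-norm control of $\mathcal{L}_{\infty,\lambda}^{-1/2}(\mathcal{L}_M-\mathcal{L}_\infty)\mathcal{L}_{\infty,\lambda}^{-1/2}$ (event $E_2$ of Proposition~\ref{OPbound2}) is useless here because $\operatorname{tr}I=\infty$, so one must run the concentration directly on the scalar trace functional and verify that the prescribed feature count already buys only a constant-order multiplicative overhead over $\mathcal{N}_{\mathcal{L}_\infty}(\lambda)$.
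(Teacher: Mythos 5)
Your argument is correct and reaches the same conclusion, but the concentration step is genuinely different from the paper's. The first half coincides: the paper also trades $\mathcal{L}_{M,\lambda}^{-1}$ for $\mathcal{L}_{\infty,\lambda}^{-1}$ at the cost of the factor $\|\mathcal{L}_{\infty,\lambda}^{1/2}\mathcal{L}_{M,\lambda}^{-1/2}\|^2\le 4$ from Proposition \ref{OPbound5}, which is exactly your operator inequality $\mathcal{L}_{M,\lambda}^{-1}\preceq 4\,\mathcal{L}_{\infty,\lambda}^{-1}$. For the remaining term the paper writes $\operatorname{tr}[\mathcal{L}_M\mathcal{L}_{\infty,\lambda}^{-1}]=\mathcal{N}_{\mathcal{L}_\infty}(\lambda)+\operatorname{tr}[(\mathcal{L}_M-\mathcal{L}_\infty)\mathcal{L}_{\infty,\lambda}^{-1}]$ and controls the fluctuation through $\|B\|_{HS}$ with $B=\mathcal{L}_{\infty,\lambda}^{-1/2}(\mathcal{L}_M-\mathcal{L}_\infty)\mathcal{L}_{\infty,\lambda}^{-1/2}$, invoking the Hilbert-space-valued Bernstein bound of event $E_4$ in Proposition \ref{OPbound2}; you instead apply a scalar Bernstein inequality directly to the i.i.d.\ traces $\zeta_m=\operatorname{tr}(\mathcal{L}_{\infty,\lambda}^{-1}T_{\omega_m})$ with $0\le\zeta_m\le\kappa^2/\lambda$ and $\operatorname{Var}(\zeta_m)\le\kappa^2\mathcal{N}_{\mathcal{L}_\infty}(\lambda)/\lambda$. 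Your route is arguably cleaner: the paper's identification of $\operatorname{tr}[(\mathcal{L}_M-\mathcal{L}_\infty)\mathcal{L}_{\infty,\lambda}^{-1}]$ with $\|B\|_{HS}$ is not literally an equality (the trace of a self-adjoint operator is controlled by its trace norm, not its Hilbert--Schmidt norm), whereas concentrating the scalar trace functional sidesteps this entirely, at the price of only the elementary variance computation you give. Your final bookkeeping step, absorbing $\tfrac12\sqrt{\mathcal{N}}+\tfrac1{12}$ into $2\log(\tfrac2\delta)\,\mathcal{N}$, requires $\mathcal{N}_{\mathcal{L}_\infty}(\lambda)$ to be bounded below by a constant of order one (equivalently $\lambda\lesssim\|\mathcal{L}_\infty\|$); be aware that this is an implicit non-degeneracy assumption, but the paper's own final step ($\|B\|_{HS}\le 2\mathcal{N}_{\mathcal{L}_\infty}(\lambda)\log\tfrac2\delta$) needs exactly the same thing, so this is not a gap relative to the paper. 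The event accounting ($1-\delta$ for Proposition \ref{OPbound5}, $1-\delta$ for Bernstein, combined to $1-2\delta$) matches the statement.
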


\begin{proof}
\begin{align*}
\mathcal{N}_{\mathcal{L}_{M}}(\lambda)&\leq \text{ Tr}[\mathcal{L}_M\mathcal{L}_{\infty,\lambda}^{-1}]\left\|\mathcal{L}_{\infty,\lambda}^{\frac{1}{2}}\mathcal{L}_{M,\lambda}^{-\frac{1}{2}}\right\|^2\\
&=\left(\mathcal{N}_{\mathcal{L}_\infty}+\text{ Tr}[(\mathcal{L}_M-\mathcal{L}_\infty)\mathcal{L}_{\infty,\lambda}^{-1}]\right)\left\|\mathcal{L}_{\infty,\lambda}^{\frac{1}{2}}\mathcal{L}_{M,\lambda}^{-\frac{1}{2}}\right\|^2\\
&=\left(\mathcal{N}_{\mathcal{L}_\infty}+\|B\|_{HS}\right)\left\|\mathcal{L}_{\infty,\lambda}^{\frac{1}{2}}\mathcal{L}_{M,\lambda}^{-\frac{1}{2}}\right\|^2,
\end{align*}

where $B:=\mathcal{L}_{\infty,\lambda}^{-\frac{1}{2}}(\mathcal{L}_M-\mathcal{L}_\infty)\mathcal{L}_{\infty,\lambda}^{-\frac{1}{2}}$.  Proposition \ref{OPbound2}($E_4$) we have with probability at least $1-\delta$,
$$
\|B\|_{HS}\leq   2\left(\frac{2\kappa^2}{\lambda M}+\sqrt{\frac{ \kappa^2 \mathcal{N}_{\mathcal{L}_\infty}(\lambda) }{\lambda M}}\right)\log \frac{2}{\delta}.
$$
Using $\lambda> 4\kappa^2M^{-1}$ we obtain
$$
\|B\|_{HS}\leq 2\mathcal{N}_{\mathcal{L}_\infty}(\lambda) \log \frac{2}{\delta}
$$
Further we have from Proposition \ref{OPbound5} with probability at least $1-\delta$,

$$
\left\|\mathcal{L}_{\infty,\lambda}^{\frac{1}{2}}\mathcal{L}_{M,\lambda}^{-\frac{1}{2}}\right\|^2\leq4.
$$

To sum up, we obtain 
\begin{align*}
\mathcal{N}_{\mathcal{L}_{M}}(\lambda)\leq\left(\mathcal{N}_{\mathcal{L}_\infty}+\|B\|_{HS}\right)\left\|\mathcal{L}_{\infty,\lambda}^{\frac{1}{2}}\mathcal{L}_{M,\lambda}^{-\frac{1}{2}}\right\|^2\leq \left(1+2\log\frac{2}{\delta}\right)4\mathcal{N}_{\mathcal{L}_{\infty}}(\lambda).
\end{align*}

\end{proof}

\subsection{Appendix III}
\label{III}

\begin{proposition}
\label{OPbound0}
Let $\mathcal{X}_1, \cdots, \mathcal{X}_m$ be a sequence of independently and identically distributed selfadjoint Hilbert-Schmidt operators on a separable Hilbert space. Assume that $\mathbb{E}\left[\mathcal{X}_1\right]=0$, and $\left\|\mathcal{X}_1\right\| \leq B$ almost surely for some $B>0$. Let $\mathcal{V}$ be a positive trace-class operator such that $\mathbb{E}\left[\mathcal{X}_1^2\right] \preccurlyeq \mathcal{V}$. Then with probability at least $1-\delta,(\delta \in] 0,1[)$, there holds
$$
\left\|\frac{1}{m} \sum_{i=1}^m \mathcal{X}_i\right\| \leq \frac{2 B \beta}{3 m}+\sqrt{\frac{2\|\mathcal{V}\| \beta}{m}}, \quad \beta=\log \frac{4 \operatorname{tr} \mathcal{V}}{\|\mathcal{V}\| \delta}
$$
\end{proposition}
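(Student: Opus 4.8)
The plan is to derive this as the inverted form of the dimension-free (``intrinsic dimension'') operator Bernstein inequality, so the only genuinely new work on our side is the optimization over the free parameter and the solution of the resulting quadratic in $t$. Since $\|\mathcal{X}_1\|\le B$ almost surely, no truncation step is needed, and the natural starting point is the Laplace-transform method. First I would reduce the two-sided statement to a one-sided one by writing $\|\frac1m\sum_{i=1}^m\mathcal{X}_i\|=\max\{\lambda_{\max}(\frac1m\sum_i\mathcal{X}_i),\,\lambda_{\max}(-\frac1m\sum_i\mathcal{X}_i)\}$ and treating each term identically with a union bound; this accounts for one factor of $2$ that will ultimately sit inside $\beta$.

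For a single side, writing $S_m:=\sum_{i=1}^m\mathcal{X}_i$, I would use the scalar inequality $e^{\theta x}\le 1+\theta x+h(\theta)x^2$ valid for $x\le B$, with $h(\theta)=(e^{\theta B}-1-\theta B)/B^2$, which follows from the monotonicity of $x\mapsto(e^{\theta x}-1-\theta x)/x^2$. By the spectral calculus together with $\mathbb{E}\mathcal{X}_1=0$ and $\mathbb{E}\mathcal{X}_1^2\preccurlyeq\mathcal{V}$, this upgrades to the operator bound $\log\mathbb{E}\,e^{\theta\mathcal{X}_1}\preccurlyeq h(\theta)\mathcal{V}$, and then Tropp's subadditivity of the matrix cumulant generating function (via Lieb's concavity theorem, which remains valid here for trace-class perturbations --- this is where trace-class-ness of $\mathcal{V}$ is used) gives $\mathbb{E}\operatorname{tr}e^{\theta S_m}\le\operatorname{tr}e^{m h(\theta)\mathcal{V}}$. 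Applying Markov's inequality together with Minsker's intrinsic-dimension refinement --- which replaces the trace of the exponential by $\frac{\operatorname{tr}\mathcal{V}}{\|\mathcal{V}\|}$ times the corresponding scalar exponential --- and optimizing over $\theta>0$ exactly as in the classical Bernstein argument yields a one-sided tail bound; combining the two sides gives
\begin{equation*}
\mathbb{P}\!\left(\left\|\tfrac1m\sum_{i=1}^m\mathcal{X}_i\right\|\ge t\right)\;\le\; 4\,\frac{\operatorname{tr}\mathcal{V}}{\|\mathcal{V}\|}\,\exp\!\left(-\frac{m t^2/2}{\|\mathcal{V}\|+Bt/3}\right).
\end{equation*}
Finally I would invert this: the right-hand side is at most $\delta$ as soon as $\frac{m t^2/2}{\|\mathcal{V}\|+Bt/3}\ge\beta$ with $\beta=\log\frac{4\operatorname{tr}\mathcal{V}}{\|\mathcal{V}\|\delta}$, i.e. $\frac m2 t^2-\frac{B\beta}{3}t-\beta\|\mathcal{V}\|\ge0$; it suffices to take $t$ at least the larger root of this quadratic, and bounding that root via $\sqrt{a+b}\le\sqrt a+\sqrt b$ gives $t\le\frac{2B\beta}{3m}+\sqrt{\frac{2\|\mathcal{V}\|\beta}{m}}$, which is exactly the claimed inequality.

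The main obstacle is obtaining the dimension-free prefactor: the plain Ahlswede--Winter / Tropp master bound carries the ambient dimension in front of the exponential, which is vacuous in the separable Hilbert space setting here, so one must invoke Minsker's device of replacing $\operatorname{tr}e^{\theta S_m}$ by a carefully chosen concave, operator-monotone trace functional vanishing at the identity and verify the relevant operator-convexity properties, while also checking that every operator appearing along the way is trace-class so that the traces are finite. In the write-up I would either reproduce that argument or simply cite it (S. Minsker, \emph{On some extensions of Bernstein's inequality for self-adjoint operators}, 2017; see also the analogous statement used in \cite{features}), since everything else above is routine.
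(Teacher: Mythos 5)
Your proposal is correct: the paper does not actually prove this proposition but simply cites \cite{Tropp_2011} and Lemma 26 of \cite{spectral.rates}, and your sketch is precisely the standard Laplace-transform/Lieb-subadditivity argument with Minsker's intrinsic-dimension refinement (replacing the ambient-dimension prefactor by $\operatorname{tr}\mathcal{V}/\|\mathcal{V}\|$) that underlies those references. The inversion step --- solving $\tfrac{m}{2}t^2-\tfrac{B\beta}{3}t-\beta\|\mathcal{V}\|\ge 0$ for the larger root and bounding it via $\sqrt{a+b}\le\sqrt{a}+\sqrt{b}$ --- is carried out correctly and reproduces the stated constants, so offering to cite Minsker for the dimension-free device (as you do) puts your write-up on the same footing as the paper's.
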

\begin{proof}
The proposition was first established for matrices by  \cite{Tropp_2011}. For the general case including operators the proof can for example be found in  \cite{spectral.rates} (see Lemma 26).
\end{proof}

\begin{proposition}
\label{concentrationineq0}
The following concentration result for Hilbert space valued random variables can be found in (Caponnetto and De Vito, 2007 \cite{Caponetto}).\\
\\
Let $w_{1}, \cdots, w_{n}$ be i.i.d random variables in a separable Hilbert space with norm $\|.\|$. Suppose that there are two positive constants $B$ and $\sigma^{2}$ such that
\begin{align}
\mathbb{E}\left[\left\|w_{1}-\mathbb{E}\left[w_{1}\right]\right\|^{l}\right] \leq \frac{1}{2} l ! B^{l-2} V^{2}, \quad \forall l \geq 2 \label{cons}
\end{align}
Then for any $0<\delta<1 / 2$, the following holds with probability at least $1-\delta$,
$$
\left\|\frac{1}{n} \sum_{k=1}^{n} w_{n}-\mathbb{E}\left[w_{1}\right]\right\| \leq \left(\frac{2B}{n}+\frac{2V}{\sqrt{n}}\right) \log \frac{2}{\delta} .
$$
In particular, (\ref{cons}) holds if
$$
\left\|w_{1}\right\| \leq B / 2 \quad \text { a.s., } \quad \text { and } \quad \mathbb{E}\left[\left\|w_{1}\right\|^{2}\right] \leq V^{2} .
$$
\end{proposition}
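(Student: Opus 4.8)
The plan is a Chernoff argument built on a dimension‑free exponential‑moment bound, i.e.\ the classical Bernstein inequality for Hilbert‑space‑valued random variables (Yurinskii; Pinelis--Sakhanenko). First, the ``in particular'' clause is disposed of immediately: if $\|w_1\|\le B/2$ almost surely then $\|w_1-\mathbb{E}[w_1]\|\le \|w_1\|+\mathbb{E}\|w_1\|\le B$ a.s., so for $l\ge2$
\[
\mathbb{E}\big[\|w_1-\mathbb{E}[w_1]\|^{l}\big]\le B^{l-2}\,\mathbb{E}\big[\|w_1-\mathbb{E}[w_1]\|^{2}\big]\le B^{l-2}\,\mathbb{E}\big[\|w_1\|^{2}\big]\le B^{l-2}V^{2}\le\tfrac12 l!\,B^{l-2}V^{2},
\]
using $\mathbb{E}\|X-\mathbb{E}X\|^{2}\le\mathbb{E}\|X\|^{2}$ and $l!\ge2$. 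Hence it suffices to prove the deviation bound assuming \eqref{cons}. Set $\xi_k:=w_k-\mathbb{E}[w_1]$, which are i.i.d., zero mean, with $\mathbb{E}\|\xi_k\|^{l}\le\tfrac12 l!B^{l-2}V^{2}$, and $S_n:=\sum_{k=1}^{n}\xi_k$; the target is $\mathbb{P}\big(\|S_n\|\ge n r\big)\le\delta$ for $r=(2B/n+2V/\sqrt n)\log(2/\delta)$.

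The heart of the argument is the estimate
\[
\mathbb{E}\big[\exp\!\big(s\,\|S_n\|\big)\big]\;\le\;2\exp\!\left(\frac{s^{2}\,nV^{2}}{2(1-sB)}\right),\qquad 0<s<1/B .
\]
I would derive this by the martingale/induction scheme of Yurinskii and Pinelis: writing $S_k=S_{k-1}+\xi_k$ and using the Hilbert identity $\|a+b\|^{2}=\|a\|^{2}+2\langle a,b\rangle+\|b\|^{2}$ together with the concavity bound $\sqrt{x}\le\sqrt{y}+(x-y)/(2\sqrt y)$ (applied to $\|S_{k-1}\|$ truncated away from $0$), one peels off $\xi_k$, conditions on the first $k-1$ variables, exploits $\mathbb{E}[\xi_k]=0$ to cancel the linear contribution at first order, and controls the residual conditional exponential moment through the series $\sum_{l\ge2}\frac{s^{l}}{l!}\mathbb{E}\|\xi_k\|^{l}\le \frac{s^{2}V^{2}}{2(1-sB)}$; iterating over $k$ gives the displayed bound, the factor $2$ originating from the $\cosh$/truncation step. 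Alternatively one may simply invoke Pinelis' inequality, exactly as in \cite{Caponetto}. This is the only genuinely delicate step: the norm $\|\cdot\|$ is not linear, so the estimate does not reduce to the scalar Bernstein inequality without such a device, and it is precisely here that dimension‑freeness is secured; note that routing the bounded case through the operator Bernstein bound \eqref{def.phi}\,-style tool (Proposition~\ref{OPbound0}) via a rank‑two dilation would instead introduce an effective‑dimension factor and is therefore the wrong instrument for this statement.

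With the exponential moment in hand, the rest is routine. Markov's inequality gives, for $0<s<1/B$,
\[
\mathbb{P}\big(\|S_n\|\ge n r\big)\le e^{-snr}\,\mathbb{E}\big[\exp(s\|S_n\|)\big]\le 2\exp\!\left(-snr+\frac{s^{2}nV^{2}}{2(1-sB)}\right),
\]
and the usual Bernstein optimization over $s$ yields $\mathbb{P}(\|S_n\|\ge nr)\le 2\exp\!\big(-\tfrac{nr^{2}}{2(V^{2}+Br)}\big)$, or equivalently $\mathbb{P}\big(\|S_n\|\ge \sqrt{2nV^{2}t}+Bt\big)\le 2e^{-t}$ for every $t>0$. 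Taking $t=\log(2/\delta)$ gives, with probability at least $1-\delta$, $\|S_n/n\|\le\sqrt{2V^{2}\log(2/\delta)/n}+B\log(2/\delta)/n$. Finally, since $\delta<1/2$ forces $\log(2/\delta)\ge\log4>\tfrac12$, one has $\sqrt{2\log(2/\delta)}\le2\log(2/\delta)$, hence $\sqrt{2V^{2}\log(2/\delta)/n}\le\tfrac{2V}{\sqrt n}\log\tfrac2\delta$; enlarging the coefficient of $B/n$ from $1$ to $2$ then delivers the stated bound $\|S_n/n\|\le\big(\tfrac{2B}{n}+\tfrac{2V}{\sqrt n}\big)\log\tfrac2\delta$ with probability at least $1-\delta$.
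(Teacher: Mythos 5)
The paper does not prove this proposition at all --- it is imported verbatim from Caponnetto and De Vito (2007) as a citation --- so there is no in-house argument to compare against. Your reconstruction via the Hilbert-space Bernstein/Pinelis--Sakhanenko route is the standard proof underlying the cited reference, and it is essentially correct: the reduction of the ``in particular'' clause is fine (using $\mathbb{E}\|X-\mathbb{E}X\|^2\le\mathbb{E}\|X\|^2$ and $l!\ge 2$), and you rightly identify the exponential-moment bound for $\|S_n\|$ as the one nontrivial step that cannot be reduced to the scalar case, sketching the correct mechanism rather than proving it in full --- which is acceptable here, since the paper itself offers only a citation. One small correction: the tail inversion of $2\exp\bigl(-r^2/(2(nV^2+Br))\bigr)$ does not give $\mathbb{P}\bigl(\|S_n\|\ge\sqrt{2nV^2t}+Bt\bigr)\le 2e^{-t}$ as you claim ``equivalently''; solving the quadratic gives the threshold $Bt+\sqrt{B^2t^2+2nV^2t}\le 2Bt+\sqrt{2nV^2t}$, i.e.\ the coefficient of $Bt$ is $2$, not $1$. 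This is harmless because the target bound already carries $2B/n$, so your final ``enlarge the coefficient from $1$ to $2$'' step is not an act of generosity but exactly what the correct inversion forces; the $\sqrt{2\log(2/\delta)}\le 2\log(2/\delta)$ step for $\delta<1/2$ is correct and the stated conclusion follows.
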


\begin{proposition}
\label{OPbound2}
For any $\lambda>0$ define the following events,\vspace{0.4cm}\\

$\begin{aligned}
&E_1=\left\{\left\|\Sigma_{M,\lambda}^{-\frac{1}{2}}\left(\widehat{\Sigma}_{M}-\Sigma_{M}\right) \Sigma_{M,\lambda}^{-\frac{1}{2}}\right\|\leq\frac{4 \kappa^2 \beta_M}{3n \lambda}+\sqrt{\frac{2 \kappa^2 \beta_M}{n\lambda}}\right\},  &&\beta_M=\log \frac{4 \kappa^2(\mathcal{N}_{\mathcal{L}_M}(\lambda)+1)}{\delta\|\mathcal{L}_M\|},\\[7pt]
&E_2=\left\{\left\|\mathcal{L}_{\infty,\lambda}^{-\frac{1}{2}}(\mathcal{L}_M-\mathcal{L}_\infty)\mathcal{L}_{\infty,\lambda}^{-\frac{1}{2}}\right\|\leq \frac{4 \kappa^2 \beta_\infty}{3M \lambda}+\sqrt{\frac{2 p\kappa^2 \beta_\infty}{M\lambda}}\right\}, &&\beta_\infty=\log \frac{4 \kappa^2(\mathcal{N}_{\mathcal{L}_\infty}(\lambda)+1)}{\delta\|\mathcal{L}_\infty\|}  ,\\
&E_3=\left\{\left\|\Sigma_{M,\lambda}^{-\frac{1}{2}}\left(\widehat{\Sigma}_{M}-\Sigma_{M}\right)\right\|_{HS}\leq  \left(\frac{2\kappa}{\sqrt{\lambda} n}+\sqrt{\frac{ 4\kappa^2 \mathcal{N}_{\mathcal{L}_M}(\lambda) }{ n}}\right)\log \frac{2}{\delta}\right\},\\
&E_4=\left\{\left\|\mathcal{L}_{\infty,\lambda}^{-\frac{1}{2}}(\mathcal{L}_M-\mathcal{L}_\infty)\mathcal{L}_{\infty,\lambda}^{-\frac{1}{2}}\right\|_{HS}\leq  \left(\frac{4\kappa^2}{\lambda M}+\sqrt{\frac{ 4\kappa^2 \mathcal{N}_{\mathcal{L}_\infty}(\lambda) }{\lambda M}}\right)\log \frac{2}{\delta}\right\},\\
\end{aligned}$\\
$\begin{aligned}
&E_5=\left\{\left\|\mathcal{L}_{\infty,\lambda}^{-\frac{1}{2}}\left(\mathcal{L}_M-\mathcal{L}_\infty\right)\right\| \leq \left(\frac{2 
\kappa}{\sqrt{\lambda}M}+\sqrt{\frac{4 \kappa^2 \mathcal{N}_{\mathcal{L}_\infty}(\lambda) }{M}}\right)\log \frac{2}{\delta}\right\},\\
&E_6=\left\{\left\|\mathcal{L}_\infty-\mathcal{L}_M\right\|_{H S} \leq \left(\frac{2 \kappa^2}{M} + \frac{2 \kappa^2}{\sqrt{M}} \right)\log \frac{2}{\delta}\right\}\,,\\
&E_7=\left\{\left\|\widehat{\Sigma}_{M}-\Sigma_{M}\right\|_{HS} \leq \left(\frac{2 \kappa^2}{n} + \frac{2 \kappa^2}{\sqrt{n}} \right)\log \frac{2}{\delta}\right\}\,.
\end{aligned}$\vspace{0.4cm}\\
Providing Assumption \ref{ass:input}  we have for any $\delta \in(0,1)$ that each of the above events holds true with probability at least $1-\delta$  .
\end{proposition}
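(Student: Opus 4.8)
Each of the seven statements is a non-asymptotic concentration bound for an i.i.d.\ average of bounded self-adjoint (or Hilbert--Schmidt) operators, so the plan is to derive every one of them from the operator Bernstein inequality, Proposition~\ref{OPbound0}, together with the vector Bernstein inequality, Proposition~\ref{concentrationineq0}, after identifying in each case the correct i.i.d.\ summand, its almost sure norm bound, and its variance proxy. It is convenient to split the seven events into a \emph{data--side} family $E_1,E_3,E_7$, which measure the fluctuation of $\widehat\Sigma_M$ around $\Sigma_M$ over the sample $x_1,\dots,x_n$, and a \emph{feature--side} family $E_2,E_4,E_5,E_6$, which measure the fluctuation of $\mathcal L_M$ around $\mathcal L_\infty$ over the feature parameters $\omega_1,\dots,\omega_M$. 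For the data--side family I would argue conditionally on $\omega_1,\dots,\omega_M$: then $\Sigma_M$, $\mathcal L_M$, $\|\mathcal L_M\|$ and $\mathcal N_{\mathcal L_M}(\lambda)$ are deterministic, the summands are i.i.d.\ in $x_j$, and since the resulting bound holds for every realization of the features it holds unconditionally with the stated probability.

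For the operator--norm events $E_1,E_2$ I would invoke Proposition~\ref{OPbound0}. In $E_1$ the summand is $\mathcal X_j=\Sigma_{M,\lambda}^{-1/2}(K_{M,x_j}K_{M,x_j}^{*}-\Sigma_M)\Sigma_{M,\lambda}^{-1/2}$; Assumption~\ref{ass:kernel} gives $\|K_{M,x}\|^{2}=\|K_M(x,x)\|\le\kappa^{2}$, which bounds $\|\mathcal X_j\|$ by $\kappa^{2}/\lambda+1$, while $\mathbb E[\mathcal X_j^{2}]\preccurlyeq \tfrac{\kappa^{2}}{\lambda}\,\Sigma_{M,\lambda}^{-1/2}\Sigma_M\Sigma_{M,\lambda}^{-1/2}=:\mathcal V$ by the elementary operator bound $A^{2}\preccurlyeq\|A\|\,A$. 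One then checks that $\operatorname{tr}\mathcal V/\|\mathcal V\|=\mathcal N_{\mathcal L_M}(\lambda)\,(\|\mathcal L_M\|+\lambda)/\|\mathcal L_M\|\le \kappa^{2}(\mathcal N_{\mathcal L_M}(\lambda)+1)/\|\mathcal L_M\|$, using $\lambda\,\mathcal N_{\mathcal L_M}(\lambda)\le\operatorname{tr}\mathcal L_M\le\kappa^{2}$ and $\|\mathcal L_M\|\le\kappa^{2}$, so the Bernstein parameter $\beta$ is dominated by $\beta_M$; since the bound of Proposition~\ref{OPbound0} is monotone in $\beta$, this yields exactly the inequality defining $E_1$. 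The event $E_2$ is the mirror image: the summand is $\mathcal X_m=\mathcal L_{\infty,\lambda}^{-1/2}(T_m-\mathcal L_\infty)\mathcal L_{\infty,\lambda}^{-1/2}$, where $T_m=\sum_{i=1}^{p}\psi^{(i)}_m\otimes\psi^{(i)}_m$ with $\psi^{(i)}_m=\varphi^{(i)}(\cdot,\omega_m)$ an element of $L^{2}(\mathcal X,\rho_X)$ and $\mathbb E[T_m]=\mathcal L_\infty$, and Assumption~\ref{ass:kernel} gives $\|T_m\|\le\sum_i\|\psi^{(i)}_m\|_{L^2}^{2}\le\kappa^{2}$; bounding the variance of this $p$--term sum via $(\sum_{i=1}^{p}A_i)^{2}\preccurlyeq p\sum_i A_i^{2}$ is what produces the factor $p$ under the square root in $E_2$, and $\mathcal N_{\mathcal L_\infty}(\lambda)$ enters through $\operatorname{tr}\mathcal V/\|\mathcal V\|$ exactly as in $E_1$.

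For the one--sided and Hilbert--Schmidt events I would instead apply Proposition~\ref{concentrationineq0} in the separable Hilbert space of Hilbert--Schmidt operators (for $E_5$, after bounding the operator norm by the Hilbert--Schmidt norm). For $E_6$ and $E_7$ this is immediate from $\|T_m\|_{HS}\le\operatorname{tr}T_m\le\kappa^{2}$ and $\|K_{M,x}K_{M,x}^{*}\|_{HS}\le\operatorname{tr}(K_{M,x}^{*}K_{M,x})=\operatorname{tr}K_M(x,x)\le\kappa^{2}$, which give the announced $\kappa^{2}/m$ and $\kappa^{2}/\sqrt m$ terms. For $E_3,E_4,E_5$ the extra factor $\Sigma_{M,\lambda}^{-1/2}$ (resp.\ $\mathcal L_{\infty,\lambda}^{-1/2}$) changes the almost sure bound by a factor $\lambda^{-1/2}$ and makes the degrees of freedom appear through the second moment, e.g.\ $\mathbb E\|\mathcal L_{\infty,\lambda}^{-1/2}T_m\|_{HS}^{2}=\operatorname{tr}\!\big(\mathcal L_{\infty,\lambda}^{-1}\mathbb E[T_m^{2}]\big)\le\kappa^{2}\operatorname{tr}\!\big(\mathcal L_{\infty,\lambda}^{-1}\mathcal L_\infty\big)=\kappa^{2}\mathcal N_{\mathcal L_\infty}(\lambda)$, and analogously with $M$, $\widehat\Sigma_M$ for $E_3$. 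A final conditioning/union step (Proposition~\ref{conditioning} on the data side) returns each event at confidence $1-\delta$. The exact numerical constants in each displayed bound are then read off from Propositions~\ref{OPbound0} and~\ref{concentrationineq0}.

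I expect the genuine work to be purely the bookkeeping of the variance proxies — confirming in each case that $\operatorname{tr}(\text{proxy})/\|\text{proxy}\|$ collapses to $\mathcal N_{\mathcal L_M}(\lambda)$ or $\mathcal N_{\mathcal L_\infty}(\lambda)$ with the advertised constants, and that the Bernstein $\beta$ is absorbed into $\beta_M$ or $\beta_\infty$ — together with keeping the self-adjoint, one-sided and Hilbert--Schmidt versions apart and carrying out the conditioning on the features cleanly. No single step is difficult; the statement is essentially a vector-valued, $p$--component repackaging of the random-feature concentration lemmas of \cite{features} and \cite{spectral.rates}.
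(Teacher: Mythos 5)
Your proposal is correct and follows essentially the same route as the paper: $E_1$ and $E_2$ via the operator Bernstein inequality (Proposition~\ref{OPbound0}) with exactly the summands, norm bounds, variance proxies, and the $\operatorname{tr}\mathcal V/\|\mathcal V\|$ computation the paper uses, and $E_3$--$E_7$ via the Hilbert-space Bernstein inequality (Proposition~\ref{concentrationineq0}) applied in the space of Hilbert--Schmidt operators. The only difference is that you make explicit the conditioning on $\omega_1,\dots,\omega_M$ for the data-side events $E_1,E_3,E_7$ (where $\mathcal N_{\mathcal L_M}(\lambda)$ and $\|\mathcal L_M\|$ are feature-dependent), which the paper leaves implicit; this is a harmless, arguably cleaner, presentation of the same argument.
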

\begin{proof}
The bound for $E_1$  follows exactly the same steps as in the proof of \cite{spectral.rates} (Lemma 18). The events $E_2-E_7$ have been bounded in \cite{features} ( see Proposition 6, Lemma 8 and Proposition 10). However, due to different assumptions and a different setting we attain slightly different bounds and therefore give the proof of  the events $E_2-E_7$ for completeness.\\

\textbf{$E_2)$} First note that $\mathcal{L}_M$ can be expressed by 

$$
\mathcal{L}_M=\frac{1}{M}\sum_{m=1}^M \sum_{i=1}^p \varphi_m^{(i)}\otimes\varphi_m^{(i)},
$$
where $\varphi_m(.)=\varphi(.,\omega_m)$ and the tensor product is taken with respect to the $L^2(\mathcal{X},\rho_x)$ scalar product. The above equality can be checked by simple calculations: 
\begin{align*}
\mathcal{L}_Mg(x) &= \int K_M(x,\tilde{x}) g(\tilde{x})d\rho_x(\tilde{x})  \\
&= \frac{1}{M} \sum_{m=1}^M  \sum_{i=1}^p  \int  \varphi_m^{(i)}(x)\otimes_\mathcal{Y} \varphi_m^{(i)}(\tilde{x})g(\tilde{x})d\rho_x(\tilde{x})  \\
&= \frac{1}{M} \sum_{m=1}^M  \sum_{i=1}^p   \varphi_m^{(i)}(x)  \int \left\langle\varphi_m^{(i)}(\tilde{x}),g(\tilde{x})\right\rangle_\mathcal{Y} d\rho_x(\tilde{x})  \\
&=  \frac{1}{M}\sum_{m=1}^M\sum_{i=1}^p  \left(\varphi_m^{(i)}\otimes \varphi_m^{(i)}\right)(g)(x) \\
\end{align*}

Analog we have $\mathcal{L}_{\infty}=\mathbb{E}[ \sum_{i=1}^p \varphi^{(i)}\otimes\varphi^{(i)}]$.

Now define $\mathcal{X}_m:= \mathcal{L}_{\infty,\lambda}^{-\frac{1}{2}}(\mathcal{L}_{M}^{(m)} -\mathcal{L}_{\infty})\mathcal{L}_{\infty,\lambda}^{-\frac{1}{2}}$, with $\mathcal{L}_{M}^{(m)}:=\sum_{i=1}^p\varphi_m^{(i)}\otimes\varphi_m^{(i)}$.
We now obtain
\begin{align*}
\|\mathcal{X}_1\|\leq \left\|\mathcal{L}_{\infty,\lambda}^{-\frac{1}{2}}\mathcal{L}_{M}^{(m)} \mathcal{L}_{\infty,\lambda}^{-\frac{1}{2}}\right\| + \mathbb{E}\left\|\mathcal{L}_{\infty,\lambda}^{-\frac{1}{2}}\mathcal{L}_{M}^{(m)} \mathcal{L}_{\infty,\lambda}^{-\frac{1}{2}}\right\|\leq 2\frac{\kappa^2}{\lambda}:=B,
\end{align*}
where we used for the last inequality
$$
 \left\|\mathcal{L}_{\infty,\lambda}^{-\frac{1}{2}}\mathcal{L}_{M}^{(m)} \mathcal{L}_{\infty,\lambda}^{-\frac{1}{2}}\right\|\leq \lambda^{-1} \left\|\mathcal{L}_{M}^{(m)} \right\|\leq \frac{\kappa^2}{\lambda}.
$$
For the second moment we have from Jensen-inequality

\begin{align*}
\mathbb{E}\left[\mathcal{X}^2\right] &\preccurlyeq \mathbb{E}\left[\left(\mathcal{L}_{\infty,\lambda}^{-\frac{1}{2}}\mathcal{L}_{M}^{(m)} \mathcal{L}_{\infty,\lambda}^{-\frac{1}{2}}\right)^2\right]\\
&\preccurlyeq  \mathbb{E}\left[p\sum_{i=1}^p\left(\mathcal{L}_{\infty,\lambda}^{-\frac{1}{2}}\varphi_m^{(i)}\otimes\varphi_m^{(i)} \mathcal{L}_{\infty,\lambda}^{-\frac{1}{2}}\right)^2\right]\\
&=  \mathbb{E}\left[p\sum_{i=1}^p\left\|\mathcal{L}_{\infty,\lambda}^{-\frac{1}{2}}\varphi_m^{(i)}\right\|^2_{L^2_{\rho_x}}\mathcal{L}_{\infty,\lambda}^{-\frac{1}{2}}\varphi_m^{(i)}\otimes\varphi_m^{(i)} \mathcal{L}_{\infty,\lambda}^{-\frac{1}{2}}\right]\\
&\preccurlyeq \mathbb{E}\left[p\frac{\kappa^2}{\lambda}\mathcal{L}_{\infty,\lambda}^{-\frac{1}{2}}\mathcal{L}^{(m)}_{M}\mathcal{L}_{\infty,\lambda}^{-\frac{1}{2}}\right]\\
&= \frac{p\kappa^2}{\lambda}\mathcal{L}_{\infty}\mathcal{L}_{\infty,\lambda}^{-1}:=\mathcal{V}
\end{align*}

For $\beta=\log \frac{4 \operatorname{tr} \mathcal{V}}{\|\mathcal{V}\| \delta}$ we have
\begin{align*}
\beta&=\log \frac{4 \mathcal{N}_{\mathcal{L}_\infty}(\lambda)}{\|\mathcal{L}_{\infty}\mathcal{L}_{\infty,\lambda}^{-1}\| \delta}\\
&=\log \frac{4 \mathcal{N}_{\mathcal{L}_\infty}(\lambda)(\|\mathcal{L}_\infty\|+\lambda)}{\|\mathcal{L}_\infty\| \delta}\\[7pt]
&\leq \log \frac{4 \mathcal{N}_{\mathcal{L}_\infty}(\lambda)\|\mathcal{L}_\infty\|+4\operatorname{tr}\mathcal{L}_\infty}{\|\mathcal{L}_\infty\| \delta}\leq \log \frac{4\kappa^2 (\mathcal{N}_{\mathcal{L}_\infty}(\lambda)+1)}{\|\mathcal{L}_\infty\|\delta}.
\end{align*}

The claim now follows from Proposition \ref{OPbound0}.\\

\textbf{$E_3)$} Set $w_i:= \Sigma_{M,\lambda}^{-\frac{1}{2}}\xi_i$ with $\xi_i= K_{M,x_i}K_{M,x_i}^*$. Note that $\mathbb{E}[\xi_i]=\Sigma_M$ and 
\begin{align*}
\|w_i\|_{HS}&\leq \left\|\Sigma_{M,\lambda}^{-\frac{1}{2}}K_{M,x_i}K_{M,x_i}^*\right\|_{HS}\\
&\leq \lambda^{-1/2}\left\| K_{M,x_i}K_{M,x_i}^* \right\|_{HS}^2 \\
&= \lambda^{-1/2}\left\| K_{M}(x_i,x_i) \right\|_{HS}^2 \leq\frac{\kappa^2}{\sqrt{\lambda}}=:B
\end{align*}

For the second moment we have, 

\begin{align*}
\mathbb{E}\left\|w_i\right\|^2_{HS}&\leq \kappa^2 \mathbb{E}\|\Sigma_{M,\lambda}^{-\frac{1}{2}}K_{M,x_i}K_{M,x_i}^*\Sigma_{M,\lambda}^{-\frac{1}{2}}\|_{HS} \\
&\leq \kappa^2\mathbb{E}\operatorname{tr}\left[\Sigma_{M,\lambda}^{-\frac{1}{2}}K_{M,x_i}K_{M,x_i}^*\Sigma_{M,\lambda}^{-\frac{1}{2}}\right]= \kappa^2\mathcal{N}_{\mathcal{L}_{M}}(\lambda)=:V^2
\end{align*}

The claim now follows from Proposition \ref{concentrationineq0}.

\textbf{$E_4)$}  Set $w_m:= \mathcal{L}_{\infty,\lambda}^{-\frac{1}{2}}(\mathcal{L}_{M}^{(m)} -\mathcal{L}_{\infty})\mathcal{L}_{\infty,\lambda}^{-\frac{1}{2}}$ . Note that we have
\begin{align*}
\|w_m\|_{HS}&\leq \left\|\mathcal{L}_{\infty,\lambda}^{-\frac{1}{2}}\mathcal{L}_{M}^{(m)} \mathcal{L}_{\infty,\lambda}^{-\frac{1}{2}}\right\|_{HS} + \operatorname{tr}\left[\mathcal{L}_\infty \mathcal{L}_{\infty,\lambda}^{-1}\right]\\
&\leq \left\|\mathcal{L}_{\infty,\lambda}^{-\frac{1}{2}} \left(\sum_{i=1}^p\varphi_m^{(i)}\otimes\varphi_m^{(i)}\right)\mathcal{L}_{\infty,\lambda}^{-\frac{1}{2}}\right\|_{HS} + \mathcal{N}_{\mathcal{L}_\infty}(\lambda)\\
&\leq \lambda^{-1}\sum_{i=1}^p\left\| \varphi_m^{(i)}\otimes\varphi_m^{(i)} \right\|_{HS} + \mathcal{N}_{\mathcal{L}_\infty}(\lambda)\\
&\leq \lambda^{-1}\sum_{i=1}^p\left\| \varphi_m^{(i)} \right\|_{L^2_{\rho_x}}^2 + \mathcal{N}_{\mathcal{L}_\infty}(\lambda)\leq\frac{2\kappa^2}{\lambda}=:B
\end{align*}

For the second moment we have,

\begin{align*}
\mathbb{E}\left\|w_m\right\|_{HS}^2\leq\mathbb{E}\operatorname{tr}\left[\left(\mathcal{L}_{\infty,\lambda}^{-\frac{1}{2}}\mathcal{L}_{M}^{(m)} \mathcal{L}_{\infty,\lambda}^{-\frac{1}{2}}\right)^2\right]\leq \frac{\kappa^2}{\lambda}\mathbb{E}\operatorname{tr}\left[\mathcal{L}_{\infty,\lambda}^{-\frac{1}{2}}\mathcal{L}_{M}^{(m)} \mathcal{L}_{\infty,\lambda}^{-\frac{1}{2}}\right]= \frac{\kappa^2}{\lambda}\mathcal{N}_{\mathcal{L}_{\infty}}(\lambda)=:V^2
\end{align*}
where we used $\|\mathcal{L}_{\infty,\lambda}^{-\frac{1}{2}}\mathcal{L}_{M}^{(m)} \mathcal{L}_{\infty,\lambda}^{-\frac{1}{2}}\|\leq \frac{\kappa^2}{\lambda}$ for the last inequality. 
The claim now follows from Proposition \ref{concentrationineq0}.

\textbf{$E_5)$} Set $w_m:= \mathcal{L}_{\infty,\lambda}^{-\frac{1}{2}}\mathcal{L}_{M}^{(m)}$ . Note that we have
\begin{align*}
\|w_m\|_{HS}&\leq \left\|\mathcal{L}_{\infty,\lambda}^{-\frac{1}{2}}\mathcal{L}_{M}^{(m)} \right\|_{HS} \\
&\leq \left\|\mathcal{L}_{\infty,\lambda}^{-\frac{1}{2}} \left(\sum_{i=1}^p\varphi_m^{(i)}\otimes\varphi_m^{(i)}\right)\right\|_{HS}\\
&\leq \lambda^{-1/2}\sum_{i=1}^p\left\| \varphi_m^{(i)} \right\|_{L^2_{\rho_x}}^2 \leq\frac{\kappa^2}{\sqrt{\lambda}}=:B
\end{align*}

For the second moment we have,

\begin{align*}
\mathbb{E}\left\|w_m\right\|^2_{HS}\leq \kappa^2 \mathbb{E}\|\mathcal{L}_{\infty,\lambda}^{-\frac{1}{2}}\mathcal{L}_{M}^{(m)} \mathcal{L}_{\infty,\lambda}^{-\frac{1}{2}}\|_{HS} \leq \kappa^2\mathbb{E}\operatorname{tr}\left[\mathcal{L}_{\infty,\lambda}^{-\frac{1}{2}}\mathcal{L}_{M}^{(m)} \mathcal{L}_{\infty,\lambda}^{-\frac{1}{2}}\right]= \kappa^2\mathcal{N}_{\mathcal{L}_{\infty}}(\lambda)=:V^2
\end{align*}

The claim now follows from Proposition \ref{concentrationineq0} together with the fact that the operator norm can be bounded by the Hilbert-Schmidt norm: $\|.\|\leq\|.\|_{HS}$ .

\textbf{$E_6)$} Set $w_m:= \mathcal{L}_{M}^{(m)}$ . Note that we have
\begin{align*}
\|w_m\|_{HS}&\leq \left\|\mathcal{L}_{M}^{(m)} \right\|_{HS} = \left\|\sum_{i=1}^p\varphi_m^{(i)}\otimes\varphi_m^{(i)}\right\|_{HS}\\
&\leq \sum_{i=1}^p\left\| \varphi_m^{(i)} \right\|_{L^2_{\rho_x}}^2 \leq \kappa^2=:B
\end{align*}

For the second moment we have,

\begin{align*}
\mathbb{E}\left\|w_m\right\|^2_{HS}\leq \kappa^4 =:V^2
\end{align*}

The claim now follows from Proposition \ref{concentrationineq0}

\textbf{$E_7)$}  Set $w_i:= \xi_i =K_{M,x_i}K_{M,x_i}^*$. Note that 
\begin{align*}
\|w_i\|_{HS}&= \left\|K_{M,x_i}K_{M,x_i}^*\right\|_{HS}\leq \kappa^2 =:B
\end{align*}

For the second moment we have,

\begin{align*}
\mathbb{E}\left\|w_i\right\|_{HS}^2\leq \kappa^4=:V^2
\end{align*}

The claim now follows from Proposition \ref{concentrationineq0}.
\end{proof}

\begin{proposition}
\label{concentrationineq1}
Provided Assumptions \ref{ass:input} we have that the following event holds with probability at least $1-\delta$, 
\begin{align*}
E_8= \left\{\left\|\Sigma_{M,\lambda}^{-\frac{1}{2}}\widehat{\mathcal{S}}_{M}^{*}\left(y-\bar{g}_\rho\right)\right\|_{\mathcal{H}_M} \leq \left(\frac{4QZ\kappa}{\sqrt{\lambda}n}+\frac{4Q\sqrt{\mathcal{N}_{\mathcal{L}_M}(\lambda)}}{\sqrt{n}}\right) \log \frac{2}{\delta}\right\} \,.
\end{align*}

\end{proposition}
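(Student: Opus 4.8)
The plan is to realise $\Sigma_{M,\lambda}^{-\frac12}\widehat{\mathcal{S}}_M^*(y-\bar g_\rho)$ as an empirical mean of i.i.d.\ centred $\mathcal{H}_M$-valued random variables and then invoke the Bernstein-type bound of Proposition~\ref{concentrationineq0}. First I would expand the adjoint,
\[
\widehat{\mathcal{S}}_M^*(y-\bar g_\rho)=\frac1n\sum_{j=1}^n K_{M,x_j}\big(y_j-g_\rho(x_j)\big),
\]
and set $w_j:=\Sigma_{M,\lambda}^{-\frac12}K_{M,x_j}\big(y_j-g_\rho(x_j)\big)\in\mathcal{H}_M$. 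Conditionally on the random features $\omega_1,\dots,\omega_M$ (which fix $K_M$, hence $\mathcal{S}_M,\Sigma_M,\mathcal{L}_M$ and $\mathcal{N}_{\mathcal{L}_M}(\lambda)$), the $w_j$ are i.i.d.; since $g_\rho(x)=\int y\,\rho(dy\mid x)$ is the conditional mean, $\mathbb{E}[w_1]=\Sigma_{M,\lambda}^{-\frac12}\mathbb{E}_x[K_{M,x}(g_\rho(x)-g_\rho(x))]=0$, so $\Sigma_{M,\lambda}^{-\frac12}\widehat{\mathcal{S}}_M^*(y-\bar g_\rho)=\frac1n\sum_j w_j$ is a centred sample mean and the quantity to control is $\|\frac1n\sum_j w_j\|_{\mathcal{H}_M}$.

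Next I would verify the Bernstein moment condition $\mathbb{E}\|w_1\|_{\mathcal{H}_M}^l\le\frac12 l!\,B^{\,l-2}V^2$ for all $l\ge2$. Two ingredients enter. (i) Operator bounds: Assumption~\ref{ass:kernel} gives $\|K_{M,x}\|_{\mathcal{L}(\mathcal{Y},\mathcal{H}_M)}^2=\|K_M(x,x)\|_{\mathcal{L}(\mathcal{Y})}\le\kappa^2$ almost surely, hence $\|\Sigma_{M,\lambda}^{-\frac12}K_{M,x}\|_{\mathcal{L}(\mathcal{Y},\mathcal{H}_M)}\le\kappa\lambda^{-\frac12}$, while $\mathbb{E}_x\|\Sigma_{M,\lambda}^{-\frac12}K_{M,x}\|_{HS}^2=\operatorname{tr}\!\big[\Sigma_{M,\lambda}^{-1}\mathbb{E}_x[K_{M,x}K_{M,x}^*]\big]=\operatorname{tr}\!\big[\Sigma_M(\Sigma_M+\lambda I)^{-1}\big]=\mathcal{N}_{\mathcal{L}_M}(\lambda)$, where I use that $\Sigma_M=\mathcal{S}_M^*\mathcal{S}_M$ and $\mathcal{L}_M=\mathcal{S}_M\mathcal{S}_M^*$ share the same non-zero spectrum. (ii) Noise moments: from $\|w_1\|\le\kappa\lambda^{-\frac12}\|y-g_\rho(x)\|_\mathcal{Y}$ and the Bernstein condition of Assumption~\ref{ass:input} on $\|y\|_\mathcal{Y}$, a standard centring estimate gives $\mathbb{E}[\|y-g_\rho(x)\|_\mathcal{Y}^l\mid x]\le\frac12 l!(2Z)^{l-2}(2Q)^2$ almost surely, together with the sharper $\mathbb{E}[\|y-g_\rho(x)\|_\mathcal{Y}^2\mid x]\le\mathbb{E}[\|y\|_\mathcal{Y}^2\mid x]\le Q^2$. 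For $l=2$ one keeps the full $\mathcal{N}_{\mathcal{L}_M}(\lambda)$ to obtain $\mathbb{E}\|w_1\|^2\le Q^2\mathcal{N}_{\mathcal{L}_M}(\lambda)$, and for $l\ge3$ one factors off $(\kappa\lambda^{-\frac12})^{l-2}$ and retains one factor $\mathbb{E}_x\|\Sigma_{M,\lambda}^{-\frac12}K_{M,x}\|_{HS}^2$; both estimates are then dominated by a common choice of a scale parameter of order $Z\kappa\lambda^{-\frac12}$ and a variance proxy $V^2=4Q^2\mathcal{N}_{\mathcal{L}_M}(\lambda)$, tuned so that the numerical constants recorded in $E_8$ come out.

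Finally, applying Proposition~\ref{concentrationineq0} to $(w_j)_j$ yields, with probability at least $1-\delta$,
\[
\Big\|\tfrac1n\textstyle\sum_{j=1}^n w_j\Big\|_{\mathcal{H}_M}\le\Big(\frac{2B}{n}+\frac{2V}{\sqrt n}\Big)\log\frac2\delta=\Big(\frac{4QZ\kappa}{\sqrt\lambda\,n}+\frac{4Q\sqrt{\mathcal{N}_{\mathcal{L}_M}(\lambda)}}{\sqrt n}\Big)\log\frac2\delta,
\]
which is exactly the event $E_8$. This bound is valid conditionally on every realisation of the features and its right-hand side is feature-measurable, so the probability statement also holds unconditionally by the tower property. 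I expect the one genuinely delicate point to be that $y-g_\rho(x)$ is \emph{not} bounded almost surely, so the convenient ``bounded'' corollary of Proposition~\ref{concentrationineq0} is unavailable: one must check the entire moment hierarchy $l\ge2$, which is precisely what forces the centring bound for the noise moments and the bookkeeping that keeps the $l=2$ term sharp, so that the effective dimension $\mathcal{N}_{\mathcal{L}_M}(\lambda)$ — rather than the cruder $\kappa^2/\lambda$ — survives in the variance proxy.
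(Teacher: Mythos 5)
Your proposal follows essentially the same route as the paper's proof: you set $w_j=\Sigma_{M,\lambda}^{-1/2}K_{M,x_j}(y_j-g_\rho(x_j))$, verify the Bernstein moment condition by splitting the $l$-th moment into a sup-norm factor $(\kappa/\sqrt{\lambda})^{l-2}$ times the trace term $\operatorname{tr}[\Sigma_{M,\lambda}^{-1}\Sigma_M]=\mathcal{N}_{\mathcal{L}_M}(\lambda)$ and controlling the noise moments via Assumption \ref{ass:input}, and then apply Proposition \ref{concentrationineq0} with $B=2QZ\kappa/\sqrt{\lambda}$ and $V=2Q\sqrt{\mathcal{N}_{\mathcal{L}_M}(\lambda)}$, exactly as the paper does. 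The only additions are your explicit remarks on conditioning on the features and on the unavailability of the bounded corollary, both of which are correct and consistent with the paper's (implicit) treatment.
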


\begin{proof}
We want to use Proposition \ref{concentrationineq0} to prove the statement. Therefore define 
\\$w_i:=\Sigma_{M,\lambda}^{-\frac{1}{2}}K_{M,x_i}\left(y_i-g_\rho(x_i)\right)$.  Note that $\mathbb{E}w_i= 0 $ and $\frac{1}{n}\sum_{i=1}^n w_i = \Sigma_{M,\lambda}^{-\frac{1}{2}}\widehat{\mathcal{S}}_{M}^{*}\left(y-\bar{g}_\rho\right)$.
Further we have from Assumption \ref{ass:input},
\begin{align*}
&\mathbb{E}\left[\left\|w\right\|_{\mathcal{H}_M}^{l}\right] \\
&= \int_{\mathcal{X}} \int_{\mathcal{Y}}   \left\|y-g_\rho(x)\right\|_\mathcal{Y}^l \rho(dy|x)\|\Sigma_{M,\lambda}^{-\frac{1}{2}}K_{M,x}\|^l\rho_x(dx)\\
&\leq2^{l-1} \int_{\mathcal{X}} \int_{\mathcal{Y}}   \left(\|y\|_\mathcal{Y}^l+Q^l\right)\rho(dy|x)\|\Sigma_{M,\lambda}^{-\frac{1}{2}}K_{M,x}\|^l\rho_x(dx)\\
&\leq2^{l-1}\left(\frac{1}{2} l ! Z^{l-2} Q^2+Q^l\right) \int_{\mathcal{X}}\|\Sigma_{M,\lambda}^{-\frac{1}{2}}K_{M,x}\|^l\rho_x(dx)\\
&\leq2^{l-1}\left(\frac{1}{2} l ! Z^{l-2} Q^2+Q^l\right) \sup_{x\in\mathcal{X}}\|\Sigma_{M,\lambda}^{-\frac{1}{2}}K_{M,x}\|^{l-2}\int_{\mathcal{X}}tr\left(\Sigma_{M,\lambda}^{-1}K_{M,x_i}K_{M,x_i}^*\right)\rho_x(dx)\\
&\leq2^{l-1}\left(\frac{1}{2} l ! Z^{l-2} Q^2+Q^l\right) \left(\frac{\kappa}{\sqrt{\lambda}}\right)^{l-2}tr\left(\Sigma_{M,\lambda}^{-1}\int_{\mathcal{X}}K_{M,x_i}K_{M,x_i}^*\rho_x(dx)\right)\\
&\leq\frac{1}{2} l ! \left(\frac{2QZ\kappa}{\sqrt{\lambda}}\right)^{l-2}\left(2Q\sqrt{\mathcal{N}_{\mathcal{L}_M}(\lambda)}\right)^2\\
&= \frac{1}{2} l ! B^{l-2} V^{2}.
\end{align*}

Therefore the statement follows from Proposition \ref{concentrationineq0}.
\end{proof}

\begin{proposition}
\label{concentrationineq2}
Provided the assumption $\|g_\rho\|_\infty\leq Q$ and the bound of Proposition \ref{ineq5} : $\|f^*_\lambda\|_\infty \leq  C_{\kappa,R,D} \,\lambda^{-(\frac{1}{2}-r)^+}$,  where $C_{\kappa,R,D}=2 \kappa^{2r+1} R D$. Then the following event holds with probability at least $1-\delta$, 
\begin{align*}
E_9= \left\{\left|\frac{1}{n}\left\|\bar{g}_\rho-\widehat{\mathcal{S}}_M f_\lambda^*\right\|_2^2-\left\|g_\rho-\mathcal{S}_M f_\lambda^*\right\|_{L^2(\rho_x)}^2\right| \leq  2\left(\frac{B_\lambda}{n}+\frac{V_\lambda}{\sqrt{n}}\right) \log \frac{2}{\delta}\right\},
\end{align*}
where $B_\lambda:=4\left(Q^2+ C_{\kappa,R,D}^2 \,\lambda^{-2(\frac{1}{2}-r)^+}\right)$ and $V_\lambda:=\sqrt{2}\left(Q+C_{\kappa,R,D}  \,\lambda^{-(\frac{1}{2}-r)^+}\right)\left\|g_\rho-\mathcal{S}_M f_\lambda^*\right\|_{L^2(\rho_x)} $.
\end{proposition}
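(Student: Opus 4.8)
The plan is to view the left-hand side as the fluctuation of an empirical mean of i.i.d.\ nonnegative scalars around its expectation, and to invoke the Bernstein-type bound of Proposition~\ref{concentrationineq0} with the Hilbert space taken to be $\mathbb{R}$.

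Concretely, I would set $w_i := \|g_\rho(x_i) - (\mathcal{S}_M f_\lambda^*)(x_i)\|_\mathcal{Y}^2$ for $i \in [n]$. Since $f_\lambda^* \in \mathcal{H}_M$, the reproducing property gives $(\widehat{\mathcal{S}}_M f_\lambda^*)_j = K_{M,x_j}^* f_\lambda^* = f_\lambda^*(x_j)$, so that, by the definitions of $\|\cdot\|_2$ on $\mathcal{Y}^n$, of the inclusion $\mathcal{S}_M$, and of the $L^2(\rho_x)$-norm,
\[
\frac{1}{n}\big\|\bar g_\rho - \widehat{\mathcal{S}}_M f_\lambda^*\big\|_2^2 = \frac{1}{n}\sum_{i=1}^n w_i, \qquad \mathbb{E}[w_1] = \big\|g_\rho - \mathcal{S}_M f_\lambda^*\big\|_{L^2(\rho_x)}^2 .
\]
The quantity to control is thus exactly $\big|\tfrac{1}{n}\sum_{i=1}^n w_i - \mathbb{E}[w_1]\big|$.

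It then remains to check the two hypotheses of the ``in particular'' part of Proposition~\ref{concentrationineq0}. For the almost-sure bound, the triangle inequality together with $\|g_\rho\|_\infty \le Q$ and the assumed bound $\|f_\lambda^*\|_\infty \le C_{\kappa,R,D}\,\lambda^{-(\frac{1}{2}-r)^+}$ yields, $\rho_x$-a.s.,
\[
0 \le w_1 \le \big(\|g_\rho(x_1)\|_\mathcal{Y} + \|f_\lambda^*(x_1)\|_\mathcal{Y}\big)^2 \le \big(Q + C_{\kappa,R,D}\,\lambda^{-(\frac{1}{2}-r)^+}\big)^2 \le 2\big(Q^2 + C_{\kappa,R,D}^2\,\lambda^{-2(\frac{1}{2}-r)^+}\big) = \frac{B_\lambda}{2},
\]
so one may take $B = B_\lambda$. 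For the second moment, using the same pointwise bound once more,
\[
\mathbb{E}[w_1^2] \le \big(Q + C_{\kappa,R,D}\,\lambda^{-(\frac{1}{2}-r)^+}\big)^2\,\mathbb{E}[w_1] = \big(Q + C_{\kappa,R,D}\,\lambda^{-(\frac{1}{2}-r)^+}\big)^2\,\big\|g_\rho - \mathcal{S}_M f_\lambda^*\big\|_{L^2(\rho_x)}^2 \le V_\lambda^2,
\]
so one may take $V = V_\lambda$. Feeding $B = B_\lambda$ and $V = V_\lambda$ into the conclusion of Proposition~\ref{concentrationineq0} and using $\frac{2B_\lambda}{n} + \frac{2V_\lambda}{\sqrt n} = 2\big(\frac{B_\lambda}{n} + \frac{V_\lambda}{\sqrt n}\big)$ produces the stated event $E_9$ with probability at least $1-\delta$.

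There is no genuine analytic obstacle here; the argument is a direct instantiation of the Bernstein inequality. The only points requiring a little care are the bookkeeping of constants — notably the elementary step $(Q+c)^2 \le 2(Q^2 + c^2)$ used to match the definition of $B_\lambda$ — and being explicit that the sup-norm control of $f_\lambda^*$ enters here purely as a hypothesis, so that when this proposition is later combined with the others the corresponding event of Proposition~\ref{ineq5} (which itself holds only with probability $1-\delta$) must be intersected in.
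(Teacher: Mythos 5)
Your proposal is correct and follows essentially the same route as the paper: both define $w_i := \|g_\rho(x_i) - f_\lambda^*(x_i)\|_\mathcal{Y}^2$, bound $|w_i|$ by $B_\lambda/2$ and $\mathbb{E}[w_1^2]$ by $V_\lambda^2$ using the sup-norm hypotheses, and conclude via the Bernstein-type inequality of Proposition~\ref{concentrationineq0}. Your closing remark that the sup-norm bound on $f_\lambda^*$ enters only as a hypothesis, so the event of Proposition~\ref{ineq5} must be intersected in later, matches how the paper accounts for probabilities downstream.
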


\begin{proof}

We want to use Proposition \ref{concentrationineq0} to prove the statement. Therefore define 
\\$w_i:= \left\|g_\rho(x_i)- f_\lambda^*(x_i)\right\|_\mathcal{Y}^2$.  Note that $\mathbb{E}w_1= \left\|g_\rho-\mathcal{S}_M f_\lambda^*\right\|_{L^2(\rho_x)}^2 $ and therefore
$$
\left|\frac{1}{n}\sum_{i=1}^n w_i-\mathbb{E}w_1\right|= \left|\frac{1}{n}\left\|\bar{g}_\rho-\widehat{\mathcal{S}}_M f_\lambda^*\right\|_2^2-\left\|g_\rho-\mathcal{S}_M f_\lambda^*\right\|_{L^2(\rho_x)}^2\right|
$$

It remains to bound $|w_i|$ and $\mathbb{E}w_1^2$. Using the assumption $\|g_\rho\|_\infty:=\sup_{x\in\mathcal{X}}\|g_\rho(x)\|_\mathcal{Y}\leq Q$ and Proposition \ref{ineq5} we have
\begin{align*}
|w_i|\leq 2\left(Q^2+ C_{\kappa,R,D}^2 \,\lambda^{-2(\frac{1}{2}-r)^+}\right)
\end{align*}
and further
\begin{align*}
\mathbb{E}\left[w_1\right] ^2&\leq   2\left(Q^2+ C_{\kappa,R,D}^2 \,\lambda^{-2(\frac{1}{2}-r)^+}\right)\mathbb{E}[w_1]\\&= 2\left(Q^2+C_{\kappa,R,D}^2  \,\lambda^{-2(\frac{1}{2}-r)^+}\right)\left\|g_\rho-\mathcal{S}_M f_\lambda^*\right\|_{L^2(\rho_x)}^2 
\end{align*}
Therefore the statement follows from Proposition \ref{concentrationineq0}.
\end{proof}


\end{document}